\newtheorem{theorem}{Theorem}
\newtheorem{proposition}[theorem]{Proposition}
\newtheorem{lemma}[theorem]{Lemma}
\newtheorem{corollary}[theorem]{Corollary}
\newenvironment{objdef}{\vspace{4pt}\noindent\textbf{Problem Assumption.}}{\vspace{4pt}\\}
\newtheorem{example}{Example}
\newtheorem{assumption}{Assumption}
\newcommand{\refobjdef}[1]{Problem Assumption}
\newtheorem{definition}{Definition}
\crefname{proposition}{Proposition}{Propositions}
\crefname{lemma}{Lemma}{Lemmas}
\crefname{assumption}{Assumption}{Assumptions}
\crefname{section}{section}{sections}
\crefname{subsection}{subsection}{subsections}
\Crefname{section}{Section}{Sections}
\Crefname{subsection}{Subsection}{Subsections}
\Crefname{figure}{Figure}{Figures}
\newcommand{\bsigma}{\boldsymbol{\sigma}}
\newcommand{\bd}{\mathbf{d}}
\newcommand{\br}{\mathbf{r}}
\newcommand{\bx}{\mathbf{x}}
\newcommand{\by}{\mathbf{y}}
\newcommand{\zero}{\mathbf{0}}
\newcommand{\eps}{\varepsilon}
\newcommand{\wt}{\widetilde}
\newcommand{\wh}{\widehat}
\newcommand{\R}{\mathbb{R}}
\newcommand{\srh}{\mathrm{SR}_\eps}
\newcommand{\csr}{\mathrm{SR}}
\newcommand{\rn}{\mathrm{RN}}
\newcommand{\fix}{\mathrm{fi}}
\newcommand{\sign}{\mathrm{sign}}
\newcommand{\expt}{\mathrm{E}}
\newcommand{\calc}{{\mathcal{C}}}
\newcommand{\calo}{{\mathcal{O}}}
\newcommand{\cals}{{\mathcal{S}}}
\newcommand{\calx}{{\mathcal{X}}}
\newcommand{\calw}{{\mathcal{W}}}
\newcommand{\ph}{\phantom}
\DeclareMathOperator*{\argmin}{arg\,min}
\begin{document}

\title[Convergence of GD with stochastic fixed-point rounding errors]{On the Convergence of the Gradient Descent Method with Stochastic Fixed-Point Rounding Errors under the Polyak--{\L}ojasiewicz Inequality}

\author*[1]{\fnm{Lu} \sur{Xia}}\email{l.xia1@tue.nl}

\author[2]{\fnm{Stefano} \sur{Massei}}\email{stefano.massei@unipi.it}

\author[1]{\fnm{Michiel E.} \sur{Hochstenbach}}\email{m.e.hochstenbach@tue.nl}

\affil*[1]{\orgdiv{Department of Mathematics and Computer Science}, \orgname{Eindhoven University of Technology}, \orgaddress{\city{Eindhoven}, \postcode{5600 MB}, \country{The Netherlands}}}

\affil[2]{\orgdiv{Department of Mathematics}, \orgname{Università di Pisa}, \orgaddress{ \city{Pisa}, \postcode{56127}, \country{Italy}}}

\abstract{In the training of neural networks with low-precision computation and fixed-point arithmetic, rounding errors often cause stagnation or are detrimental to the convergence of the optimizers. This study provides insights into the choice of appropriate stochastic rounding strategies to mitigate the adverse impact of roundoff errors on the convergence of the gradient descent method, for problems satisfying the Polyak--{\L}ojasiewicz inequality. Within this context, we show that a biased stochastic rounding strategy may be even beneficial in so far as it eliminates the vanishing gradient problem and forces the expected roundoff error in a descent direction. Furthermore, we obtain a bound on the convergence rate that is stricter than the one achieved by unbiased stochastic rounding. The theoretical analysis is validated by comparing the performances of various rounding strategies when optimizing several examples using low-precision fixed-point arithmetic.}

\keywords{Fixed-point arithmetic, rounding error analysis, gradient descent, low-precision, stochastic rounding,  Polyak--{\L}ojasiewicz inequality}

\pacs[Mathematics Subject Classification]{97N20, 62J02, 65G30, 65G50, 68T01}

\maketitle

\section{Introduction}\label{sec1}

\label{sec:introduction}
To reduce computing time and hardware complexity, low-precision numerical formats are becoming increasingly popular, especially in the area of machine learning. However, when training neural networks (NNs) using low-precision computation, rounding errors frequently result in stagnation or negatively affect the convergence of the optimizers. 

Both floating-point and fixed-point arithmetics are commonly used in low-precision training~\cite{wang2018training,gupta2015deep,chen2017fxpnet}. Although the floating-point format has a wider number representation, owing to the non-uniform distributed number representation, considerable research efforts focus on improving both offline training and online inference with fixed-point representation. The latter is employed in power-efficient devices such as field-programmable gate arrays (FPGAs) and dedicated application-specific integrated circuits (ASICs). These practical low-cost embedded microprocessors and microcontrollers often rely on fixed-point arithmetic for finite-precision signal processing due to the cost and complexity of floating-point hardware. Additionally, low-precision fixed-point arithmetic is gaining increasing attention in AI-powered drone development; see, e.g., \cite{palossi2021fully,muller2021funfiiber,niculescu2021improving}.

This work is concerned with the gradient descent method (GD), which is the method of choice for a large group of machine learning problems, in view of its computational efficiency and stable convergence. GD has a sublinear convergence for convex problems and it has been shown to have linear convergence for both strongly convex problems \citep{nesterov2003introductory} and problems satisfying the \emph{Polyak--{\L}ojasiewicz inequality} (PL) \citep{polyak1963gradient,lojasiewicz1963topological,karimi2016linear}. Note that strongly convex functions also satisfy the PL condition; see \cite[Appendix B]{karimi2016linear}. The PL condition has been recently analyzed by \cite{karimi2016linear} for optimization problems in machine learning, e.g., least squares and logistic regression. Also, a wide range of NNs satisfies the PL condition \citep{charles2018stability,nguyen2020global,frei2021proxy} and variants thereof \citep{liu2022loss}. 

In the context of low-precision training of NNs, the utilization of the classical \emph{round to nearest method} (RN) \cite[Sec.~2.3]{higham2002accuracy} may cause stagnation, while the employment of the unbiased stochastic rounding method, that we call \emph{stochastic rounding} ($\csr$) (see, e.g., \cite[Sec.~3]{connolly2021stochastic}), has been experimentally shown to provide training accuracy similar to single-precision computation; see, e.g., \citep{gupta2015deep,na2017chip,ortiz2018low,wang2018training}. Therefore, the choice of appropriate rounding methods plays a critical role in the training of NNs in low-precision computation. Given this, there is a growing demand to conduct a thorough error analysis that encompasses both deterministic and stochastic rounding errors throughout the entire gradient descent updating procedure. By gaining a deeper understanding of the factors contributing to the stagnation of GD in low-precision computation, we can make informed decisions in selecting an appropriate rounding method for training NNs.
\begin{figure}[htb!]
\centering
\includegraphics[width=0.38\textwidth]{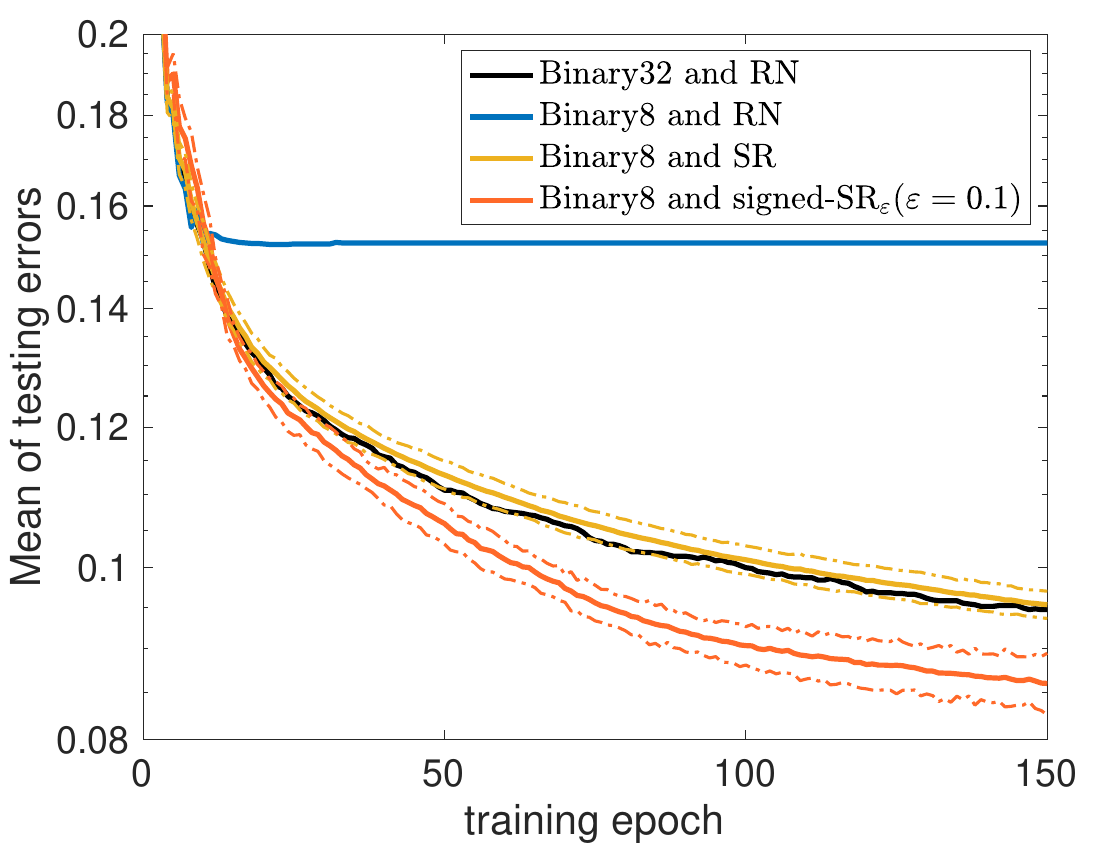}
\caption{Mean of testing errors of training a multinomial logistic regression model on the MNIST dataset over 20 simulations. The number representation systems and rounding strategies compared are: \textsf{Binary32} with RN,  \textsf{Binary8} with RN, $\csr$, and signed-$\srh$. The dashed lines indicate the estimated 95\% confidence intervals of the methods involving a stochastic rounding strategy.}\label{fig:example}
\end{figure}

{\bf{Previous work.}}  In our previous work \citep{xia2022float}, we have analyzed the influence of stochastic \emph{floating-point} roundoff error on the convergence of GD in low-precision computation for \emph{convex} problems. We have theoretically explained why $\csr$ can help prevent the stagnation of GD in the floating-point environment. Moreover, we have introduced two new stochastic rounding methods, namely \emph{$\eps$-biased stochastic rounding} ($\srh$) and \emph{signed-$\srh$} (see \cref{sec:rounding} for a summary of these rounding methods); the latter are more effective than $\csr$ in speeding up the convergence of GD when employed for rounding the outcome of the floating-point operations encountered during the execution of the algorithm.
As an example, \cref{fig:example} shows the mean value of testing errors of training a multinomial logistic regression model (MLR) using different number representation systems, and rounding methods. The application of signed-$\srh$ with an 8-bit number format (\textsf{Binary8}) \cite[Sec.~2.1]{xia2022float} dramatically accelerates the convergence of GD compared to that obtained by $\csr$ with \textsf{Binary8} and RN with single-precision computation (\textsf{Binary32}). All operations in the MLR training are performed using the specified number formats, except for the softmax evaluations, which are first evaluated in double precision and then rounded to the specified number formats.

{\bf{Contributions of the paper.}} We study the impact of the \emph{fixed-point} rounding errors on the convergence of GD for problems satisfying the PL condition. Here the stagnation of the algorithm is caused by the absolute rounding errors, hence the analysis is significantly different from the one concerning the stagnation in the floating-point case. We demonstrate that the customized sign of the rounding bias, which has been the main motivation for introducing signed-$\srh$ in \citep{xia2022float}, is not necessary in the case of fixed-point arithmetic. In particular, $\srh$ and signed-$\srh$ result in the same rounding bias when choosing the same $\eps$. We prove that, for both $\csr$ and $\srh$, the linear convergence rate of GD under the PL condition is preserved for low-precision fixed-point computations and that GD with $\srh$ satisfies a stricter convergence bound than that of GD with $\csr$. Finally, we compare the influence of fixed-point and floating-point rounding errors on the convergence of GD. The comparison illustrates that the implementation of GD using $\srh$ with floating-point arithmetic performs similarly to a gradient descent method with automatically adapted stepsizes in each coordinate of the current iterate. On the other hand, the implementation using $\srh$ with fixed-point arithmetic behaves like a combination of vanilla gradient descent and stochastic sign gradient descent methods.

{\bf{Notations.}} We denote the Euclidean norm and infinity norm by $\|\cdot\|$ and $\|\cdot\|_\infty$, respectively. We indicate by $\mathbb F \subseteq \R$ the subset of the real numbers that are exactly representable within the chosen number representation system. Specifically, given a real number $x\in \R$ and a rounding method, $\wt{x}\in\mathbb F$ is used to denote a corresponding fixed-point representation; this may be a result of a single rounding operation, or an accumulation of rounding errors (as in \eqref{eq:gd_fp}).
This quantity depends on the specific rounding procedure, and is a random variable in the case of stochastic rounding.
We denote by $L$ the Lipschitz constant for the gradient of the objective function and by $\chi$ the largest distance between the iterates generated by GD and the minimizer, i.e., $\chi:=\max_{k} \| \bx^{(k)}-\bx^*\|<\infty$, where $\bx^{(k)}\in \R^n$; in particular, we assume that the iterates of GD remain in a compact set. Furthermore, the rounding precision, i.e., the distance between two subsequent fixed-point numbers,  is indicated by $u$. For instance, in binary representation, the rounding precision in fixed-point arithmetic with one fractional point can be expressed as $u=2^{-1}$.
In the context of floating-point representations, $u$ indicates the relative approximation error due to rounding, i.e., $2^{1-m}$, where $m$ is the number of bits employed for the mantissa.
We employ $\simeq$ and $\lesssim$ to denote approximate equality and inequality with error bounded up to second and higher-order terms in $u$ and a hidden constant with at most polynomial dependence on $n$, $L$, and $\chi$.

\subsection{Problem setting}\label{sec:Problem statements}
Let us briefly recall the convergence behavior of GD for the problems with Lipschitz continuous gradients and satisfying the PL inequality. We consider an unconstrained optimization problem for a differentiable function $f$ with a non-empty solution set 
\begin{equation*}\label{eq:objec}
\calx^*:= \argmin_{\bx\in \R^n} f(\bx),
\end{equation*}
and we denote the optimal value by $f^*:=f(\mathbf x^*)$, for any $\mathbf x^*\in\mathcal X^*$, where $f$ satisfies the following assumption.

\begin{objdef}\label{objdef:plineq_nonconvex}
Let $f: \R^n \to \R$ be a differentiable function whose gradient $\nabla f: \R^n\! \to \!\R^n$ is Lipschitz continuous with constant $L>0$, that is: 
\begin{align}
\|\nabla f(\bx) -\nabla f(\by)\| \le \,& L\, \| \bx-\by\|, \quad \text{for all}~ \mathbf x, \mathbf y \in \R^n.\label{eq:lpineq}
\end{align}
In addition, $f$ satisfies the PL inequality:
\begin{equation}\label{eq:PLineq}
2 \mu \,(f(\bx)-f^*)\le \, \|\nabla f(\bx)\|^2, \quad \text{for all}~ \bx\in \R^n.
\end{equation}
\end{objdef} 

For quadratic functions $f(\bx) = \tfrac12 \,\bx^T\!A\bx$, where $A$ is symmetric positive definite, $\mu$ and $L$ can be taken as the smallest and largest eigenvalue of $A$, respectively. 
For general twice continuously differentiable functions it holds that $\mu \le L$, which we can see as follows.
For this argument, we may assume without loss of generality that $f^* = 0$ and $\zero\in \mathcal X^*$. Using the Taylor series at this point, we get
\[
\mu \ \bx^T \, \nabla^2 f(\zero) \, \bx + \calo(\|\bx\|^3) = 2 \, \mu \, f(\bx) \le \|\nabla f(\bx)\|^2 \le L^2 \, \| \bx \|^2.
\]
By taking $\bx$ in the direction of the largest eigenvalue of $\nabla^2 f(\zero)$ in modulus, we conclude $\mu \le L$. 
Usually, $\mu$ will be considerably smaller than $L$.
For the readability of our results, it turns out convenient to assume that $\mu \le \frac12 L$; we have therefore added this condition to our assumption (cf.~\cref{assum:universal assumption} in \cref{sec:universalassum}).
We point out that this is not a restriction; slightly modified results hold without this extra clause. For instance, for some results instead of requiring the updating stepsize $t \le \frac{1}{L}$, we need $t\le  \min\{ \frac{1}{L}, \frac{1}{2\,\mu}\}$. Additionally, as explained in \cite[]{karimi2016linear}, \cref{eq:PLineq} implies that every stationary point $\bx^* $ is a global minimum. Therefore, a unique globally optimal solution is not implied by our assumption. We remark that our statements hold with weaker hypotheses, for instance by requiring that \cref{eq:lpineq} and \cref{eq:PLineq} are only satisfied in a certain domain that contains all the iterates of GD.

In exact arithmetic, GD updates iteratively in the opposite direction of the gradient, given by
\begin{equation}
\bx^{(k+1)}=\bx^{(k)}-t\ \nabla f(\bx^{(k)}),
\label{eq:gd}
\end{equation}
where $t\in \mathbb R$ is the \emph{stepsize}. When optimizing problems satisfying \cref{eq:lpineq} and \cref{eq:PLineq} using GD in exact arithmetic, a linear convergence rate is proven by \cite{polyak1963gradient}. A simple version of this proof is given by \cite{karimi2016linear}. 
\begin{theorem}\cite[Thm.~1]{karimi2016linear}\label{them:plineq_nonconvex}
Let the objective function $f$ satisfy \cref{eq:lpineq} and \cref{eq:PLineq}.
In exact arithmetic, the $k$th iteration step of the gradient descent method with a fixed stepsize $t\le \frac{1}{L}$ satisfies the following inequality:
\begin{align}\label{eq:plineq_nonconvex}
f(\bx^{(k)})-f^*\le(1-t\,\mu)^k \, (f(\bx^{(0)})-f^*).
\end{align} 
\end{theorem}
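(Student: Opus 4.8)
The plan is to prove the bound by combining two ingredients—the \emph{descent lemma} coming from Lipschitz continuity of the gradient and the PL inequality \refeq{eq:PLineq}—to obtain a one-step contraction of the form $f(\bx^{(k+1)})-f^* \le (1-t\,\mu)\,(f(\bx^{(k)})-f^*)$, and then to iterate this $k$ times. Since we work in exact arithmetic, $\wt\bx^{(k)}=\bx^{(k)}$ throughout, so the tilde may be dropped.

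First I would establish the descent lemma. Writing the update $\bx^{(k+1)}=\bx^{(k)}-t\,\nabla f(\bx^{(k)})$ as in \refeq{eq:gd} and integrating $\nabla f$ along the segment joining $\bx^{(k)}$ to $\bx^{(k+1)}$, the Lipschitz bound \refeq{eq:lpineq} yields
\[
f(\bx^{(k+1)}) \le f(\bx^{(k)}) + \nabla f(\bx^{(k)})^T(\bx^{(k+1)}-\bx^{(k)}) + \tfrac{L}{2}\,\Vert\bx^{(k+1)}-\bx^{(k)}\Vert^2.
\]
Substituting the update and collecting terms turns the right-hand side into $f(\bx^{(k)}) - t\,(1-\tfrac{L}{2}\,t)\,\Vert\nabla f(\bx^{(k)})\Vert^2$.

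Next I would invoke the stepsize restriction together with the PL inequality. Because $t\le \tfrac1L$, we have $1-\tfrac{L}{2}\,t \ge \tfrac12>0$, so the coefficient $t\,(1-\tfrac{L}{2}\,t)$ is nonnegative and the gradient term is genuinely subtracted. Applying \refeq{eq:PLineq} in the form $\Vert\nabla f(\bx^{(k)})\Vert^2 \ge 2\,\mu\,(f(\bx^{(k)})-f^*)$ and subtracting $f^*$ from both sides gives
\[
f(\bx^{(k+1)})-f^* \le \bigl(1-2\,\mu\,t\,(1-\tfrac{L}{2}\,t)\bigr)\,(f(\bx^{(k)})-f^*).
\]
The final algebraic step is to verify $2\,\mu\,t\,(1-\tfrac{L}{2}\,t) \ge t\,\mu$, which is equivalent to $\mu\,t\,(1-L\,t)\ge 0$; this holds precisely because $t\le \tfrac1L$. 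Hence the per-step factor is bounded above by $1-t\,\mu$.

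Finally, a straightforward induction on $k$ (or telescoping the product of the per-step factors) yields \refeq{eq:plineq_nonconvex}. I do not anticipate a real obstacle: the argument is elementary once the descent lemma is in place. The only point requiring attention is the role of the condition $t\le\tfrac1L$, which does double duty—it ensures both that the quadratic term in the descent lemma does not overwhelm the linear decrease and that the contraction factor $1-2\,\mu\,t\,(1-\tfrac{L}{2}\,t)$ simplifies to the clean bound $1-t\,\mu$. Since $\mu\le L$ (as shown in the text preceding the theorem), $t\le\tfrac1L$ also guarantees $0\le 1-t\,\mu<1$, so the per-step factor is a genuine contraction.
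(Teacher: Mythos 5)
Your proof is correct and follows essentially the same route as the paper's cited source \cite[Thm.~1]{karimi2016linear}: the descent lemma from the Lipschitz bound \refeq{eq:lpineq}, then the PL inequality \refeq{eq:PLineq}, then iteration of the one-step contraction (the same template the paper itself adapts in \refthem{them:boundforfixed_point_PL}). The only cosmetic difference is that you keep the sharper per-step factor $1-2\,\mu\,t\,(1-\tfrac{L}{2}\,t)$ and verify algebraically that it is at most $1-t\,\mu$, whereas the standard argument first uses $t\le\tfrac1L$ to bound the coefficient $t\,(1-\tfrac{L}{2}\,t)$ below by $\tfrac{t}{2}$ and applies PL directly; both are valid and yield the identical bound.
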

Throughout the paper, we assume that there is no overflow and, for convenience, we assume that both the stepsize $t$ and the starting point $\bx^{(0)}$ are fixed-point numbers, i.e., $t=\wt t$ and $\bx^{(0)}=\widetilde{\bx}^{(0)}$. Under these assumptions, there are two sources of roundoff errors at each iteration of GD, i.e., the absolute rounding errors caused by evaluating $\nabla f(\wt\bx^{(k)})$ at $\wt\bx^{(k)}$ and multiplying the evaluated gradient $\nabla f(\wt\bx^{(k)})$ by $t$ in a limited precision; the latter are denoted by $\bsigma^{(k)}_{1}$ and $\bsigma^{(k)}_2$, respectively. Therefore, we can write the updating rule of GD with rounding errors as
\begin{align}
\wt\bx^{(k+1)} & = \wt\bx^{(k)}-t\ \nabla f(\wt\bx^{(k)})-t\,\bsigma^{(k)}_{1}-\bsigma^{(k)}_2.\label{eq:gd_fp}
\end{align}
Comparing the implementation of GD in fixed-point arithmetic \cref{eq:gd_fp} to the exact arithmetic \cref{eq:gd}, the rounding errors $\bsigma^{(k)}_{1}$ and $\bsigma^{(k)}_2$ may affect the convergence speed and accuracy depending on the magnitude and sign of the absolute rounding errors.

We analyze the influence of rounding errors on the convergence of GD concerning two aspects of the method: monotonicity and convergence rate with respect to the objective function value.
During the execution of GD, it is common to see differences in magnitudes among the various components of the gradient vector. To conduct a thorough error analysis throughout the entire updating procedure, we identify three cases depending on the relation between the magnitudes of the gradients and the rounding precision.
In Case I, the updating vectors of GD consist entirely of scaled gradient components, i.e., $t\ |\nabla f(\wt\bx^{(k)})_i+\,\sigma^{(k)}_{1,i}| \ge u$ for all $i$.
In the second case, the updating vectors of GD are on the level of the rounding errors, i.e., $t\ |\nabla f(\wt\bx^{(k)})_i+\,\sigma^{(k)}_{1,i}|<u$ for all $i$ (Case II). In the last case (Case III), the updating vectors of GD depend on both the scaled gradient components and the rounding errors, i.e., some components of $t\ |\nabla f(\wt\bx^{(k)})+\,\bsigma^{(k)}_{1}|$ satisfy Case I and others fulfill Case II.

Among the three cases, Case I is the one that most likely describes the updating procedure of GD in the initial iteration steps when the starting point is far from the optimal point. Cases II and III apply to situations when the gradient has some or all components close to the rounding precision $u$. For all three cases, we assume that each coordinate of the gradient function is assessed with a \emph{non-opposite sign} (cf.~\cref{def:monosign}) to their exact values. In particular, the non-opposite sign evaluation indicates that the component-wise multiplication of the rounded gradient and its exact value is always a non-negative vector. Therefore, the convergence of GD is proven to be guaranteed at least until each coordinate of the gradient function is close to the level of $u$, i.e., $\|\nabla f\| =\calo(\sqrt{n}u)$. We prove that in Case I, the employment of $\csr$ with low-precision computation leads to the same linear convergence bound, in expectation, as the exact computation for the problems satisfying the PL condition. For Cases II and III, the convergence bound of GD may be slightly larger than that for the exact computation. When $\srh$ is employed, we prove that a stricter convergence bound is achieved compared to $\csr$ and may be even stricter than the bound applicable to the exact computation.

\subsection{Main theoretical contributions} 
The main contribution of this study is to provide sufficient conditions to ensure linear convergence of GD with the presence of rounding errors.
For ease of presentation, we display the bound of the convergence rate for the following case. We assume that the objective function $f$ satisfying \cref{eq:lpineq} and \cref{eq:PLineq} is optimized using GD with fixed-point arithmetic; from the start to the $k_1$th iteration the updating rule satisfies Case I, from the $(k_1+1)$st to the $k_2$th iteration it satisfies Case II, and from the $(k_2+1)$st to the $k_3$th iteration it satisfies Case III. 
Note that this assumed order of cases aims at simplifying the notation; cases may have different orders and the theoretical result applies to any order. 

If both $\bsigma_1$ and $\bsigma_2$ in \cref{eq:gd_fp} are results of $\csr$ with fixed stepsize $t\le \frac{1}{4\,L}$ and under the assumption that $\theta_{j_1}>0$, where $\theta_{j_1}$ is defined in \cref{eq:case2_theta}, as shown in \cref{coro:convergencerate_c1_csr,prop:convergencerate_c2_csr,prop:convergencerate_c3_csr}, then with $\alpha_{j_2}$ as defined in \cref{eq:alpha_k} the convergence rate satisfies 
\[
 \expt\,[\,f(\wt\bx^{(k)})-f^*\,]\lesssim(1-\mu\,t)^{k_1} \prod_{j_1=k_1+1}^{k_2}(1-\mu\,t\,\theta_{j_1})  \prod_{j_2=k_2+1}^{k_3}(1-\mu\,(t+\alpha_{j_2}))(f(\bx^{(0)})-f^*),
\]
where all the factors, apart from  $f(\bx^{(0)})-f^*$, are in the interval $(0,1)$.

If $\bsigma_1$ is the result of $\csr$ and $\bsigma_2$ is obtained using $\srh$ in \cref{eq:gd_fp} with fixed stepsize $t\le \frac{1}{4\,L}$ and under the assumption that $\theta_{j_1},\theta_{j_2}>0$, see \cref{them:convergencerate_c1_srh,prop:convergencerate_c2_srh,prop:convergencerate_c3_srh}, then with $\tau_1,\tau_2$ as defined in \cref{eq:tau_1} and \cref{eq:tau_2}, respectively, the convergence rate satisfies
\begin{small}
\begin{align*}
  &\expt\,[\,f(\wt\bx^{(k)})-f^*\,] \\
 &\lesssim
(1-\mu\,(t+\tau_1))^{k_1}\!\!\prod_{j_1=k_1+1}^{k_2}\!\!\!(1-\mu\,\theta_{j_1}\,(t+\tau_2))\!\!\!\prod_{j_2=k_2+1}^{k_3}\!\!(1-\mu\,(t+\alpha_{j_2}+\theta_{j_2}\,\tau_2))(f(\bx^{(0)})-f^*),
\end{align*}
\end{small}\noindent
where all the factors, apart from $f(\bx^{(0)})-f^*$, are in the interval $(0,1)$. In particular, the presence of the (positive) quantities $\tau_1$ and $\tau_2$ suggests faster convergence of GD when $\srh$ is used instead of $\csr$ for rounding the multiplication between the gradient and the step size. Detailed discussions on $\tau_1$ and $\tau_2$ are provided in \cref{prop:convergencerate_c2_srh,prop:convergencerate_c3_srh}, respectively. We remark that instead of using a norm-wise assumption to guarantee the monotonicity of the algorithm, we employ the component-wise assumptions $\theta_{j_1}, \theta_{j_2}>0$ that are sufficient but not necessary to achieve a linear convergence rate in different cases. On the other hand, the use of a norm-wise assumption on the gradient would result in a cumbersome analysis when targeting a linear convergence rate. Additionally, in the numerical study of \cref{sec:simulation}, we show examples where $\theta_{j_1}, \theta_{j_2}$ are negative, and GD suffers from stagnation using $\csr$. 

A summary of all the theoretical results with respect to different cases is given in \cref{tab:sumlayout}. Note that we only consider the use of $\csr$ when evaluating gradients. The reason is that $\csr$ ensures an estimation of the gradient with a non-opposite sign compared to the exact arithmetic and this property is crucial for the theoretical analysis. In contrast, ensuring the latter property with $\srh$ appears difficult in our setting.

\begin{table}[htb!]
\caption{{\footnotesize Summary of the theoretical results.}}\label{tab:sumlayout}
\begin{tabular}{@{}clll@{}}
\toprule
Case &Result& Rounding strategy & Reference \\ \midrule
\multirow{3}{*}{I} & Convergence rate & General & \cref{them:boundforfixed_point_PL} \\ \rule{0pt}{2.3ex}
& Convergence rate & $\boldsymbol{\sigma_1} (\csr)$ and $\boldsymbol{\sigma_2} (\srh)$ & \cref{them:convergencerate_c1_srh} \\
 & Convergence rate & $\boldsymbol{\sigma_1} (\csr)$ and $\boldsymbol{\sigma_2} (\csr)$ & \cref{coro:convergencerate_c1_csr} \\\rule{0pt}{3.3ex}
\multirow{4}{*}{II} & Monotonicity & $\boldsymbol{\sigma_1} (\csr)$ and $\boldsymbol{\sigma_2} (\csr)$ & \cref{prop:monoto_csr} \\\rule{0pt}{2.3ex}%
& Convergence rate & $\boldsymbol{\sigma_1} (\csr)$ and $\boldsymbol{\sigma_2} (\csr)$ & \cref{prop:convergencerate_c2_csr} \\ \rule{0pt}{2.3ex}%
 & Monotonicity & $\boldsymbol{\sigma_1} (\csr)$ and $\boldsymbol{\sigma_2} (\srh)$ & \cref{prop:monoto_srh} \\ \rule{0pt}{2.3ex}%
 & Convergence rate & $\boldsymbol{\sigma_1} (\csr)$ and $\boldsymbol{\sigma_2} (\srh)$ & \cref{prop:convergencerate_c2_srh} \\ \rule{0pt}{3.3ex}
\multirow{2}{*}{III} & Convergence rate & $\boldsymbol{\sigma_1} (\csr)$ and $\boldsymbol{\sigma_2} (\csr)$ &  \cref{prop:convergencerate_c3_csr} \\ \rule{0pt}{2.3ex}%
 & Convergence rate & $\boldsymbol{\sigma_1} (\csr)$ and $\boldsymbol{\sigma_2} (\srh)$ & \cref{prop:convergencerate_c3_srh} \\\botrule
\end{tabular}
\end{table}
\subsection{Outline of the paper} \Cref{sec:fixed and rounding} reviews the basic properties of fixed-point arithmetic and the rounding rules that will be used in this paper. There, we identify the non-opposite sign property to guarantee the descent direction of rounding errors when implementing GD in low-precision computation. In \cref{sec:gdm}, the rounding errors are analyzed with respect to its sign and magnitude. Based on the magnitude of rounding errors, we categorize the updating vector of GD into the three cases that we already mentioned in \cref{sec:Problem statements}. In \cref{sec:convergence analy}, the impact of the rounding bias on the convergence of GD is studied for problems satisfying \cref{eq:lpineq} and \cref{eq:PLineq} in each of the three cases. \Cref{sec:compar} compares the influence of rounding bias on the convergence of GD in floating-point and fixed-point arithmetic. We validate our theoretical findings with several numerical simulations in \cref{sec:simulation}. Finally, conclusions are presented in \cref{sec:conclu}.

\section{Number representation and rounding schemes}\label{sec:fixed and rounding}
Let us introduce fixed-point arithmetic and review the definitions of different rounding methods: $\rn$, $\csr$, $\srh$, and signed-$\srh$. Then, we define the non-opposite sign property for the rounded values, which will be used to guarantee the descent direction in GD. 
\subsection{Fixed-point representation}\label{sec:fixedp}
Fixed-point arithmetic is an alternative way to represent numbers in low-cost embedded microprocessors and microcontrollers, where a floating-point unit is unavailable. Fixed-point numbers are mostly in base 2 (binary representation). We use the format $\mathrm{Q}[\mathrm{QI}].[\mathrm{QF}]$ to represent binary fixed-point numbers and use the two's complement\footnote{Adding a 1 to the least significant (rightmost) bit after inverting the bits of the absolute value.} for the sign, where $\mathrm{QI}$ denotes the number of integer bits and $\mathrm{QF}$ denotes the number of fractional bits \citep{oberstar2007fixed}. For instance, a Q4.6 number presents a 10-bit value with 4 integer bits and 6 fractional bits. Moreover, the rounding precision is given by $u=2^{-\mathrm{QF}}$. The fixed-point arithmetic used in the numerical experiments is implemented using Matlab's \texttt{fi} toolbox.

\subsection{Review of rounding methods}
\label{sec:rounding}
When converting a real number to a fixed-point number format, we denote by $\fix(\cdot)$ a general rounding operator that maps $x \in \R$ to $\fix(x)\in\mathbb{F}$. Throughout this study, we consider rounding schemes of the form
\begin{align}
\fix(x)= \begin{cases}
\lfloor x \rfloor, \quad \quad &\text{with probability}~p(x),\\
\lfloor x \rfloor+u, \quad  &\text{with probability}~1-p(x),
\end{cases}
\label{eq:strounding}
\end{align}
with $p(x)\in[0,1]$, and $\lfloor x \rfloor=\max\{y\in\mathbb{F}: y \le x\}$ indicating the largest representable fixed-point number less than or equal to $x$. When a specific rounding scheme is applied, $\fix(\cdot)$ and $p$ are replaced by the corresponding rounding operator and probability, respectively. Specifically, we denote $p_r$, $p_0$, $p_{\varepsilon}$, and $p_{\varepsilon s}$, the rounding probabilities of rounding to the nearest, unbiased stochastic rounding, $\eps$-biased stochastic rounding, and signed $\eps$-biased stochastic rounding, respectively.

{\bf{Round to the nearest method with half to even}} (RN) is the default rounding mode applied in IEEE 754 floating-point operations. It forces the least significant bit (LSB) to 0 in the case of a tie \cite[]{santoro1989rounding}. When we use $\rn$ as the rounding operator ($\fix=\rn$) in \cref{eq:strounding}, we have 
$p_r(x)= 1$ when $x-\lfloor x \rfloor<\tfrac{1}{2}u$ or when $x-\lfloor x \rfloor=\tfrac{1}{2}u$ and $\mathrm{LSB}$ of $\lfloor x \rfloor$ is even.

{\bf{Unbiased stochastic rounding ($\csr$)}}
provides an unbiased rounded result by setting a probability that is proportional to the distance from $x$ to the nearest representable fixed-point number. Some basic properties and the backward error analysis of $\csr$ have recently been studied by \cite{connolly2021stochastic}. When $\csr$ is employed as the rounding operator ($\fix=\csr$) in \cref{eq:strounding}, we have
\begin{align}
p_0(x)=1-(x-\lfloor x \rfloor)\,/\,u.
\label{eq:csr}
\end{align}

{\bf $\eps$-biased stochastic rounding ($\srh$)} has been introduced by \cite{xia2022float}. It provides a rounding bias with the same sign as its input by increasing or decreasing by $\eps\in(0,1)$ the probability of rounding
down in $\csr$, depending on the sign of the input number (cf.~\cref{eq:epsilon}). When we choose $\srh$ as the rounding operator ($\fix=\srh$) in \cref{eq:strounding}, we have
\begin{subequations}\label{eq:srh}
\begin{align}\label{eq:srh_p}
p_\eps(x):=\varphi(\eta(x, \eps)),
\end{align} 
with \begin{align}
&\eta(x,\eps) := 1-(x-\lfloor x \rfloor)/u-\sign(x)\,\eps,~ \quad
\quad \varphi(y)=\begin{cases}
0,&y\le 0,\\
y,&0\le y\le 1,\\
1,&y\ge 1.
\end{cases}\label{eq:epsilon}
\end{align}
\end{subequations}

{\bf Signed-$\boldsymbol{\srh}$} is a variant of $\srh$ where we can customize the sign of the rounding bias by replacing $\sign(x)$ in \cref{eq:epsilon} with the sign of the additional variable $v\in \R$, i.e.,
\begin{subequations}\label{eq:ssrh}
\begin{align}\label{eq:ssrh_p}
p_{\eps s}(x):=\varphi(\eta_{s}(x, \eps,v)),
\end{align} 
with \begin{align}
\eta_{s}(x,\eps,v) := 1-(x-\lfloor x \rfloor)/u-\sign(v)\,\eps.
\end{align} 
\end{subequations}
A detailed description of $\srh$ and signed-$\srh$, and their implementations, can be found in \cite[Sec.~2.2]{xia2022float}. We will use all four rounding methods for the convergence analysis in \cref{sec:convergence analy,sec:compar}.

\subsection{Preserving non-opposite sign of fixed-point arithmetic operations}
Under the assumption that there is no overflow, the addition and subtraction of fixed-point numbers within the same number format are performed without introducing any rounding errors. However, representing the product of two fixed-point numbers may require more digits than those used for representing the factors \cite[]{yates2009fixed}; e.g., multiplying two 8-bit values produces a 16-bit result, in general. 
In the case of multiplication, we adopt the following model to present the rounded value 
\begin{align}
\fix(\wt{x}\,\wt{y})=\wt{x}\,\wt{y}+\sigma,\qquad \text{where}~\begin{cases}| \sigma |< u &\text{for $\csr$, $\srh$ and signed-$\srh$}, \\
| \sigma | \le \frac{1}{2}u&\text{for $\rn$},
\end{cases}
\label{eq:eqmodel}
\end{align}
where $\fix\in\{\rn, \csr, \srh, \text{signed-}\srh\}$ and $\fix(\wt x\,\wt y)\in \{\lfloor \wt x\,\wt y \rfloor, \lfloor \wt x\,\wt y \rfloor+u\}$. The main difference between \cref{eq:eqmodel} and the model of floating-point computation \cite[Sec.~2.1]{higham2002accuracy} is that the rounding error $\sigma$ is ensured to be small only in an absolute sense.  
Due to the rounding properties, we always have $\sign(\fix(\wt{x}\,\wt{y}))\,\sign(\wt x\,\wt y) \ge 0$. Specifically, for a single operation, all the rounding schemes mentioned in \cref{sec:rounding} will not result in a rounded value with an opposite sign as its input.

However, this may be different when a series of operations is implemented. For instance, using $\csr$ to compute $\fix(0.24)\!-\fix(0.26)$, by first rounding the two numerical values in the binary number format Q$1.1$\! ($u\!=\!\frac1{2}$), and then evaluating their difference. We have 
 \begin{align*}
 \csr(0.24)-\csr(0.26)=\begin{cases}
 \ph{-}0.5,\quad& \text{with probability $0.2304$},\\
 \ph{-0.}0,\quad &\text{with probability $0.4992$},\\
 -0.5,\quad &\text{with probability $0.2704$}.
 \end{cases}
 \end{align*}
 Similar issues are encountered in deterministic rounding. Despite the monotonicity of $\rn$, i.e., $x\le y\in \R$ implies $\rn(x)\le \rn(y)$, when more than two rounding errors are accumulated, an opposite sign may be obtained; for instance, within the number format Q$1.1$, we have 
\[
 \sign(\rn(0.26)+\rn(-0.24)+\rn(-0.24))=-\sign(0.26-0.24-0.24).
\]
In \cref{sec:signd}, we will see that it is crucial to ensure that the rounded gradient preserves the sign of each component of the exact gradient, as this causes GD to update in a non-ascent direction. For this reason, we introduce the property of \emph{non-opposite sign} for the rounded result obtained from a series of operations within a given number format.

\begin{definition}[non-opposite sign] \label{def:monosign} 
When evaluating $g: \R^n \to \R$ in fixed-point arithmetic at $\wt \bx\in\mathbb F^n$, we say that the resulting value $\wt{g(\wt \bx)}$ has the non-opposite sign property if
\[
\sign\,(\wt{g(\wt \bx)})\ \sign\,(g(\wt \bx)) \ge 0.
\]
 \end{definition}
To facilitate further analysis, we propose a condition to guarantee the non-opposite sign property for $\rn$ and stochastic rounding methods. Denote by $\sigma_g$ the accumulated rounding error in evaluating $g(\wt \bx)$, such that $\wt{g(\wt \bx)}=g(\wt \bx)+\sigma_g$. If 
\begin{align}\label{eq:mono_allrounding}
 | g(\wt\bx)| \ge| \sigma_g|, 
\end{align} then it easily follows that $\wt{g(\wt \bx)}$ has the non-opposite sign property.

Now, we analyze what happens in expectation to the sign of stochastically rounded quantities. In the case of $\csr$, we demonstrate that the zero mean property implies the non-opposite sign in expectation when $g(\wt \bx)$ can be evaluated using a finite sequence of elementary operations.
\begin{proposition}\label{prop:exptf_csr}
Let $g: \Gamma \to \R$, with $\Gamma\subseteq \R^n$ an open set, be a function that can be evaluated with a finite sequence of the elementary operations ($+,-,*,/$), and let $\wt{g(\wt\bx)}$ be the corresponding random variable obtained by evaluating the operations using fixed-point arithmetic and $\csr$ for rounding, at the point $\wt\bx\in\Gamma\cap\mathbb F^n$. If overflow and division by zero are not encountered, then
\begin{align}
\big|\expt\,[\,\wt{g(\wt\bx)}\,] - g(\wt \bx)\big| \lesssim c\,  u^2,
\end{align}
where $c$ depends the number of multiplications and divisions performed for evaluating $g(\wt\bx)$, and may depend on the point $\wt\bx$ itself.
\end{proposition}
\begin{proof}
Let $\sigma_g$ denote  the accumulated error caused by evaluating $g$ so that $\wt{g(\wt{\bx})}=g(\wt{\bx})+\sigma_g$. If the evaluation of $g(\wt{\bx})$ requires $m$ elementary operations, then considering the Taylor expansion of the accumulated error, in accordance with \cite[(8)]{linnainmaa1976taylor}, we have 
\[
\sigma_g=\sum_{i=1}^{m} c_{m,i}\,\sigma_i+\sum_{i=1}^m\sum_{j=1}^i c_{m,ij}\,\sigma_i\,\sigma_j+\calo(u^3),
\]
where $\sigma_i$ and $\sigma_j$ indicate the error introduced by single operations and  $c_{m,i},c_{m,ij}$ are certain scalar coefficients. The zero mean independence property for $\csr$ \cite[Lemma 5.2]{connolly2021stochastic} implies that $\expt\,[\,\sigma_{i}\,]=0$ and $\expt\,[\,\sigma_{i}\,\sigma_{j}\,]=0$ for $i\ne j$.
Therefore, we achieve 
\[
\left| \expt\,[\,\sigma_g\,]\right| \le \sum_{i=1}^m | c_{m,ii}| \, \sigma_{i}^2 +\calo(u^3) \le c\,u^2+\calo(u^3),
\]
where $c$ is problem-dependent and can be obtained based on Table 1 in \citep{linnainmaa1976taylor}. Note that Table 1 in \citep{linnainmaa1976taylor} is based on floating-point arithmetic; when fixed-point arithmetic is applied, only the coefficients in multiplication and division operations are relevant. 
\end{proof}
\begin{example} \rm
For instance, when $g(\wt{\bx})=\wt{x}_1^2+\wt{x}_1\wt{x}_2$, based on the Taylor expansion and Table 1 in \citep{linnainmaa1976taylor}, the accumulated error can be expressed as $\sigma_g=2\wt{x}_1 \sigma_{\wt{x}_1}+\wt{x}_2 \sigma_{\wt{x}_1}+\wt{x}_1 \sigma_{\wt{x}_2}+\sigma_{\wt{x}_1}^2 +\calo (u^3)$, where the constant in front of $\sigma_{\wt{x}_1}^2$ is obtained from the fifth row and column in Table 1 in \citep{linnainmaa1976taylor}. Therefore, using $\csr$, the expectation of the accumulated error is bounded by $\expt \,[\,\sigma_g\,]\le u^2$.
\end{example}
Therefore, when using $\csr$, we have that $\expt\,[\,\wt{g(\wt{\bx})}\,]$ has a non-opposite sign with respect to $g(\wt{\bx})$. However, this property cannot be guaranteed by $\srh$. In the next section, we analyze the influence of rounding errors on the convergence of GD under the PL condition for different rounding strategies on the basis of \cref{eq:mono_allrounding,prop:exptf_csr}.
\section{Gradient descent method with fixed-point arithmetic}\label{sec:gdm} 
Near an optimum, the GD algorithm in finite precision may move back and forth around the minimum repeatedly; see, e.g., \cite[Fig.~9.2]{boyd2004convex}. This zigzag behavior is generally caused by the rapid change in negative gradient direction and can be mitigated by reducing the stepsize (setting a smaller $t$) or adding a momentum term; see, e.g., the heavy ball method \cite{polyak1964some} and Nesterov's accelerated method \cite[(2.2.11)]{nesterov2003introductory}. Although reducing the stepsize mitigates the zigzag behavior, when implementing GD in low precision and with $\rn$, a small stepsize may prevent GD from converging because of the loss of gradient information. This issue is known as the vanishing gradient problem and can be overcome using stochastic rounding strategies \cite[]{gupta2015deep, xia2022float}. We start our convergence analysis by studying the role of rounding errors during the updating procedure of GD for fixed-point arithmetic.

A key tool for our analysis is the \emph{update vector incorporating rounding errors} of GD (cf.~\cref{eq:gd_fp})
\[
\wt{\bd}^{(k)}:=t\ \nabla f(\wt\bx^{(k)})+t\,\bsigma^{(k)}_{1}+\bsigma^{(k)}_2,
\]
where $\bsigma^{(k)}_{1}, \bsigma^{(k)}_{2}$ depend on the two rounding schemes used for the evaluation of the gradient and the multiplication by $t$. In particular, we have: 
\begin{align} \label{eq:dk} \wt\bx^{(k+1)}&=\wt\bx^{(k)}-\wt{\bd}^{(k)}.
\end{align}
From \cref{eq:eqmodel}, we have that $\|\bsigma^{(k)}_{1}\|_\infty\le mu$, where $m$ is a positive constant depending on the gradient function and $\wt\bx$; 
$\|\bsigma^{(k)}_2 \|_\infty\le \frac{1}{2}u$ and $\|\bsigma^{(k)}_2 \|_\infty<u$ for $\rn$ and stochastic rounding, respectively. In particular, both signs of the elements and magnitude of $\wt{\bd}^{(k)}$ are crucial to understanding the convergence of GD. In the next subsection, we distinguish three cases that describe the number of components of the updating vector that have magnitudes close to the rounding errors. We explore the roles of $\bsigma_1^{(k)}$ and $\bsigma_2^{(k)}$ in determining properties of $\wt{\bd}^{(k)}$. We start by studying a condition that ensures that $\wt{\bd}^{(k)}$ has a non-descent direction.
\subsection{A condition for the non-opposite sign property} 
\label{sec:signd}
Intuitively, it is desirable to have that the entries of $\wt{\bd}^{(k)}$ have non-opposite signs to the corresponding entries of  $\nabla f(\wt\bx^{(k)} )$, as this property implies that $\bx^{(k+1)}$ updates in a non-ascent direction. Taking a closer look at \eqref{eq:gd_fp} we see that $\bsigma_2^{(k)}$, resulting from a multiplication, can not result in an opposite sign. On the other hand, in view of \cref{eq:eqmodel}, $\bsigma^{(k)}_{1}$ plays a significant role when some of the entries of the exact gradient verify $|\nabla f(\wt\bx^{(k)})_i|<| \sigma_{1,i}^{(k)}|$. 
In this case, GD may still update in a descent direction, i.e., $\nabla f (\bx^{(k)})^T\wt{\bd}^{(k)}>0$; however, we cannot guarantee the monotonicity of the objective function values in general and, often, zigzag behavior is observed in the neighborhood of the optimal points. The influence of errors in evaluating the gradient function on the convergence of stochastic or inexact gradient descent has been extensively analyzed in many studies \citep{bertsekas2000gradient,schmidt2011convergence,nguyen2018does}. It has been shown that even using exact arithmetic for the other computations, the linear or sublinear convergence rate can be proven only up to a neighborhood of a stationary point, where the majority of the gradient components are small \citep{nguyen2018does}.

In our study, we focus on the utilization of rounding methods to implement GD in finite-precision arithmetic, with an additional assumption ensuring that the evaluation of the gradient does not introduce a change in sign with respect to the exact quantity.
That is, for convenience our analysis assumes that for all iterations we have
\begin{align}\label{eq:bound_gradient} 
 |\nabla f(\wt\bx^{(k)})_i | \ge| \sigma^{(k)}_{1,i} |, \qquad i=1,\dots, n. 
\end{align} 
In this situation, GD is ensured to update in a non-ascent direction as \cref{eq:bound_gradient} implies that
\[
\sign(t\ \nabla f(\wt\bx^{(k)})_i)\,\,\sign(t\ \nabla f(\wt\bx^{(k)})_i+t\,\sigma^{(k)}_{1,i}+\sigma^{(k)}_{2,i}) \ge 0\qquad i=1,\dots, n.
\]
Note that condition \cref{eq:bound_gradient} is a very reasonable assumption, as it indicates that the rounding error in evaluating the components of the gradient is not larger than the true quantities.
We remark that assumption \cref{eq:bound_gradient} may be relaxed, by only requiring that in every iteration there is at least one component $i$ satisfying $|\nabla f(\wt\bx^{(k)})_i | > | \sigma^{(k)}_{1,i}|$, if we slightly update the GD procedure.
Namely, when we set all components of $\wt{\bd}^{(k)}$ that do not satisfy \cref{eq:bound_gradient} to zero, then the modified direction vector is still a descent direction.
We do not analyze this modified GD procedure because, in practice, it might be too conservative to stop updating coordinates as soon as they violate \cref{eq:bound_gradient}; in addition, the theoretical analysis appears more technical, although it follows similar arguments.

Together with the Lipschitz continuity property \cref{eq:lpineq}, \cref{eq:bound_gradient} implies the lower bound $\|\wt\bx^{(k)} -\bx^{*}\|\ge L^{-1} \, \|\, \bsigma_1^{(k)}\|$, which means that in the context of low precision, GD may not converge arbitrarily close to the optimal point.
In \cref{sec:Himmelblau's function}, we show an example (cf.~\cref{fig:himmelblau_func}) where GD converges to the exact optimal point when it is representable in finite precision and otherwise converges to a neighborhood of the optimal point whose size depends on $u$. 

\subsection{Recap of the general assumptions}\label{sec:universalassum}
So far, we have discussed several reasonable assumptions concerning the objective function, the number format, and the rounding operation.  To get concise statements in our convergence analysis and to make it easier for the reader to retrieve these properties, in the following, we recap the conditions assumed for all the theoretical statements in this paper.

\begin{framed}
\begin{assumption}\label{assum:universal assumption}  
\quad
\begin{enumerate}
\vspace{-2mm}
\item The objective function $f$ satisfies \cref{eq:lpineq} and \cref{eq:PLineq} with constant $0 < \mu \le \frac12 L$.
\item The evaluation of the components of $\nabla f$ satisfies the assumption of \cref{prop:exptf_csr}. 
Moreover, the parameter $c$ is bounded above by a polynomial function of $L, \chi$, and $n$.
\item For every iteration step, the absolute error $\bsigma_1^{(k)}$ caused by evaluating the gradient satisfies \cref{eq:bound_gradient}.
\item The quantities $\bx^{(0)}$ and $t$ are exactly represented in the chosen number format.
\item All the iterates of GD are in a compact space such that $\|\wt{\bx}^{(k)}-{\bx}^*\| \le \chi$. for all $k$.
\item Overflow is not encountered during computations.
\end{enumerate}
\end{assumption}
\vspace{-3mm} 
\end{framed}
Note that the conditions 1 and 5 in \cref{assum:universal assumption} imply that  $\|\nabla f(\wt{\bx}^{(k)})\|\le L\,\chi$. Overflow is not considered in this study, because the primary focus of this paper is on addressing the stagnation problem of GD, which is often caused by limited precision of fractional bits. In the recent study of NVIDIA \cite{nvidia2023}, it addresses the importance of preventing gradients from being rounded to
zero in the 16-bit representation and it also indicates that even if an overflow occurs, infrequent skipping of weight updates results in the same training accuracy as that of 32-bit training.

\subsection{The magnitude of the updating vector} 
\label{sec:magnit}
Together with the non-opposite sign property discussed in the previous subsection, the magnitude of the entries in $\wt{\mathbf{d}}^{(k)}$ plays a crucial role in analyzing the convergence properties of GD. However, it is reasonable to observe varying magnitudes among the different components of the gradient during the execution of GD. For this reason, we identify and subdivide our theoretical analysis for the following three cases.

\vspace{2mm}
\noindent\textbf{Condition of Case I.}
\vspace{-8.1mm}
\begin{equation}\label{eq:condu_nostagnation}
\qquad\qquad\ |\nabla f(\wt\bx^{(k)})_i+\sigma_{1,i}^{(k)}| \ge \frac{u}{t}, \qquad i=1,\dots, n.
\end{equation}
\ \\[-2mm]
\textbf{Condition of Case II.} 
\vspace{-8.1mm}
\begin{equation}\label{eq:condu_stagnation}
\qquad\qquad\ |\nabla f(\wt\bx^{(k)})_i+\sigma_{1,i}^{(k)}|< \frac{u}{t}, \qquad i=1,\dots, n.
\end{equation} 
\ \\[-2mm]
\textbf{Condition of Case III.} There exist disjoint non-empty subsets of indices $\calc_1, \calc_2$ such that $\calc_1\cup\calc_2=\{1,\dots,n\}$ and
\begin{align}\label{eq:cond_case3}
|\nabla f(\wt\bx^{(k)})_i+\sigma_{1,i}^{(k)}| \ge \frac{u}{t}, \quad \text{for }i\in\calc_1, \qquad  |\nabla f(\wt\bx^{(k)})_i+\sigma_{1,i}^{(k)}|< \frac{u}{t},\quad \text{for }i\in\calc_2. 
\end{align}
As discussed, Case I mostly characterizes the early stages of the updating procedure of GD. Conversely, Cases II and III come into play when, at a later stage of the process, some or all the rounded gradient components approach the rounding precision threshold $u$. For a practical demonstration, in \cref{sec:Himmelblau's function} we demonstrate the occurrence of these three cases in the GD updating process in the context of Himmelblau's function; see, e.g.,  \cref{fig:h4}.
Considering the general expression for GD's iteration \cref{eq:dk}, we can reformulate $\wt{\bd}^{(k)}$ for these three conditions.

\vspace{2mm}
\noindent\textbf{Case I.} In this case the magnitude of the $i$th component of $\wt{\bd}^{(k)}$ (cf.~\cref{eq:condu_nostagnation}) mainly depends on $\nabla f(\wt\bx^{(k)})_i$ and we informally say that \emph{the updating procedure of GD is dominated by the gradient}. Let us rewrite the updating vector as
\begin{align}\label{eq:d_c1}
\wt{\bd}^{(k)}=t\ \nabla f(\wt\bx^{(k)})\,\circ(\mathbf 1+\br^{(k)}),
\end{align}
where $\circ$ is the Hadamard (component-wise) product and $\br^{(k)}$ is a vector of the relative errors with respect to the exact quantity $t\ \nabla f(\wt\bx^{(k)})$, whose entries are given by
\begin{align}
r_i^{(k)}=\frac{t\,\sigma^{(k)}_{1,i}+\sigma^{(k)}_{2,i}}{t\ \nabla f(\wt\bx^{(k)})_i}, \qquad \text{for $i=1,\dots,n$.}\label{eq:h}
\end{align}
Equations~\cref{eq:bound_gradient} and \cref{eq:condu_nostagnation} imply bounds on the entries of $\br^{(k)}$; indeed, we have the following result and the proof is available in Appendix~\ref{ap:appendixA}. 
\begin{lemma}\label{lem:bound_h}
Under the \cref{assum:universal assumption} (cf.~\cref{eq:bound_gradient}) and the condition of Case I \cref{eq:condu_nostagnation}, for $i=1,\dots,n$, we have that $-1< r_i^{(k)}< 3$. 
\end{lemma}

\vspace{2mm}
\noindent\textbf{Case II.}
Condition \cref{eq:condu_stagnation} shows that $\bsigma_2^{(k)}$ has a crucial impact on determining the magnitude of $\wt{\bd}^{(k)}$. All the entries of the updating vector belong to the set $\{0, u, -u\}$ and, if $\rn$ is employed, it is likely that GD stagnates.
In this case, \emph{the updating procedure of GD is dominated by the rounding errors.} The updating vector is generated according to the rule
 \begin{align}\label{eq:d_c2}
 \wt{d}_i^{(k)}=\begin{cases}
 0,&\text{with probability}~p(t\ \nabla f(\bx^{(k)})_i+t\,\sigma_{1,i}^{(k)}),\\
 \mathrm{sign}(\nabla f(\bx^{(k)})_i+\sigma_{1,i}^{(k)})\,u,&\text{with probability}~1-p(t\ \nabla f(\bx^{(k)})_i+t\,\sigma_{1,i}^{(k)}).
 \end{cases}
\end{align}
Here, $p\in\{p_r,p_0,p_{\varepsilon},p_{\varepsilon s}\}$ depends on the rounding method employed, which is introduced in \cref{sec:rounding}. This updating rule is similar to the approach of the \emph{sign gradient descent method} by \cite{moulay2019properties}, except for the stochastic dependence of our scheme.

\vspace{2mm}
\noindent\textbf{Case III.}
The last case refers to situations where the components of the gradient vector have different scales. In this case, we may encounter entries on the level of rounding errors while others are close to the value of the exact partial derivatives.
We informally say that \emph{the updating procedure of GD is partially dominated by the gradient and partially dominated by the rounding errors}. More explicitly, in Case III the updating vector is generated as follows:
\begin{small}
 \begin{align}\label{eq:d_c3}
&i\in\calc_{1}\,\,\Rightarrow\,\, 
 \wt{d}_i^{(k)}=\,t\ \nabla f(\wt\bx^{(k)})_i\,(1+r_i^{(k)});\\
 &i\in\calc_2 \,\, \Rightarrow\,\, \wt{d}_i^{(k)}=\,\begin{cases}
 0,&\text{with probability}~p(t\, \nabla f(\bx^{(k)})_i+t\,\sigma_{1,i}^{(k)}),\\
 \mathrm{sign}(\nabla f(\bx^{(k)})_i+\sigma_{1,i}^{(k)})\,u,&\text{with probability}~1-p(t\, \nabla f(\bx^{(k)})_i+t\,\sigma_{1,i}^{(k)}),\nonumber
 \end{cases}
\end{align}
\end{small}\noindent
which may be viewed as a combination of \cref{eq:d_c1} and \cref{eq:d_c2}.

In the next section, we provide an analysis of the convergence of GD for all three cases,  regarding two aspects: the monotonicity and the convergence rate.

\section{Convergence analysis of GD with fixed-point arithmetic}\label{sec:convergence analy} 
Under \cref{assum:universal assumption}, when using stochastic rounding, the quantities $\wt{\bd}^{(k)}$, $\wt{\bx}^{(k)}$, and $\nabla f(\wt\bx^{(k)})$ can be viewed as random vectors obtained by GD. 
We prove that for the objective functions satisfying \cref{assum:universal assumption}, the linear convergence of GD in exact arithmetic (cf.~\cref{them:plineq_nonconvex}) can be extended to our framework \cref{eq:gd_fp}. We begin by analyzing the updating direction of rounding errors for different stochastic rounding methods. 

\subsection{The direction of stochastic rounding errors}
 Taking the expectation of $\nabla f(\wt\bx^{(k)})^T\wt{\bd}^{(k)}$ in \cref{eq:dk}, we have
\begin{align*}
 \expt\,[\,\nabla f(\wt\bx^{(k)})^T\wt{\bd}^{(k)}\,]=\,&\expt\,[\,\nabla f(\wt\bx^{(k)})^T\,(t\ \nabla f(\wt\bx^{(k)})+t\,\bsigma^{(k)}_1+\bsigma^{(k)}_2)\,]\nonumber\\
 =\,&t\,\expt\,[\, \|\,\nabla f(\wt\bx^{(k)})\, \|^2 \,]+t\,\expt\,[\,\nabla f(\wt\bx^{(k)})^T\bsigma^{(k)}_1\,]+\expt\,[\,\nabla f(\wt\bx^{(k)})^T\bsigma^{(k)}_2 \,].
\end{align*} 
We see that both $\bsigma^{(k)}_1$ and $\bsigma^{(k)}_2$ may result in a different updating direction and distance from the exact updating procedure. One may choose the stepsize $t$ to influence the term $\expt\,[\,\nabla f(\wt\bx^{(k)})^T\bsigma^{(k)}_1\,]$; however, the choice of $t$ only affects the probability of rounding up or down for $\nabla f(\wt\bx^{(k)})^T\bsigma^{(k)}_2$, so that the influence on $\expt\,[\,\nabla f(\wt\bx^{(k)})^T\bsigma^{(k)}_2 \,]$ may be mild.

Now, we study the quantity $\expt\,[\,\nabla f(\wt\bx^{(k)})^T\bsigma_1^{(k)}\,]$ for different stochastic rounding methods. We have the following result; for the proof see Appendix~\ref{ap:appendixA}.
\begin{lemma} \label{lem:csr_sigma1}
Under \cref{assum:universal assumption}, if $\csr$ is applied for evaluating $\bsigma_1$, then it holds 
\begin{align*}
\expt\,[\,\nabla f(\wt\bx^{(k)})^T \, (\nabla f(\wt\bx^{(k)})+\bsigma_{1}^{(k)})\,] \simeq \expt\,[\, \|\nabla f(\wt\bx^{(k)})\|^2 \,], 
\end{align*}
where $\simeq$ is defined in \cref{sec:introduction}.
\end{lemma}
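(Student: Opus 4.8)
The plan is to isolate the cross term and show it is negligible. Expanding the inner product gives
\[
\nabla f(\wt\bx^{(k)})^T(\nabla f(\wt\bx^{(k)})+\boldsymbol{\sigma}_1^{(k)})=\Vert\nabla f(\wt\bx^{(k)})\Vert^2+\nabla f(\wt\bx^{(k)})^T\boldsymbol{\sigma}_1^{(k)},
\]
so after taking expectations the asserted relation $\simeq$ reduces exactly to the claim that $\expt\,[\,\nabla f(\wt\bx^{(k)})^T\boldsymbol{\sigma}_1^{(k)}\,]$ is of order $u^2$, with a hidden constant depending at most polynomially on $n,L,\chi$; this is precisely the meaning of $\simeq 0$ in the notation of \refsec{sec:introduction}.

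To control this cross term I would condition on all the randomness accumulated up to step $k$, i.e.\ on the value of $\wt\bx^{(k)}$. Conditionally on $\wt\bx^{(k)}$, the exact gradient $\nabla f(\wt\bx^{(k)})$ is a deterministic quantity, whereas $\boldsymbol{\sigma}_1^{(k)}$ is the fresh rounding error produced by evaluating the gradient components at $\wt\bx^{(k)}$ with $\csr$. Hence, by linearity of conditional expectation,
\[
\expt\,[\,\nabla f(\wt\bx^{(k)})^T\boldsymbol{\sigma}_1^{(k)}\mid\wt\bx^{(k)}\,]=\sum_{i=1}^n \nabla f(\wt\bx^{(k)})_i\;\expt\,[\,\sigma_{1,i}^{(k)}\mid\wt\bx^{(k)}\,].
\]

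Each component $(\nabla f)_i$ is evaluated through a finite sequence of elementary operations, so \refprop{prop:exptf_csr} applies with $g=(\nabla f)_i$ and yields $\big\vert\expt\,[\,\sigma_{1,i}^{(k)}\mid\wt\bx^{(k)}\,]\big\vert\lesssim c\,u^2$, where, by the \refassum{assum:universal assumption} 2., the constant $c$ depends at most polynomially on $n,L,\chi$. Substituting this and bounding $\sum_{i=1}^n|\nabla f(\wt\bx^{(k)})_i|\le\sqrt{n}\,\Vert\nabla f(\wt\bx^{(k)})\Vert$ by Cauchy--Schwarz gives
\[
\big\vert\expt\,[\,\nabla f(\wt\bx^{(k)})^T\boldsymbol{\sigma}_1^{(k)}\mid\wt\bx^{(k)}\,]\big\vert\lesssim c\,u^2\,\sqrt{n}\,\Vert\nabla f(\wt\bx^{(k)})\Vert.
\]
Since the \refassum{assum:universal assumption} 1.\ and 5.\ imply $\Vert\nabla f(\wt\bx^{(k)})\Vert\le L\,\chi$, the right-hand side is a polynomial in $n,L,\chi$ times $u^2$, uniformly in $\wt\bx^{(k)}$. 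Taking the outer expectation via the tower property preserves this bound, so the cross term is $\simeq 0$ and the lemma follows.

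The main obstacle I anticipate is handling the conditioning cleanly: one must justify that $\nabla f(\wt\bx^{(k)})_i$ may legitimately be pulled out of the conditional expectation (it is deterministic given $\wt\bx^{(k)}$), and that \refprop{prop:exptf_csr} applies at the \emph{random} evaluation point $\wt\bx^{(k)}$ with a bias bound that holds conditionally and uniformly, the constant $c$ being kept polynomial in $n,L,\chi$ by the \refassum{assum:universal assumption} 2. Once this is in place, the remaining steps---the expansion, Cauchy--Schwarz, and the gradient bound $\Vert\nabla f\Vert\le L\chi$---are entirely routine.
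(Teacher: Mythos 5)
Your proof is correct and follows essentially the same route as the paper's: both isolate the cross term $\expt\,[\,\nabla f(\wt\bx^{(k)})^T\boldsymbol{\sigma}_1^{(k)}\,]$, invoke \refprop{prop:exptf_csr} together with \refassum{assum:universal assumption} 2.\ to bound the conditional bias of each $\sigma_{1,i}^{(k)}$ by $\calo(u^2)$, and then accumulate the components via $\Vert\cdot\Vert_1\le\sqrt{n}\,\Vert\cdot\Vert$ and $\Vert\nabla f(\wt\bx^{(k)})\Vert\le L\,\chi$. The only (immaterial) difference is that you condition on $\wt\bx^{(k)}$ and apply the tower property, whereas the paper conditions on the value of $\nabla f(\wt\bx^{(k)})_i$ via the law of total expectation; your formulation is, if anything, slightly cleaner in justifying why the gradient factor can be pulled out.
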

The proof of \cref{lem:csr_sigma1} shows that when $L\chi \sqrt{n}\,u^2$ is negligible, $\csr$ causes the expectation of the inner product $\nabla f(\wt\bx^{(k)})^T \, \csr(\nabla f(\wt\bx^{(k)}))$ to coincide with the corresponding quantity in exact arithmetic, i.e.,
 \[\expt\,[\,\nabla f(\wt\bx^{(k)})^T\,\csr(\nabla f(\wt\bx^{(k)}))\,]\simeq\expt\,[\, \|\,\nabla f(\wt\bx^{(k)})\, \|^2 \,].\]
Unfortunately, this is not the case for $\srh$ because it is hard to determine $\expt\,[\,\bsigma_1\,]$ when the value of $\expt\,[\,\nabla f(\wt\bx^{(k)})\,]$ is unknown. To circumvent this problem-dependent behavior of $\srh$, we only consider the use of $\csr$ to evaluate all the gradients in our analysis. 

Now, let us study the quantity $\expt\,[\,\nabla f(\wt\bx^{(k)})^T\bsigma_2^{(k)}\,]$ under the use of $\csr$ and its zero bias property. 
\begin{lemma}\label{lem:csr_sigma2}
If $\bsigma_2$ is obtained using $\csr$, then it holds $\expt\,[\,\nabla f(\wt\bx^{(k)})^T \bsigma_2^{(k)}\,]=0.$
\end{lemma}
We omit the proof of \cref{lem:csr_sigma2}, since it can be simply derived from $\expt\,[\,\sigma_{2,i}^{(k)}\,\big|\, \nabla f(\wt\bx^{(k)})_i,\sigma_{1,i}^{(k)}\,]=0$ for all $i$ and utilizing the law of total expectation (e.g., \cite[(1.14)]{biagini2016elements}). On the other hand, if we apply $\srh$ we obtain, on average, an additional contribution in an ascent direction. 
\begin{lemma}\label{lem:srh_dk}
If $\bsigma_2$ is the result of $\srh$, $\nabla f(\wt\bx^{(k)})$ is not identically zero, and \cref{eq:bound_gradient} is satisfied, then
$\expt\,[\,\nabla f(\wt\bx^{(k)})^T\bsigma_2^{(k)}\,]>0$.
\end{lemma}
The proof is available in Appendix~\ref{ap:appendixA}. In the next section, we will use the results stated here as building blocks for studying the impact of rounding errors on the convergence of GD.

\subsection{Convergence analysis of GD for the three cases} \label{sec:Convergence analysis of GD for three cases}
Let us proceed to analyze the convergence of GD in the three cases introduced in \cref{sec:magnit}. For each case, we provide conditions to ensure that the sequence generated by GD has decreasing objective function values. Moreover, we quantify the rate of convergence based on the number of steps that GD spends in each case. 
\subsubsection{Case I}\label{sec:convergence analysisStageII} 
We start by observing that the condition of Case I \cref{eq:condu_nostagnation} implies $2\,t\, |\nabla f(\wt\bx^{(k)})_i| \ge u$, $i=1,\dots,n$, which leads to 
\begin{align}\label{eq:uc1}
 \|\nabla f(\wt\bx^{(k)}) \|\ge\tfrac12 \sqrt{n}\,\frac{u}{t}.
\end{align}
In view of \cref{eq:lpineq}, \cref{eq:uc1} implies $\|\bx^{(k)} -\bx^*\|\ge\tfrac12 \,L^{-1}\sqrt{n}\,\frac{u}{t},$ where $\bx^*\in \calx^*$. Together with the inequality \cref{eq:uc1}, it tells us that GD can satisfy the condition of Case I only outside a neighborhood of $\calx^*$.  

We commence the study of the objective function values associated with the sequence generated by GD in finite precision.
Let us denote the minimum ratio between the exact entries and the rounded entries of $t\ \nabla f(\wt\bx^{(j)})$ at the $j$th iterate by 
\begin{align*}
 \gamma_j:=\min_{i=1,\dots, n} \,\, 1+r_i^{(j)},
\end{align*}
so that $\gamma_j\in (0,4)$ (see \cref{lem:bound_h}).
 Adapting the proof of \cite[Theorem 1]{karimi2016linear} to our framework, we obtain the following result for general rounding errors.
\begin{theorem}\label{them:boundforfixed_point_PL}
Under \cref{assum:universal assumption}, consider $k$ iteration steps of GD in fixed-point arithmetic with a fixed stepsize $t \le \frac{1}{4\,L}$. Suppose that the condition of Case I \cref{eq:condu_nostagnation} is satisfied throughout these $k$ iteration steps. Then, the $k$th iterate satisfies
\begin{align}
 f(\wt\bx^{(k)})-f^*\le 
\prod_{j=0}^{k-1}(1-t\,\mu\,\gamma_j)\,(f(\bx^{(0)})-f^*), \label{eq:plbound}
\end{align}
where $0< t\,\mu\,\gamma_j<\frac 12$, $j=0,\dots, k-1$.
\end{theorem}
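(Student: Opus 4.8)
The plan is to adapt the standard Polyak--Łojasiewicz descent argument behind \refthem{them:plineq_nonconvex} to the perturbed update, the only new ingredient being the componentwise relative error $\br^{(k)}$ and its uniform bounds from \reflem{lem:bound_h}. First I would invoke the Lipschitz continuity of the gradient \refeq{eq:lpineq} in its descent-lemma form, $f(\by)\le f(\bx)+\nabla f(\bx)^T(\by-\bx)+\tfrac{L}{2}\Vert\by-\bx\Vert^2$, applied with $\bx=\wt\bx^{(k)}$ and $\by=\wt\bx^{(k+1)}=\wt\bx^{(k)}-\bd^{(k)}$, where in Case I the update is $\bd^{(k)}=t\,\nabla f(\wt\bx^{(k)})\circ(\mathbf 1+\br^{(k)})$ by \refeq{eq:d_c1}.

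Writing $g_i:=\nabla f(\wt\bx^{(k)})_i$, the linear term becomes $-t\sum_i g_i^2(1+r_i^{(k)})$ and the quadratic term $\tfrac{Lt^2}{2}\sum_i g_i^2(1+r_i^{(k)})^2$. The crucial step is to control the quadratic term by the linear one: since \reflem{lem:bound_h} gives $0\le 1+r_i^{(k)}\le 4$, we have $(1+r_i^{(k)})^2\le 4\,(1+r_i^{(k)})$, so the quadratic term is at most $2Lt^2\sum_i g_i^2(1+r_i^{(k)})$. Collecting terms yields $f(\wt\bx^{(k+1)})\le f(\wt\bx^{(k)})-t(1-2Lt)\sum_i g_i^2(1+r_i^{(k)})$, and the hypothesis $t<\tfrac{1}{4L}$ guarantees $1-2Lt>\tfrac12>0$, making this coefficient favourable.

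To close the recursion I would lower-bound $\sum_i g_i^2(1+r_i^{(k)})\ge\gamma_k\,\Vert\nabla f(\wt\bx^{(k)})\Vert^2$ using the definition of $\gamma_k$ as the minimal factor, then apply the PL inequality \refeq{eq:PLineq} to get $f(\wt\bx^{(k+1)})-f^*\le(1-2\mu t(1-2Lt)\gamma_k)(f(\wt\bx^{(k)})-f^*)$. Because $1-2Lt>\tfrac12$ forces $2(1-2Lt)>1$, the contraction factor is bounded above by $1-\mu t\gamma_k$, and a straightforward induction over $j=0,\dots,k-1$ (using $\wt\bx^{(0)}=\bx^{(0)}$ from the \refassum{}) produces the claimed product. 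The two-sided estimate $0<t\mu\gamma_j<\tfrac12$ then follows: the upper bound from $\gamma_j\le 4$ together with $t<\tfrac1{4L}$ and $\mu\le\tfrac12 L$, which give $t\mu\gamma_j\le 4t\mu<\tfrac{\mu}{L}\le\tfrac12$; the strict positivity from the observation that in Case I each coordinate obeys $\vert t(\nabla f(\wt\bx^{(k)})_i+\sigma_{1,i}^{(k)})\vert\ge u$, so the rounded value $d_i^{(k)}$ is nonzero and, by the non-opposite sign property of \refsec{sec:signd}, shares the sign of $\nabla f(\wt\bx^{(k)})_i$, whence $1+r_i^{(k)}>0$ and $\gamma_j>0$.

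The main obstacle I anticipate is not the algebra but pinning down exactly where the tighter stepsize restriction $t<\tfrac1{4L}$ (rather than the classical $t\le\tfrac1L$) is needed: it is precisely what compensates for the worst-case amplification of the step by the relative errors (the factor $1+r_i^{(k)}$ reaching up to $4$) inside the Lipschitz quadratic term, and what lets the factor of $2$ from the PL inequality absorb the $(1-2Lt)$ loss so that the clean factor $1-\mu t\gamma_j$ survives. Verifying $\gamma_j>0$ rigorously, i.e.\ ruling out a vanishing update coordinate, is the one place where the Case I condition \refeq{eq:condu_nostagnation} and the non-opposite sign machinery must be used directly rather than the generic bounds of \reflem{lem:bound_h} alone.
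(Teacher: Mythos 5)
Your proposal is correct and follows essentially the same route as the paper's proof: the descent lemma applied with $\bd^{(k)}=t\,\nabla f(\wt\bx^{(k)})\circ(\mathbf 1+\br^{(k)})$, absorption of the quadratic term via $0\le 1+r_i^{(k)}\le 4$ and $t<\frac{1}{4\,L}$ (your estimate $(1+r_i^{(k)})^2\le 4\,(1+r_i^{(k)})$ is algebraically equivalent to the paper's direct bound $1-\tfrac12\,L\,t\,(1+r_i^{(k)})\ge\tfrac12$), followed by the PL inequality and unrolling the recursion. If anything, your explicit verification that $\gamma_j>0$ — using the Case I condition \refeq{eq:condu_nostagnation} to rule out a vanishing update coordinate together with the non-opposite sign property — is more careful than the paper's proof, which asserts $1-t\,\mu\,\gamma_k\in(\tfrac12,1)$ from $\mu\le\tfrac12 L$, $t<\tfrac{1}{4L}$, and $\gamma_k\le 4$ without spelling out the strict positivity.
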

\begin{proof}
The Lipschitz gradient property (cf.~\cref{eq:lpineq}) and \cref{eq:d_c1} allow us to write 
\begin{align}
f(\wt\bx^{(k+1)})\le \,&f(\wt\bx^{(k)})-\nabla f(\wt\bx^{(k)})^T\wt{\bd}^{(k)}+\tfrac{1}{2}L\, \| \wt{\bd}^{(k)}\|^2 \nonumber\\
=\,&f(\wt\bx^{(k)})-t\,\sum^n_{i=1}(1+r^{(k)}_i)\,(1-\tfrac12 \,L\,t\,(1+ r^{(k)}_i))\,(\nabla f(\wt\bx^{(k)})_i)^2.\nonumber
\end{align}
Since $t\le \frac{1}{4\,L}$ and $0< 1+r_i^{(k)}< 4$ (\cref{lem:bound_h}), we have that
\begin{align}
f(\wt\bx^{(k+1)})\le \,& f(\wt\bx^{(k)})-\tfrac12 \,t\,\sum_{i=1}^{n} (1+r^{(k)}_i)\,(\nabla f(\wt\bx^{(k)})_i)^2,\label{eq:2ndtaylor}
\end{align}
which in turn implies
\begin{align}
f(\wt\bx^{(k+1)})-f^*&\le \,
f(\wt\bx^{(k)})-f^*-\tfrac12 \,t\,\gamma_{k}\, \|\nabla f(\wt\bx^{(k)})\|^2 \nonumber\\
&\underset{\cref{eq:PLineq}}{\le}\,f(\wt\bx^{(k)})-f^*-t\,\mu\,\gamma_{k} \,(f(\wt\bx^{(k)})-f^*)\nonumber\\
&=\,(1-t\,\mu\,\gamma_{k} )\,(f(\wt\bx^{(k)})-f^*).\label{eq:fxkPL}
\end{align}
The assumptions $\mu \le \frac 12 L$, $t\le \frac{1}{4L}$, and $\gamma_k<4$, yield $1-t\,\mu\,\gamma_k\in (\frac{1}{2}, 1)$. By expanding the recursive definition of \cref{eq:fxkPL}, we obtain the required result.
\end{proof}
 Comparing \cref{them:plineq_nonconvex,them:boundforfixed_point_PL}, in the presence of rounding errors, a smaller $t$ is required to ensure the monotonicity of GD. In particular, the bound on $t$ depends on the upper bound on the relative rounding errors of $t\ \nabla f(\wt\bx^{(k)})$. In other words, a larger $r_i^{(k)}$ in \cref{eq:h} leads to a smaller bound on $t$. Moreover, when rounding errors are zeros, i.e., $\gamma_j=1$ in \cref{eq:plbound} for all $j$, \cref{them:boundforfixed_point_PL}yields the same bound on the convergence rate achieved by exact arithmetic (cf.~\cref{eq:plineq_nonconvex}). When $\gamma_j<1$, the rounding errors may result in slower convergence of GD compared to that in \cref{them:plineq_nonconvex}. Furthermore, on the basis of \cref{eq:2ndtaylor}, we see that stagnation may occur only if $r_i^{(j)}=-1$ for all $i$. Since $r_i^{(j)}>-1$ under \cref{assum:universal assumption} and \cref{eq:condu_nostagnation} (cf.~\cref{lem:bound_h}), we have that in Case I, GD does not suffer from stagnation.

Interestingly, \cref{them:boundforfixed_point_PL} also states that a faster convergence rate may be achieved when many of the $\gamma_j$s are larger than $1$ (again, comparing with \cref{eq:plineq_nonconvex}). Next, we show that we may achieve this, on average, when $\bsigma_2$ is the result of $\srh$. Before presenting the result, we introduce and comment on a quantity that is important for our analysis: 
\begin{align}\label{eq:lambda}
 \rho_k\displaystyle:= \min_{i=1,\dots,n}\frac{n\,\expt\,[\,\sigma_{2,i}^{(k)} \nabla f(\wt\bx^{(k)})_i\,]}{\expt\,[\, \|\nabla f(\wt\bx^{(k)})\|^2 \,]}.
\end{align} 
 The value of $\rho_k$ measures the minimum ratio between the expected rounding errors and the expected squared norm of the gradient at the $k$th iteration. According to Lemmas \ref{lem:csr_sigma1} and \ref{lem:csr_sigma2}, it is easy to check that when $\bsigma_2$ is obtained by the use of $\csr$, we have $\rho_j=0$ for $j=1,\dots,k$. When $\srh$ is applied, we can instead rely on the following upper bound. 
\begin{lemma}\label{lem:lambda_bound}
If $\bsigma_2$ is obtained using $\srh$, under the condition of Case I \cref{eq:condu_nostagnation}, we have the inequality $0<\rho_k\le 2\,t\,\eps$.
\end{lemma}
The proof of \cref{lem:lambda_bound} is available in the Appendix~\ref{ap:appendixA}. We are now ready to analyze the convergence rate for $\srh$. To facilitate the analysis, let us denote  the minimum value of $\rho_j$ over the iteration steps as
\begin{align}\label{eq:tau_1}
 \tau_1:=\min_{j=0,\dots,k-1} \rho_j.
\end{align}
Based on \cref{lem:lambda_bound}, we have $\tau_1\in(0, 2\,t\,\eps\,]$. 
\begin{theorem}\label{them:convergencerate_c1_srh}
Under \cref{assum:universal assumption}, after $k$ iteration steps of GD in fixed-precision arithmetic with a fixed stepsize $t$ such that $t\le \frac{1}{4\,L}$ and suppose that the condition of Case I \cref{eq:condu_nostagnation} is satisfied throughout the $k$ iteration steps. If $\bsigma_{1}$ and $\bsigma_2$ in \cref{eq:gd_fp} are obtained using $\csr$ and $\srh$, respectively, then with $\tau_1$ as in \cref{eq:tau_1}, we have 
\begin{equation}\label{eq:plexpbound}
\expt\,[\,f(\wt\bx^{(k)})-f^*\,]
\lesssim(1-\mu\,(t+\tau_1))^k\,(f(\bx^{(0)})-f^*),
\end{equation}
$\tau_1\in(0, 2\,t\,\eps\,]$ and $\mu\,(t+\tau_1)<1$, where $\lesssim$ is the notation described in \cref{sec:introduction}.
\end{theorem}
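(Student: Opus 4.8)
The plan is to follow the same Lipschitz-based descent argument used in the proof of \refthem{them:boundforfixed_point_PL}, but now take expectations and exploit the sign-bias that $\srh$ injects through $\boldsymbol{\sigma}_2$. First I would reuse the pointwise estimate obtained there: combining the Lipschitz gradient property with the reformulation \refeq{eq:d_c1}, the bounds $0\le 1+r_i^{(k)}\le 4$ from \reflem{lem:bound_h}, and $t<\frac{1}{4\,L}$ (so that $1-\tfrac12\,L\,t\,(1+r_i^{(k)})\ge\tfrac12$), every realization satisfies
\[
f(\wt\bx^{(k+1)})\le f(\wt\bx^{(k)})-\tfrac12\,t\sum_{i=1}^n(1+r_i^{(k)})\,(\nabla f(\wt\bx^{(k)})_i)^2 = f(\wt\bx^{(k)})-\tfrac12\,\nabla f(\wt\bx^{(k)})^T\bd^{(k)},
\]
where the last identity uses $\bd_i^{(k)}=t\,\nabla f(\wt\bx^{(k)})_i\,(1+r_i^{(k)})$. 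Since this holds for every outcome, taking expectations preserves it, and the problem reduces to lower-bounding $\expt\,[\,\nabla f(\wt\bx^{(k)})^T\bd^{(k)}\,]$.

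Second, I would decompose this inner product exactly as in \refeq{eq:exptdk} into the gradient term, the $\boldsymbol{\sigma}_1$ term, and the $\boldsymbol{\sigma}_2$ term. Because $\csr$ is used for $\boldsymbol{\sigma}_1$, \reflem{lem:csr_sigma1} gives $t\,\expt\,[\,\nabla f^T(\nabla f+\boldsymbol{\sigma}_1)\,]\simeq t\,\expt\,[\,\Vert\nabla f\Vert^2\,]$, which absorbs the $\boldsymbol{\sigma}_1$ contribution up to the $\calo(u^2)$ error that is precisely what forces the $\lesssim$ in the conclusion. For the $\srh$ term I would turn the definition \refeq{eq:lambda} of $\rho_k$ around: each coordinate satisfies $\expt\,[\,\sigma_{2,i}^{(k)}\,\nabla f(\wt\bx^{(k)})_i\,]\ge\frac{\rho_k}{n}\,\expt\,[\,\Vert\nabla f\Vert^2\,]$, so summing the $n$ coordinates yields $\expt\,[\,\nabla f^T\boldsymbol{\sigma}_2\,]\ge\rho_k\,\expt\,[\,\Vert\nabla f\Vert^2\,]\ge\tau_1\,\expt\,[\,\Vert\nabla f\Vert^2\,]$, where the final step uses $\tau_1=\min_j\rho_j$ from \refeq{eq:tau_1}, with positivity and the range $\tau_1\in(0,2\,t\,\eps\,]$ supplied by \reflem{lem:srh_dk} and \reflem{lem:lambda_bound}. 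Combining the two pieces gives $\expt\,[\,\nabla f^T\bd^{(k)}\,]\gtrsim(t+\tau_1)\,\expt\,[\,\Vert\nabla f\Vert^2\,]$.

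Third, I would close the recursion. Substituting this inner-product bound into the expected descent inequality and applying the PL inequality \refeq{eq:PLineq} in expectation, namely $\expt\,[\,\Vert\nabla f(\wt\bx^{(k)})\Vert^2\,]\ge 2\,\mu\,\expt\,[\,f(\wt\bx^{(k)})-f^*\,]$, produces the one-step contraction
\[
\expt\,[\,f(\wt\bx^{(k+1)})-f^*\,]\lesssim(1-\mu\,(t+\tau_1))\,\expt\,[\,f(\wt\bx^{(k)})-f^*\,];
\]
here the factor $\tfrac12$ from the descent step and the factor $2\,\mu$ from PL multiply to give exactly $\mu\,(t+\tau_1)$, which is why the target rate carries no spurious constant. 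Iterating over the $k$ steps, each per-step factor lying in $(0,1)$ because $\mu\,(t+\tau_1)<1$, delivers the claimed bound. The main obstacle is not any single estimate but the careful bookkeeping of two different stochastic-rounding operators acting at once: I must ensure that the $\simeq$ coming from the $\csr$ analysis of $\boldsymbol{\sigma}_1$ and the one-sided inequality coming from the $\srh$ analysis of $\boldsymbol{\sigma}_2$ are combined in a consistent direction, and that the coordinate-wise minimum hidden in the definition of $\rho_k$ is handled so that the beneficial bias $\tau_1$ genuinely survives the summation over coordinates rather than being diluted.
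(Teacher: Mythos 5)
Your proposal is correct and takes essentially the same route as the paper's proof: your pointwise bound $f(\wt\bx^{(k+1)})\le f(\wt\bx^{(k)})-\tfrac12\,\nabla f(\wt\bx^{(k)})^T\bd^{(k)}$ is exactly \refeq{eq:2ndtaylor} with \refeq{eq:h} substituted back, and you then invoke the same ingredients in the same order --- \reflem{lem:csr_sigma1} for the $\boldsymbol{\sigma}_1$ term, \reflem{lem:srh_dk} and the definition \refeq{eq:lambda} of $\rho_k$ (summed coordinatewise) for the $\boldsymbol{\sigma}_2$ term, \reflem{lem:lambda_bound} for $\rho_k\le 2\,t\,\eps$, and the PL inequality in expectation \refeq{eq:PLineq_expec} to close the recursion. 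The only cosmetic difference is that you lower-bound $\rho_j$ by $\tau_1=\min_j\rho_j$ at each step before iterating, whereas the paper first expands the product $\prod_{j}(1-(t+\rho_j)\,\mu)$ and then passes to $\tau_1$ (and additionally verifies $(t+\rho_j)\,\mu\le\tfrac{1+2\eps}{8}<\tfrac38$ rather than quoting $\mu\,(t+\tau_1)<1$ from the statement); both variants yield the identical bound.
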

\begin{proof}
Substituting \cref{eq:h} into \cref{eq:2ndtaylor}, and taking the expectation of both sides, we obtain
\begin{align}
\expt\,[\,f(\wt\bx^{(k+1)})-f^*\,]
\le \,&\expt\,[\,f(\wt\bx^{(k)})-f^*\,]-\tfrac12 \,t\,\expt\,[\, \|\nabla f(\wt\bx^{(k)})\|^2 \,]\nonumber\\&-\tfrac12 \sum_{i=1}^{n}(t\,\expt\,[\,\sigma_{1,i}^{(k)}\nabla f(\wt\bx^{(k)})_i\,]+\expt\,[\,\sigma_{2,i}^{(k)}\nabla f(\wt\bx^{(k)})_i\,]).\label{eq:expectedfxk1}
\end{align}
In view of Lemmas \ref{lem:csr_sigma1} and \ref{lem:srh_dk}, we have $\expt\,[\, \|\nabla f(\wt\bx^{(k)})\|^2 \,]+\expt\,[\,\sigma_{1,i}^{(k)}\,\nabla f(\wt\bx^{(k)})_i\,]\simeq\expt\,[\, \|\nabla f(\wt\bx^{(k)})\|^2 \,]$ and $\expt\,[\,\sigma_{2,i}^{(k)}\,\nabla f(\wt\bx^{(k)})_i\,]>0$. Furthermore, \cref{eq:lambda} and \cref{eq:expectedfxk1} imply that
\begin{align}
\expt\,[\,f(\wt\bx^{(k+1)})-f^*\,]
\lesssim \expt\,[\,f(\wt\bx^{(k)})-f^*\,]-\tfrac12 \,t\,\expt\,[\, \|\nabla f(\wt\bx^{(k)})\|^2 \,]-\tfrac12 \rho_{k}\,\expt\,[\, \|\nabla f(\wt\bx^{(k)})\|^2 \,].\label{eq:expectfxk_srh}
\end{align}
Taking the expectation of \cref{eq:PLineq}, we obtain
\begin{equation}
 \expt\,[\, \|\nabla f(\wt\bx^{(k)})\|^2\,] \ge 2 \,\mu\, \expt\,[\,f(\wt\bx^{(k)})-f^*\,].\label{eq:PLineq_expec}
\end{equation}
Hence, substituting \cref{eq:PLineq_expec} into \cref{eq:expectfxk_srh}, we have
\[
\expt\,[\,f(\wt\bx^{(k+1)})-f^*\,]
\lesssim(1-(t+\rho_k)\,\mu)\ \expt\,[\,f(\wt\bx^{(k)})-f^*\,].
\]
Expanding the recursion, we obtain 
\[
\expt\,[\,f(\wt\bx^{(k)})-f^*\,]
\lesssim\prod_{j=0}^{k-1}(1-(t+\rho_j)\,\mu)\ \expt\,[\,f(\bx^{(0)})-f^*\,].
\]
According to \cref{lem:lambda_bound}, we have the claim for $\tau_1\in(0, 2\,t\,\eps\,]$. Furthermore, the properties $\varepsilon<1$, $\rho_j\le 2\,t\,\varepsilon$ and $\mu\le L/2$ yield $(t+\rho_j)\,\mu\le \frac{1+2\varepsilon}{8}<\frac{3}{8}$. 
\end{proof}
Looking at \cref{eq:lambda}, a larger value of $\eps$ in $\srh$ might allow a larger bound for $\tau_1$. Comparing \cref{them:plineq_nonconvex} (cf.~\cref{eq:plineq_nonconvex}) and \cref{them:convergencerate_c1_srh} (cf.~\cref{eq:plexpbound}), when choosing the same $t$, we have that a tighter bound on convergence rate, in expectation, is obtained by using $\srh$. 

When both $\bsigma_{1}$ and $\bsigma_2$ are obtained by $\csr$, \cref{lem:csr_sigma2} implies $\expt\,[\,\sigma_{2,i}^{(k)}\,\nabla f(\wt\bx^{(k)})_i\,]=0$, which yields $\rho_k=0$ in \cref{eq:expectedfxk1}. Together with \cref{lem:csr_sigma1}, this gives, in expectation, the same convergence rate that holds for infinite-precision computations.
\begin{corollary}\label{coro:convergencerate_c1_csr}
Under the same assumptions of \cref{them:convergencerate_c1_srh}, if $\bsigma_2$ is obtained using $\csr$, then we have 
\[
\expt\,[\,f(\wt\bx^{(k)})-f^*\,]
\lesssim (1-t\,\mu)^k \, (f(\bx^{(0)})-f^*).
\]
\end{corollary}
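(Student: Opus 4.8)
The plan is to specialize the proof of \refthem{them:convergencerate_c1_srh} to the situation where $\boldsymbol{\sigma}_2$ is rounded with $\csr$ rather than $\srh$. The crucial observation is that everything in that proof up to and including inequality \refeq{eq:expectedfxk1} is purely algebraic: it follows from the second-order Lipschitz bound \refeq{eq:2ndtaylor}, the relative-error representation \refeq{eq:h}, and taking expectations. The passage to \refeq{eq:2ndtaylor} itself only uses $t<\tfrac{1}{4L}$ together with the bound $0\le 1+r_i^{(k)}\le 4$ from \reflem{lem:bound_h}, which holds in Case I for \emph{any} rounding scheme. Hence \refeq{eq:expectedfxk1} is valid verbatim here, independently of how $\boldsymbol{\sigma}_2$ is generated.

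First I would invoke \reflem{lem:csr_sigma1} exactly as in the $\srh$ proof to control the $\boldsymbol{\sigma}_1$ contribution: since $\csr$ is used for $\boldsymbol{\sigma}_1$ in both statements, we still have $\expt[\,\Vert \nabla f(\wt\bx^{(k)})\Vert^2\,]+\sum_{i=1}^n \expt[\,\sigma_{1,i}^{(k)}\,\nabla f(\wt\bx^{(k)})_i\,]\simeq\expt[\,\Vert \nabla f(\wt\bx^{(k)})\Vert^2\,]$, so the $\boldsymbol{\sigma}_1$ cross term contributes only a negligible $\calo(u^2)$ quantity that is absorbed into $\lesssim$. The single place where the present corollary diverges from \refthem{them:convergencerate_c1_srh} is the treatment of the $\boldsymbol{\sigma}_2$ cross term. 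Where the $\srh$ proof uses \reflem{lem:srh_dk} to obtain a strictly positive $\sum_i \expt[\,\sigma_{2,i}^{(k)}\,\nabla f(\wt\bx^{(k)})_i\,]$, and hence a strictly positive $\rho_k$ in \refeq{eq:lambda}, here \reflem{lem:csr_sigma2} gives the exact identity $\sum_i \expt[\,\sigma_{2,i}^{(k)}\,\nabla f(\wt\bx^{(k)})_i\,]=0$, i.e.\ $\rho_k=0$ for every $k$.

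Substituting $\rho_k=0$ together with the $\boldsymbol{\sigma}_1$ estimate into \refeq{eq:expectedfxk1} collapses it to $\expt[\,f(\wt\bx^{(k+1)})-f^*\,]\lesssim \expt[\,f(\wt\bx^{(k)})-f^*\,]-\tfrac12\,t\,\expt[\,\Vert \nabla f(\wt\bx^{(k)})\Vert^2\,]$, which is precisely \refeq{eq:expectfxk_srh} with the $\rho_k$ term removed. Applying the PL inequality in expectation \refeq{eq:PLineq_expec} then yields the one-step contraction $\expt[\,f(\wt\bx^{(k+1)})-f^*\,]\lesssim(1-t\,\mu)\,\expt[\,f(\wt\bx^{(k)})-f^*\,]$, and unrolling the recursion over the $k$ steps gives the stated bound. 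Finally I would note, as in the parent proof, that $t<\tfrac{1}{4L}$ and $\mu\le\tfrac12 L$ guarantee $1-t\,\mu\in(\tfrac12,1)$, so the product is a genuine contraction.

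I do not anticipate a real obstacle, since this is a direct specialization in which the $\srh$-specific lemma is simply replaced by the zero-bias identity of $\csr$. The only point requiring a little care is the bookkeeping of the $\lesssim$ symbol: unlike the $\boldsymbol{\sigma}_2$ term, which now vanishes exactly, the $\boldsymbol{\sigma}_1$ term is only approximately zero, so the final bound must be stated with $\lesssim$ rather than $\le$, with the hidden $\calo(u^2)$ error accumulating at most polynomially in $n,L,\chi$ across the iterations.
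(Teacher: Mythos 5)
Your proposal is correct and coincides with the paper's intended argument: the paper omits this proof precisely because, as you describe, one repeats the proof of \refthem{them:convergencerate_c1_srh} with \reflem{lem:csr_sigma2} giving $\rho_k=0$ in place of \reflem{lem:srh_dk}, while \reflem{lem:csr_sigma1} still absorbs the $\boldsymbol{\sigma}_1$ cross term into the $\lesssim$. Your closing remarks on the necessity of $\lesssim$ and on $1-t\,\mu$ being a genuine contraction are likewise consistent with the paper's treatment.
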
 
We omit the proof of \cref{coro:convergencerate_c1_csr} since it can be easily obtained on the basis of \cref{lem:csr_sigma2} and by proceeding analogously to the proof of \cref{them:convergencerate_c1_srh}.
 Both \cref{them:convergencerate_c1_srh} and \cref{coro:convergencerate_c1_csr}
 indicate that smaller values of $t$ are required to ensure the convergence of GD compared to the case of the exact computation. This is because their analysis relies on the worst-case bound of $\bsigma_1$. In the simulation studies reported in \cref{sec:simulation}, we will demonstrate that these restrictions are usually pessimistic and, in practice, $t$ can be chosen as large as the upper bound in \cref{them:plineq_nonconvex}, that is $t\le \frac{1}{L}$.

\subsubsection{Case II}\label{sec:case2} 
In Case II, each of the entries of the updating vector $\wt{\bd}^{(k)}$ takes one of the values in $\{0, u, -u\}$; when it is $0$, property \cref{eq:bound_gradient} implies that $\nabla f(\wt\bx^{(k)})_i=0$. Therefore, we have $\sigma_{2,i}=0$, which does not influence the convergence of GD. For our analysis, it is convenient to single out the cases where the rounded gradient has some components that are exactly zero. More formally, we denote the finite set of nonzero values that the $i$th component of $\nabla f(\wt\bx^{(k)})$ may assume by $\calw_i^{(k)}$. Note that $\calw^{(k)}_{i}$ can be empty for some entries of $\nabla f(\wt\bx^{(k)})$, but when it is empty for all $i$, in accordance with \cref{eq:bound_gradient}, it implies that $\nabla f(\wt\bx^{(k)})=0$ and GD has reached a stationary point. 
Additionally, \cref{eq:d_c2} shows that GD is highly likely to reach a state of stagnation when the quantity $t\ \nabla f(\bx^{(k)})_i+t\,\sigma_{1,i}^{(k)}$ is small. In particular, when $t\ \nabla f(\bx^{(k)})_i+t\,\sigma_{1,i}^{(k)}<\frac{u}{2}$, the use of RN will cause $\wt{d}_i^{(k)}=0$, resulting in the stagnation of GD. In the following propositions, we show that the use of stochastic rounding can ensure the monotonicity of the objective function and achieve linear convergence of GD.
Before we start our analysis, we recall a basic property of conditional expectation to facilitate further proof.

For random variables $X, Y, Z$ we have (see, e.g., \cite[(10.40)]{steyer2017probability})
\begin{align}\label{eq:towerprop}
\expt\,[\,\expt\,[\,X\ \big| \ Y, Z\,] \ \big| \ Y\,]=\expt\,[\,X\ \big| \ Y\,].
\end{align}
When stochastic rounding is employed, our aim is to prove that GD achieves a linear convergence rate until it reaches a neighborhood around $\bx^*$. To shed light on the size of this neighborhood, we introduce the quantity 
\begin{align}\label{eq:case2_theta}
 \theta_k:= \min_{i= 1, \dots, n}\min_{\nabla f(\wt\bx^{(k)})_i\in \calw^{(k)}_i} \frac{2\ |\nabla f(\wt\bx^{(k)})_i|-L\,u}{|\nabla f(\wt\bx^{(k)})_i|},
\end{align}
which implies that $ \theta_k<2$.
The value of $\theta_k$ is approximately equal to $2$ when the gradient's components are large compared to $L\,u$, which is typically the case at the beginning of the process. Only when at least one entry of the gradient gets close to $0$,  $\theta_k$ may be no longer positive. In this case, where some of the gradient components are near zero, rounding errors can play a positive or negative role in determining whether GD converges, stagnates, or oscillates. When these rounding errors are intentionally designed to align in a descent direction, such as when using $\srh$, then it may help the optimization process; see for instance \cref{fig:fu4}.

In the subsequent convergence analysis, we show that the condition $\theta_k> 0$ provides, in expectation, strict monotonicity and linear convergence rate. Note that $\theta_k> 0$ implies that $|\nabla f(\wt\bx^{(k)})_i|>\frac12 \,L\,u$. This observation does not contradict condition \cref{eq:condu_stagnation}. This is due to the fact that \cref{eq:condu_stagnation} provides an upper bound on the rounded gradient, which means that even if $|\nabla f(\wt\bx^{(k)})_i|$ is larger than $\frac12 \, L\,u$, the rounded gradient can still be small, possibly even reaching 0. We begin with the monotonicity when using $\csr$. 
\begin{proposition}\label{prop:monoto_csr}
Under \cref{assum:universal assumption}, after $k$ iteration steps of GD in fixed-precision arithmetic with a fixed stepsize $t\le \frac{1}{L}$, suppose that the condition of Case II \cref{eq:condu_stagnation} has been satisfied throughout the $k$ iteration steps. If $\bsigma_{1}$ and $\bsigma_2$ in \cref{eq:gd_fp} are obtained by $\csr$ and $\theta_k> 0$ in \cref{eq:case2_theta} then
\[\expt\,[\,f(\wt\bx^{(k)})\,]\ < \ \expt\,[\,f(\wt\bx^{(k-1)})\,].
\]
\end{proposition}
\begin{proof}
The Lipschitz gradient property \cref{eq:lpineq}, rephrased in terms of expectations, yields
\begin{align}
\expt\,[\,f(\wt\bx^{(k+1)})\,]\le \,&\expt\,[\,f(\wt\bx^{(k)})\,]-\sum_{i=1}^{n}\expt\,[\,\nabla f(\wt\bx^{(k)})_i\,\wt{d}_i^{(k)}-\tfrac12 \,L\,(\wt{d}_i^{(k)})^2 \,].
\label{eq:csrexptd_caseII}
\end{align}
According to \cref{eq:towerprop}, we have \[\expt\,[\,\nabla f(\wt\bx^{(k)})_i\,\sigma_{2,i}^{(k)}\ \big| \ \nabla f(\wt\bx^{(k)})_i]\,=\expt\,[\,\expt\,[\,\nabla f(\wt\bx^{(k)})_i\,\sigma_{2,i}^{(k)}\ \big| \ \nabla f(\wt\bx^{(k)})_i,\, \sigma_{1,i}^{(k)}\,]\ \big| \ \nabla f(\wt\bx^{(k)})_i]=0.
 \]
Therefore, letting $q$ vary over $\calw^{(k)}_i$, the set of all the possible values of $\nabla f(\wt\bx^{(k)})_i$, and utilizing the law of total expectation, we have
\begin{align}\label{eq:c2csr_fisrtpart}
 &\expt\,[\,\nabla f(\wt\bx^{(k)})_i\,\wt{d}_i^{(k)}\,]\nonumber\\
 &=\sum_{q\,\in\calw^{(k)}_i }\expt\,[\,\nabla f(\wt\bx^{(k)})_i\,\wt{d}_i^{(k)}\ \big| \ \nabla f(\wt\bx^{(k)})_i=q\,]\,P(\nabla f(\wt\bx^{(k)})_i=q)\nonumber\\
 &=\sum_{q\,\in\calw^{(k)}_i }\expt\,[\,\nabla f(\wt\bx^{(k)})_i\,(t\ \nabla f(\wt\bx^{(k)})_i+t\,\sigma^{(k)}_{1,i})\ \big| \ \nabla f(\wt\bx^{(k)})_i=q\,]\,P(\,\nabla f(\wt\bx^{(k)})_i=q\,).
\end{align}
Similarly, we have
\[
 \expt\,[\,(\wt{d}_i^{(k)})^2 \,] =\sum_{q\,\in\calw^{(k)}_i }\expt\,[\,(\wt{d}_i^{(k)})^2 \ \big| \ \nabla f(\wt\bx^{(k)})_i=q\,]\,P(\,\nabla f(\wt\bx^{(k)})_i=q\,).
\]
Based on \cref{eq:d_c2}, when $\nabla f(\wt\bx^{(k)})_i+\sigma^{(k)}_{1,i}>0$, we have 
\begin{small}
 \begin{align*}
 (\wt{d}_i^{(k)})^2=\csr(t\ \nabla f(\wt\bx^{(k)})_i+t\,\sigma^{(k)}_{1,i})^2=\begin{cases} 0,&\text{with probability}\, p_0(t\ \nabla f(\wt\bx^{(k)})_i+t\,\sigma^{(k)}_{1,i}),\\
 u^2,&\text{with probability}\, 1-p_0(t\ \nabla f(\wt\bx^{(k)})_i+t\,\sigma^{(k)}_{1,i}).
 \end{cases}
\end{align*}
\end{small}\noindent
Replacing $x$ by $t\ \nabla f(\wt\bx^{(k)})_i+t\,\sigma^{(k)}_{1,i}$ in \cref{eq:csr}, we have
\[
 \expt\,[\,\wt{d}_i^{(k)}\ \big| \ \nabla f(\wt\bx^{(k)})_i,\sigma^{(k)}_{1,i}\,]=t\ \nabla f(\wt\bx^{(k)})_i+t\,\sigma^{(k)}_{1,i}= u\,(1-p_0(t\ \nabla f(\wt\bx^{(k)})_i+t\,\sigma^{(k)}_{1,i})),
\]
so that\begin{small}
\begin{align}\label{eq:csrd^2}
 \expt\,[\,(\wt{d}_i^{(k)})^2 \ \big| \ \nabla f(\wt\bx^{(k)})_i,\sigma^{(k)}_{1,i}\,]=u^2 \,(1-p_0(t\ \nabla f(\wt\bx^{(k)})_i+t\,\sigma^{(k)}_{1,i})\,)=u\, | t\ \nabla f(\wt\bx^{(k)})_i+t\,\sigma^{(k)}_{1,i}|.
\end{align}\end{small}\noindent
It is easy to check that \cref{eq:csrd^2} also holds for the condition when $\nabla f(\wt\bx^{(k)})_i+\sigma^{(k)}_{1,i}<0$.
In view of \cref{eq:towerprop}, we have
\[
 \expt\,[\,(\wt{d}_i^{(k)})^2 \ \big| \ \nabla f(\wt\bx^{(k)})_i\,]=\expt\,[\,\expt\,[\,(\wt{d}_i^{(k)})^2 \ \big| \ \nabla f(\wt\bx^{(k)})_i,\,\sigma^{(k)}_{1,i}\,]\ \big| \ \nabla f(\wt\bx^{(k)})_i\,],
\]
 which leads to
\[
 \expt\,[\,(\wt{d}_i^{(k)})^2 \,]=\sum_{q\,\in\calw^{(k)}_i}\expt\,[\, | t\ \nabla f(\wt\bx^{(k)})_i+t\,\sigma^{(k)}_{1,i}|\,u\ \big| \ \nabla f(\wt\bx^{(k)})_i=q\,]\,P(\nabla f(\wt\bx^{(k)})_i=q).\]
Equation \cref{eq:bound_gradient} implies that
\[
\nabla f(\wt\bx^{(k)})_i\,(t\ \nabla f(\wt\bx^{(k)})_i+t\,\sigma^{(k)}_{1,i})=|\nabla f(\wt\bx^{(k)})_i|\,\, |\,t\ \nabla f(\wt\bx^{(k)})_i+t\,\sigma^{(k)}_{1,i}|,
\]
and together with \cref{eq:c2csr_fisrtpart}, it yields 
\begin{small}
 \begin{align}\label{eq:c2_lastpart}
 &\expt\,[\,\nabla f(\wt\bx^{(k)})_i\,\wt{d}_i^{(k)}-\tfrac12 \,L\,(\wt{d}_i^{(k)})^2 \,]\\
 &=\sum_{q\,\in\calw^{(k)}_i}\!\!\!\expt\,[\, |\,t\ \nabla f(\wt\bx^{(k)})_i+t\,\sigma^{(k)}_{1,i}|\,(\, |\nabla f(\wt\bx^{(k)})_i|-\tfrac12 \,L\,u)\ \big| \ \nabla f(\wt\bx^{(k)})_i=q\,]\,P(\nabla f(\wt\bx^{(k)})_i=q).\nonumber
\end{align}
\end{small}\noindent
Finally, we observe that $\nabla f(\wt\bx^{(k)})_i\in\calw^{(k)}_{i}$ implies $\nabla f(\wt\bx^{(k)})_i\ne 0$, and also $\theta_k> 0$ implies $\expt\,[\,\nabla f(\wt\bx^{(k)})_i\,\wt{d}_i^{(k)}-\tfrac12 \,L\,(\wt{d}_i^{(k)})^2 \,]>0$, $i=1,\dots,n$, which together with \eqref{eq:csrexptd_caseII} implies that $ \expt\,[\,f(\wt\bx^{(k)})\,]> \expt\,[\,f(\wt\bx^{(k+1)})\,]$.
\end{proof}
Equation \cref{eq:c2_lastpart} tells us that we might observe an oscillatory or stagnating behavior when $\theta_k\le0$; for a detailed numerical demonstration, see \cref{fig:u4,fig:u6}. Concerning the convergence rate obtained by $\csr$, we prove the following bound. 
\begin{proposition}\label{prop:convergencerate_c2_csr}
 Under the same assumptions of \cref{prop:monoto_csr}, if $\theta_j> 0$, $j=0,\dots, k-1$, then 
 \begin{align}
\expt\,[\,f(\wt\bx^{(k)})-f^*\,]\lesssim\,&\prod_{j=0}^{k-1}(1-t\,\mu\,\theta_j)\,(f(\bx^{(0)})-f^*),\label{eq:conv-expcsrplstageII}
\end{align}
where $0<1-t\,\mu\,\theta_j<1$.
\end{proposition}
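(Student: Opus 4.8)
The plan is to upgrade the monotonicity estimate of \refprop{prop:ucsr} into a quantitative contraction by reusing, verbatim, the per-coordinate decomposition derived there. Concretely, I would start from the Lipschitz bound \refeq{eq:csrexptd_caseII} together with the identity \refeq{eq:c2_lastpart}, which writes each term $\expt\,[\,\nabla f(\wt\bx^{(k)})_i\,d_i^{(k)}-\tfrac12 L\,(d_i^{(k)})^2\,]$ as a sum over $q\in\calw_i^*$ of $\expt\,[\,|t\,\nabla f(\wt\bx^{(k)})_i+t\,\sigma_{1,i}^{(k)}|\,(|\nabla f(\wt\bx^{(k)})_i|-\tfrac12 Lu)\ \big\vert\ \nabla f(\wt\bx^{(k)})_i=q\,]$ weighted by $P(\nabla f(\wt\bx^{(k)})_i=q)$. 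The objective is to bound this from below by $\tfrac12\,t\,\theta_k\,\expt\,[\,(\nabla f(\wt\bx^{(k)})_i)^2\,]$, after which summation over $i$ and the PL inequality close the argument.

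The first key step is to rewrite the deterministic factor as $|q|-\tfrac12 Lu=\tfrac12\,|q|\,\theta_i(q)$ with $\theta_i(q):=(2\,|q|-Lu)/|q|$, and then invoke the definition of $\theta_k$ as the minimum of $\theta_i(q)$ over $i$ and $q\in\calw_i^*$. The hypothesis $\theta_k\ge 0$ guarantees $|q|-\tfrac12 Lu\ge 0$ for every relevant $q$, so that replacing $\theta_i(q)$ by the smaller $\theta_k$ preserves the inequality direction and yields $|q|-\tfrac12 Lu\ge\tfrac12\,|q|\,\theta_k$. The second, and most delicate, step is to control the stochastic factor $\expt\,[\,|t\,\nabla f(\wt\bx^{(k)})_i+t\,\sigma_{1,i}^{(k)}|\ \big\vert\ \nabla f(\wt\bx^{(k)})_i=q\,]$. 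Here I would use \refeq{eq:bound_gradient}: since $|\sigma_{1,i}^{(k)}|\le|\nabla f(\wt\bx^{(k)})_i|$, the sign of $\nabla f(\wt\bx^{(k)})_i+\sigma_{1,i}^{(k)}$ agrees with $\sign(q)$, whence $|\nabla f(\wt\bx^{(k)})_i+\sigma_{1,i}^{(k)}|=|q|+\sign(q)\,\sigma_{1,i}^{(k)}$ on the event $\{\nabla f(\wt\bx^{(k)})_i=q\}$. Because $\boldsymbol{\sigma}_1$ is computed with $\csr$, \refprop{prop:exptf_csr} gives $|\expt\,[\,\sigma_{1,i}^{(k)}\ \big\vert\ \nabla f(\wt\bx^{(k)})_i=q\,]|\lesssim\calo(u^2)$, so this conditional expectation is $\simeq|q|$ up to the second-order terms hidden in $\lesssim$. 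Multiplying the two factors and summing over $q$ (the $q=0$ term contributes nothing) yields $\expt\,[\,\nabla f(\wt\bx^{(k)})_i\,d_i^{(k)}-\tfrac12 L\,(d_i^{(k)})^2\,]\gtrsim\tfrac12\,t\,\theta_k\,\expt\,[\,(\nabla f(\wt\bx^{(k)})_i)^2\,]$.

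Finally, I would sum over $i$ to get $\sum_i\expt\,[\,\cdots\,]\gtrsim\tfrac12\,t\,\theta_k\,\expt\,[\,\Vert\nabla f(\wt\bx^{(k)})\Vert^2\,]$, substitute into \refeq{eq:csrexptd_caseII}, subtract $f^*$, and apply the PL inequality in expectation \refeq{eq:PLineq_expec} to obtain $\expt\,[\,f(\wt\bx^{(k+1)})-f^*\,]\lesssim(1-t\,\mu\,\theta_k)\,\expt\,[\,f(\wt\bx^{(k)})-f^*\,]$; unrolling this recursion over $j=0,\dots,k-1$ produces the stated product. The bound $0<1-t\,\mu\,\theta_j<1$ follows from $\theta_j\in(0,2)$ together with $t<\tfrac1L$ and $\mu\le\tfrac12 L$, which give $t\,\mu\,\theta_j<1$ (the degenerate value $\theta_j=0$ simply leaves the trivial factor $1$). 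The main obstacle is the second step: the quantity to be lower-bounded is an \emph{absolute value} of a random variable, so one cannot directly appeal to the near-unbiasedness of $\csr$; it is essential first to remove the absolute value via \refeq{eq:bound_gradient} on the conditioning event before taking the conditional expectation, and then to track that the accumulated $\calo(u^2)$ discrepancies, scaled by the at most polynomial factors in $n$, $L$, and $\chi$, are precisely what the $\lesssim$ notation is designed to absorb.
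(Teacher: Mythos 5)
Your proposal is correct and follows essentially the same route as the paper's proof: the same starting point \refeq{eq:c2_lastpart}, the same lower bound $\vert q\vert-\tfrac12 Lu\ge\tfrac12\,\theta_k\,\vert q\vert$ from \refeq{eq:case2_theta}, removal of the absolute value via the sign property \refeq{eq:bound_gradient}, and the PL substitution \refeq{eq:PLineq_expec} followed by unrolling the recursion. The only cosmetic difference is that you apply the $\csr$ near-unbiasedness conditionally on each event $\{\nabla f(\wt\bx^{(k)})_i=q\}$ via \refprop{prop:exptf_csr}, whereas the paper aggregates the cross term $t\,\expt\,[\,\nabla f(\wt\bx^{(k)})_i\,\sigma^{(k)}_{1,i}\,]$ and invokes \reflem{lem:csr_sigma1} as a packaged statement — but that lemma is proved by exactly your conditional argument, so the two are the same.
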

\begin{proof}
On the basis of \cref{eq:case2_theta} and \cref{eq:c2_lastpart}, we have
\begin{align}\label{eq:c2csr_ith}
 \expt\,[\,\nabla &f(\wt\bx^{(k)})_i\,\wt{d}_i^{(k)}-\tfrac12 \,L\,(\wt{d}_i^{(k)})^2 \,]\nonumber\\
\ge\,&\tfrac12 \,\theta_k\sum_{q\,\in\calw^{(k)}_i}\expt\,[\,(\,t\ \nabla f(\wt\bx^{(k)})_i+t\,\sigma^{(k)}_{1,i})\nabla f(\wt\bx^{(k)})_i\ \big| \ \nabla f(\wt\bx^{(k)})_i=q\,]\,P(\nabla f(\wt\bx^{(k)})_i=q)\nonumber\\
 =\,&\tfrac12 \,\theta_k\,t\,\expt\,[\,\nabla f(\wt\bx^{(k)})_i^2+\nabla f(\wt\bx^{(k)})_i\,\sigma^{(k)}_{1,i}\,]
 \underset{\quad\text{\cref{lem:csr_sigma1}}}{\,\,\simeq}\,\tfrac12 \,\theta_k\,t\,\expt\,[\,\nabla f(\wt\bx^{(k)})_i^2 \,].
\end{align}
Substituting it into \cref{eq:csrexptd_caseII}, we obtain
\begin{align}
\expt\,[\,f(\wt\bx^{(k+1)})\,]\lesssim\,&\expt\,[\,f(\wt\bx^{(k)})\,]-\sum_{i=1}^{n}\tfrac12 \,\theta_k\,t\,\expt\,[\,\nabla f(\wt\bx^{(k)})_i^2 \,]\nonumber\\
=\,&\expt\,[\,f(\wt\bx^{(k)})\,]-\tfrac12 \,\theta_k\,t\,\expt\,[\, \|\nabla f(\wt\bx^{(k)})\|^2 \,].\nonumber
\end{align}
Substituting \cref{eq:PLineq_expec} into it, we get
\[
 \expt\,[\,f(\wt\bx^{(k+1)})-f^*\,]\lesssim(1-t\,\mu\,\theta_k)\,\expt\,[\,f(\wt\bx^{(k)})-f^*\,].
\]
Expanding the recursion, we obtain \cref{eq:conv-expcsrplstageII}.
Since $\theta_j,t,\mu>0$ we have $1-t\,\mu\,\theta_j<1$. Finally, $\theta_j<2$ and $t\le \frac{1}{L}\le \frac{1}{2 \,\mu}$ imply $1-t\,\mu\,\theta_j>0$.
\end{proof}
\cref{prop:convergencerate_c2_csr} demonstrates that, under the condition $\theta_j>0$, GD converges linearly to a neighborhood of the optimal point.
Looking at \cref{eq:conv-expcsrplstageII}, we see that larger values of $\theta_j$ lead to tighter bounds for the expected convergence rate of GD. Comparing \cref{eq:plineq_nonconvex} and \cref{eq:conv-expcsrplstageII}, when $\theta_j>1$ for many iteration steps $j$, it is likely to achieve a stricter bound than the one available for exact arithmetic; on the contrary, when $\theta_j<1$ for many indices $j$ we probably get a larger bound. Next, we prove that under the same condition as \cref{prop:monoto_csr}, the monotonicity of GD is also guaranteed when employing $\srh$ for $\bsigma_2$. 
\begin{proposition}\label{prop:monoto_srh}
 Under the same assumptions of \cref{prop:monoto_csr}, but instead of using $\csr$ now $\bsigma_2$ is obtained by $\srh$, if $\theta_k > 0$ in \cref{eq:case2_theta} then
 \[
 \expt\,[\,f(\wt\bx^{(k)})\,]< \expt\,[\,f(\wt\bx^{(k-1)})\,].
 \]
\end{proposition}
\begin{proof}
 Denote by $\cals$ the finite set of values that may be taken by $t\,(\nabla f(\wt\bx^{(k)})_i+\sigma_{1,i}^{(k)})$. Let $\cals_{1}$ indicate the subset of values of $\cals$ such that $0<p_\eps(t\,(\nabla f(\wt\bx^{(k)})_i+\sigma_{1,i}^{(k)}))<1$. Analogously, we define the subsets $\cals_{2}$ and $\cals_{3}$ corresponding to the conditions $p_\eps=0$ and $p_\eps=1$, respectively.
 Finally, we introduce the quantity
 \[
 \omega_i^{(k)}:=u^{-1}(u-|\,t\,(\nabla f(\wt\bx^{(k)})_i+\sigma_{1,i}^{(k)})\, |).
 \]
Note that the numerator of $\omega_i^{(k)}$ indicates the distance of $t\,(\nabla f(\wt\bx^{(k)})_i+\sigma_{1,i}^{(k)})$ to $u$ and $-u$ for $t\,(\nabla f(\wt\bx^{(k)})_i+\sigma_{i,1}^{(k)})\in \cals_{2}$ and $t\,(\nabla f(\wt\bx^{(k)})_i+\sigma_{i,1}^{(k)})\in \cals_{3}$, respectively. According to the definition of $\srh$, see \cref{eq:epsilon}, when $p_{\eps}(t\ \nabla f(\wt\bx^{(k)})_i+t\,\sigma^{(k)}_{1,i})$ equals $0$ or $1$ it follows that $\omega_i^{(k)}<\eps$. Further, since $t\,(\nabla f(\wt\bx^{(k)})_i+\sigma_{i,1}^{(k)})\in\cals_{2}$ and $t\,(\nabla f(\wt\bx^{(k)})_i+\sigma_{i,1}^{(k)})\in\cals_{3}$ are mutually exclusive events, we have $P(t\,(\nabla f(\wt\bx^{(k)})_i+\sigma_{1,i}^{(k)})\in\{S_{2}\cup S_{3}\})=P(t\,(\nabla f(\wt\bx^{(k)})_i+\sigma_{1,i}^{(k)})\in S_{2})+P(t\,(\nabla f(\wt\bx^{(k)})_i+\sigma_{1,i}^{(k)})\in S_{3})$. Therefore, we define the real-valued function $h$ as the map that associates with the random variable $\nabla f(\wt\bx^{(k)})_i+\sigma^{(k)}_{1,i}$ the quantity: \begin{small}
 \begin{equation}\label{eq:g}
 h(\nabla f(\wt\bx^{(k)})_i+\sigma^{(k)}_{1,i}):=\eps\,P(t\,(\nabla f(\wt\bx^{(k)})_i+\sigma_{1,i}^{(k)})\in\cals_1)+\omega_i^{(k)}\,P(t\,(\nabla f(\wt\bx^{(k)})_i+\sigma_{1,i}^{(k)})\in\{S_2 \cup S_3\}).
 \end{equation}
\end{small}\noindent
 When $\nabla f(\wt\bx^{(k)})_i+\sigma^{(k)}_{1,i}$ is not an identically zero random variable, we have
 \begin{align}
 \expt\,[\,\sigma^{(k)}_{2,i}\ \big|\ \nabla f(\wt\bx^{(k)})_i,\,\sigma^{(k)}_{1,i}\,]&=u\ \sign(\nabla f(\wt\bx^{(k)})_i)\,(\,
 \eps\,P(t\,(\nabla f(\wt\bx^{(k)})_i+\sigma_{1,i}^{(k)})\in\cals_1)\nonumber\\&\qquad\qquad\qquad+\omega_i^{(k)}\,u\ P(t\,(\nabla f(\wt\bx^{(k)})_i+\sigma_{1,i}^{(k)})\in\{S_2 \cup S_3\})\,)\nonumber\\
 &\,=\,u\ \sign(\nabla f(\wt\bx^{(k)})_i)\,h(\nabla f(\wt\bx^{(k)})_i+\sigma^{(k)}_{1,i})\nonumber.
 \end{align}
 Proceeding analogously as for \cref{eq:c2csr_fisrtpart}, we obtain
 \begin{align}\label{eq:c2srh_fisrtpart}
 \expt&\,[\,\nabla f(\wt\bx^{(k)})_i\,\wt{d}_i^{(k)}\,]\\
 &=\sum_{q\,\in\calw^{(k)}_i }\expt\,[\,\nabla f(\wt\bx^{(k)})_i\,(t\ \nabla f(\wt\bx^{(k)})_i+t\,\sigma^{(k)}_{1,i})\ 
 \big| \ \nabla f(\wt\bx^{(k)})_i=q\,]\,P(\,\nabla f(\wt\bx^{(k)})_i=q\,)\nonumber\\&\quad+\sum_{q\,\in\calw^{(k)}_i }\expt\,[\,u\, |\nabla f(\wt\bx^{(k)})_i |\,h(\nabla f(\wt\bx^{(k)})_i+\sigma^{(k)}_{1,i}) \ \big| \ \nabla f(\wt\bx^{(k)})_i=q\,]\,P(\,\nabla f(\wt\bx^{(k)})_i=q\,).\nonumber
\end{align}
Following a similar procedure to the one obtaining \cref{eq:csrd^2}, with $\srh$, we have
 \begin{align}
 \expt\,[\,(\wt{d}_i^{(k)})^2&\ \big| \ \nabla f(\wt\bx^{(k)})_i,\,\sigma^{(k)}_{1,i}\,]=\,u^2 \, (1-p_{\eps}(t\ \nabla f(\wt\bx^{(k)})_i+t\,\sigma^{(k)}_{1,i})\,)\nonumber\\
 &=(u\, |\,t\ \nabla f(\wt\bx^{(k)})_i+t\,\sigma^{(k)}_{1,i}\, |+\eps\,u^2)\,P(\nabla f(\wt\bx^{(k)})_i+\sigma^{(k)}_{1,i}\in\cals_1)\nonumber\\&\quad\quad+(u\, |\,t\ \nabla f(\wt\bx^{(k)})_i+\sigma^{(k)}_{1,i}\, |+\omega_i^{(k)}\,u^2)\,P(\nabla f(\wt\bx^{(k)})_i+t\,\sigma^{(k)}_{1,i}\in\{\cals_2 \cup\cals_3\}).\nonumber\\
 &=u\, |\,t\ \nabla f(\wt\bx^{(k)})_i+t\,\sigma^{(k)}_{1,i}\, |+u^2 \,h(\nabla f(\wt\bx^{(k)})_i+\sigma^{(k)}_{1,i}).\nonumber
\end{align}
Therefore:
\begin{align}
 \expt\,[\,(\wt{d}_i^{(k)})^2 \,]=\,&\sum_{q\,\in\calw^{(k)}_i }\expt\,[\,u\, |\,t\ \nabla f(\wt\bx^{(k)})_i+t\,\sigma^{(k)}_{1,i}| \ \big|\ \nabla f(\wt\bx^{(k)})_i=q\,]\,P(\,\nabla f(\wt\bx^{(k)})_i=q\,)\nonumber\\&+\sum_{q\,\in\calw^{(k)}_i }\expt\,[\,u^2 \,h(\nabla f(\wt\bx^{(k)})_i+\sigma^{(k)}_{1,i})\ \big|\ \nabla f(\wt\bx^{(k)})_i=q\,]\,P(\,\nabla f(\wt\bx^{(k)})_i=q\,).\nonumber
\end{align}
Together with \cref{eq:c2srh_fisrtpart} and in accordance with \cref{eq:towerprop}, we obtain
\begin{small}
 \begin{align}\label{eq:c2srh_lastpart}
 &\expt\,[\,\nabla f(\wt\bx^{(k)})_i\,\wt{d}_i^{(k)}-\tfrac12 \,L\,(\wt{d}_i^{(k)})^2 \,]\\
 &=\sum_{q\,\in\calw^{(k)}_i }\expt\,[\,\nabla f(\wt\bx^{(k)})_i\,\wt{d}_i^{(k)}-\tfrac12 \,L\,(\wt{d}_i^{(k)})^2 \ \big|\ \nabla f(\wt\bx^{(k)})_i=q\,]\,P(\,\nabla f(\wt\bx^{(k)})_i=q\,)\nonumber\\
 &=\!\!\!\sum_{q\,\in\calw^{(k)}_i }\expt\,[\, |\,t\ \nabla f(\wt\bx^{(k)})_i+t\,\sigma^{(k)}_{1,i}|\,(\, |\nabla f(\wt\bx^{(k)})_i|-\tfrac12 \,L\,u)\ \big|\ \nabla f(\wt\bx^{(k)})_i=q\,]\,P(\,\nabla f(\wt\bx^{(k)})_i=q\,)\nonumber\\
&+\!\!\!\sum_{q\,\in\calw^{(k)}_i }\!\!\expt\,[\,(u\, |\nabla f(\wt\bx^{(k)})_i|-\tfrac12 \,L\,u^2)\,h(\nabla f(\wt\bx^{(k)})_i+\sigma^{(k)}_{1,i})\ \big|\ \nabla f(\wt\bx^{(k)})_i=q\,]\,P(\,\nabla f(\wt\bx^{(k)})_i=q\,).\nonumber
\end{align}
\end{small}\noindent
The property $\theta_k> 0$ implies $\expt\,[\,\nabla f(\wt\bx^{(k)})_i\,\wt{d}_i^{(k)}-\tfrac12 \,L\,(\wt{d}_i^{(k)})^2 \,]> 0$, validating the claim.
\end{proof}
\cref{prop:monoto_srh} shows that under the same conditions as in \cref{prop:monoto_csr}, the strict monotonicity of GD can be also obtained by applying $\srh$. Further, we show that a stricter bound on convergence can be realized by utilizing $\srh$ compared to that of $\csr$. First let us define (cf.~\cref{eq:g})
\begin{align*}
 \beta_k:=\min_{i=1,\dots,n}\, h(\nabla f(\wt\bx^{(k)})_i+\sigma^{(k)}_{1,i}),
\end{align*}
where $\beta_k\le \eps$ for all $k$ and define
\begin{align}\label{eq:tau_2}
 \tau_2:=\min_{j=0,\dots,k}\frac{\beta_j\,u\ \expt\,[\, \|\nabla f(\wt\bx^{(j)})\|\,]}{\expt\,[\, \|\nabla f(\wt\bx^{(j)})\|^2 \,]}.
\end{align}
When $\theta_k> 0$ in \cref{eq:case2_theta}, it implies that $\expt\,[\, \|\nabla f(\wt\bx^{(k)})\|\,]> \frac{1}{2}\,L\,u$. On the basis of Jensen's inequality, we get 
\[
 \tau_2 < \frac{2 \,\beta_j}{L}\underset{L\,t\le1}{\le}2\,t\,\beta_j\le 2\,t\,\eps, \quad \text{for all $j=1,\dots, k$}.
\]
With \cref{eq:tau_2}, we may achieve a stricter bound on the convergence of GD.
\begin{proposition}\label{prop:convergencerate_c2_srh}
 Under the same assumption of \cref{prop:monoto_srh} but instead of $t\le \frac{1}{L}$ we have $t\le \frac{1}{(1+2\varepsilon)\,L}$, if $\theta_j>0$ in \cref{eq:case2_theta} for all $j$, then with $\tau_2$ as in \cref{eq:tau_2}, we have 
 \begin{align}\label{eq:fsrh_caseII}
 \expt\,[\,f(\wt\bx^{(k)})-f^*\,]\lesssim\,&\prod_{j=0}^{k-1}(1-(t+\tau_2)\,\mu\,\theta_j)\,(f(\bx^{(0)})-f^*),
\end{align}
$\tau_2 \in(0,2\,t\,\eps)$ and $0<(t+\tau_2)\,\mu\,\theta_j<1$.
\end{proposition}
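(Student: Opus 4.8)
The plan is to build directly on the identity \refeq{eq:c2srh_lastpart} already established in the proof of \refprop{prop:fxksrh}, which splits the per-coordinate quantity $\expt\,[\,\nabla f(\wt\bx^{(k)})_i\,d_i^{(k)}-\tfrac12\,L\,(d_i^{(k)})^2\,]$ into a ``gradient part'' (identical to the one appearing in the $\csr$ analysis) and an additional ``$\srh$-bias part'' proportional to $h(\nabla f(\wt\bx^{(k)})_i+\sigma_{1,i}^{(k)})$. I would start, exactly as in \refprop{prop:convergencerate_c2_csr}, from the Lipschitz bound in expectation \refeq{eq:csrexptd_caseII}, so that it remains to lower bound $\sum_{i=1}^n \expt\,[\,\nabla f(\wt\bx^{(k)})_i\,d_i^{(k)}-\tfrac12\,L\,(d_i^{(k)})^2\,]$.

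For the gradient part I would reuse verbatim the estimate \refeq{eq:c2csr_ith}: bound $|\nabla f(\wt\bx^{(k)})_i|-\tfrac12\,L\,u \ge \tfrac12\,\theta_k\,|\nabla f(\wt\bx^{(k)})_i|$ (which is just the definition \refeq{eq:case2_theta} of $\theta_k$), use \refeq{eq:bound_gradient} to turn the product $|t\,\nabla f(\wt\bx^{(k)})_i+t\,\sigma_{1,i}^{(k)}|\,|\nabla f(\wt\bx^{(k)})_i|$ into $(t\,\nabla f(\wt\bx^{(k)})_i+t\,\sigma_{1,i}^{(k)})\,\nabla f(\wt\bx^{(k)})_i$, and invoke \reflem{lem:csr_sigma1}. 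Summed over $i$, this contributes $\tfrac12\,\theta_k\,t\,\expt\,[\,\Vert\nabla f(\wt\bx^{(k)})\Vert^2\,]$ up to the $\simeq$ error.

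The genuinely new work is the $\srh$-bias part. Here I would apply the same inequality $u\,|\nabla f(\wt\bx^{(k)})_i|-\tfrac12\,L\,u^2 \ge \tfrac12\,u\,\theta_k\,|\nabla f(\wt\bx^{(k)})_i|$, use $h(\nabla f(\wt\bx^{(k)})_i+\sigma_{1,i}^{(k)})\ge\beta_k$ from the definition of $\beta_k$, and then the elementary norm inequality $\sum_{i=1}^n|\nabla f(\wt\bx^{(k)})_i|\ge\Vert\nabla f(\wt\bx^{(k)})\Vert$, to obtain the lower bound $\tfrac12\,\theta_k\,u\,\beta_k\,\expt\,[\,\Vert\nabla f(\wt\bx^{(k)})\Vert\,]$. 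Combining the two parts and invoking the definition \refeq{eq:tau_k} of $\tau_2$ — which is precisely engineered so that $u\,\beta_k\,\expt\,[\,\Vert\nabla f(\wt\bx^{(k)})\Vert\,]\ge\tau_2\,\expt\,[\,\Vert\nabla f(\wt\bx^{(k)})\Vert^2\,]$ — collapses everything into $\tfrac12\,\theta_k\,(t+\tau_2)\,\expt\,[\,\Vert\nabla f(\wt\bx^{(k)})\Vert^2\,]$. Substituting this into \refeq{eq:csrexptd_caseII}, applying the PL inequality in expectation \refeq{eq:PLineq_expec}, and expanding the resulting one-step recursion $\expt\,[\,f(\wt\bx^{(k+1)})-f^*\,]\lesssim(1-(t+\tau_2)\,\mu\,\theta_k)\,\expt\,[\,f(\wt\bx^{(k)})-f^*\,]$ then yields \refeq{eq:fsrh_caseII}.

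The final bookkeeping step is to verify $0<(t+\tau_2)\,\mu\,\theta_j<1$: positivity is immediate from $t,\tau_2,\mu>0$ together with $\theta_j\ge 0$, while for the upper bound I would combine $\theta_j<2$, $\tau_2\le 2\,t\,\eps$, $\mu\le\tfrac12\,L$, and the sharpened restriction $t\le\frac{1}{(1+2\eps)\,L}$ to get $(t+\tau_2)\,\mu\,\theta_j<(t+2\,t\,\eps)\cdot\tfrac12\,L\cdot 2=(1+2\eps)\,L\,t\le 1$; this is exactly why the stepsize hypothesis is strengthened relative to \refprop{prop:fxksrh}. I expect the main obstacle to be the careful treatment of the $\srh$-bias term: unlike the gradient term, which is quadratic in the gradient and pairs naturally with the PL inequality, this term is only linear in $\Vert\nabla f\Vert$ (reflecting the stochastic sign-descent behavior), so the passage through $\beta_k$, the norm inequality, and the definition of $\tau_2$ must be arranged so that the extra descent is expressed as an effective increment $\tau_2$ to the stepsize rather than being left as an incompatible lower-order quantity.
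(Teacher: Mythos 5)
Your proposal is correct and takes essentially the same route as the paper's proof: the same decomposition of \refeq{eq:c2srh_lastpart} into the $\csr$-type gradient term and the $\srh$-bias term, the same per-coordinate lower bounds via $\theta_k$ and $\beta_k$, the passage from $\Vert\cdot\Vert_1$ to $\Vert\cdot\Vert$, the conversion of the bias into an effective stepsize increment $\tau_2$ through \refeq{eq:tau_k} and Jensen's inequality, and the identical PL-plus-recursion finish. Your closing estimate $(t+\tau_2)\,\mu\,\theta_j<(1+2\,\eps)\,L\,t\le 1$ is in fact spelled out more explicitly than the paper's brief remark, and correctly identifies why the stepsize bound is sharpened to $t\le\frac{1}{(1+2\,\eps)\,L}$.
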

\begin{proof}
Substituting $\theta_k$ (cf.~\cref{eq:case2_theta}) and $\beta_k$ into \cref{eq:c2srh_lastpart}, we get 
\begin{align}\label{eq:c2srh_ith}
 \expt\,&[\,\nabla  f(\wt\bx^{(k)})_i\,\wt{d}_i^{(k)}-\tfrac12 \,L\,(\wt{d}_i^{(k)})^2 \,]\nonumber\\
 &\ge\tfrac12 \,\theta_k\sum_{q\, \in\calw^{(k)}_i }\expt\,[\,(\,t\ \nabla f(\wt\bx^{(k)})_i+t\,\sigma^{(k)}_{1,i})\nabla f(\wt\bx^{(k)})_i\ \big| \ \nabla f(\wt\bx^{(k)})_i=q\,]\,P(\,\nabla f(\wt\bx^{(k)})_i=q\,)\nonumber\\
 &\qquad\qquad+\tfrac12 \,\theta_k\sum_{q\, \in\calw^{(k)}_i }\expt\,[\,\beta_k\, |\nabla f(\wt\bx^{(k)})_i |\,u\ \big| \ \nabla f(\wt\bx^{(k)})_i=q\,]\,P(\,\nabla f(\wt\bx^{(k)})_i=q\,)\nonumber\\
 &=\tfrac12 \,\theta_k\,\expt\,[\,t\ \nabla f(\wt\bx^{(k)})_i^2+t\ \nabla f(\wt\bx^{(k)})_i\,\sigma^{(k)}_{1,i}+\beta_k\,u\, |\nabla f(\wt\bx^{(k)})_i|\,]\nonumber\\
 &\simeq\tfrac12 \,\theta_k\,(t\,\expt\,[\,\nabla f(\wt\bx^{(k)})_i^2 \,]+\beta_k\,u\ \expt\,[\, |\nabla f(\wt\bx^{(k)})_i|\,]), \quad\text{from \cref{lem:csr_sigma1}}.
\end{align}
Substituting it into \cref{eq:csrexptd_caseII}, we obtain
\begin{align}
\expt\,[\,f(\wt\bx^{(k+1)})\,]\lesssim\,&\expt\,[\,f(\wt\bx^{(k)})\,]-\sum_{i=1}^{n}\tfrac12 \,\theta_k\,t\,\expt\,[\,\nabla f(\wt\bx^{(k)})_i^2 \,]-\sum_{i=1}^{n}\tfrac12 \,\theta_k\,\beta_k\,u\ \expt\,[\, |\nabla f(\wt\bx^{(k)})_i|\,]\nonumber\\
\le \,&\expt\,[\,f(\wt\bx^{(k)})\,]-\tfrac12 \,\theta_k\,(t\,\expt\,[\, \|\nabla f(\wt\bx^{(k)})\|^2 \,]+\beta_k\,u\ \expt\,[\, \|\nabla f(\wt\bx^{(k)})\|_1\,])\nonumber\\
\le \,&\expt\,[\,f(\wt\bx^{(k)})\,]-\tfrac12 \,\theta_k\,(t\,\expt\,[\, \|\nabla f(\wt\bx^{(k)})\|^2 \,]+\beta_k\,u\ \expt\,[\, \|\nabla f(\wt\bx^{(k)})\|\,]).\label{eq:c2_srh_1}
\end{align}
By Jensen's inequality, we have 
\[
\tau_2\le\frac{\beta_k\,u\ \expt\,[\, \|\nabla f(\wt\bx^{(k)})\|\,]}{\expt\,[\, \|\nabla f(\wt\bx^{(k)})\|^2 \,]}\le \frac{\beta_k\,u\ \expt\,[\, \|\nabla f(\wt\bx^{(k)})\|\,]}{\expt\,[\, \|\nabla f(\wt\bx^{(k)})\|\,]^2}=\frac{\beta_k\,u}{\expt\,[\, \|\nabla f(\wt\bx^{(k)})\|\,]}.
\]
Substituting \cref{eq:tau_2} into \cref{eq:c2_srh_1}, we obtain 
\[
\expt\,[\,f(\wt\bx^{(k+1)})\,]
\lesssim\expt\,[\,f(\wt\bx^{(k)})\,]-\tfrac12 \,\theta_k\,(t+\tau_2)\,\expt\,[\, \|\nabla f(\wt\bx^{(k)})\|^2 \,].
\]
Substituting \cref{eq:PLineq_expec} into the right-hand side and expanding the recursion $k$ times, we obtain the desired result with $0<\tau_2 < 2\,t\,\eps$. Furthermore, on the basis of definition of $\srh$ that $0<\varepsilon<1$ and the property $0<\theta_k<2$, it is easy to check that $0<(t+\tau_2)\,\mu\,\theta_j<1$, which concludes the proof.
\end{proof}
Similarly to $\csr$ (cf.~\cref{prop:convergencerate_c2_csr}), under the condition $\theta_j>0$, the employment of $\srh$ ensures linear convergence only up to a neighborhood of the optimal point.
In general, \cref{prop:convergencerate_c2_srh} shows that a larger $\eps$ in $\srh$, i.e., a larger rounding bias (cf.~\cref{sec:rounding}), allows for larger values of $\beta_j$ and of $\tau_2$. Comparing \cref{eq:conv-expcsrplstageII} and \cref{eq:fsrh_caseII}, a larger value of $\tau_2$ leads to a stricter bound on the convergence rate. Under the same condition as that for $\csr$, the convergence achieved by $\srh$ may be faster than that achieved by $\csr$. These claims can be validated using numerical experiments; see, e.g., Figures \ref{fig:differentscalea}, \ref{fig:f2fix_a} and \ref{fig:logisre}. In the next section, we analyze the convergence behavior of GD for Case III. 

\subsubsection{Case III}\label{sec:convergence analysisStageIII} 
In this case, we consider the updating rule that satisfies \cref{eq:d_c3}. The updating vector consists of entries $\wt{d}_i^{(k)}$ that satisfy either \cref{eq:d_c1} or \cref{eq:d_c2}. Our analysis demonstrates that the convergence rate is bounded between the rates observed in Case I and Case II. For the next result, we need to introduce the following quantity: 
\begin{align}\label{eq:alpha_k}
 \alpha_k:= & \frac{\sum_{\calc_2} t\,(\theta_k-1)\,\expt\,[\,\nabla f(\wt\bx^{(k)})_i^2 \,]}{\expt\,[\, \|\nabla f(\wt\bx^{(k)})\|^2 \,]},
\end{align}
where $\calc_2$ (cf.~\cref{eq:cond_case3}) is the set defined in the condition of Case III.
It is easy to see that $|\alpha_k| \le t\, |\theta_k-1|$. When $0<\theta_k< 2$ (cf.~\cref{eq:case2_theta}), then $-t<\alpha_k<t$.
Informally, $\alpha_k$ tends to be positive when the gradient is large (at the beginning of the process), and negative when the gradient is close to 0. 

\begin{proposition}\label{prop:convergencerate_c3_csr}
 Under \cref{assum:universal assumption} and the condition of Case III \cref{eq:cond_case3}, suppose that both $\bsigma_1$ and $\bsigma_2$ in \cref{eq:gd_fp} are obtained using $\csr$ with fixed stepsize $t$ such that $t\le \frac{1}{4\,L}$. If $\theta_j>0$ in \cref{eq:case2_theta} for all $j$, then with $\alpha_j$ as in \cref{eq:alpha_k}, we have
\begin{equation}
\expt\,[\,f(\wt\bx^{(k)})-f^*\,]\lesssim\,\prod_{j=0}^{k-1}(1-\mu\,(t+\alpha_j))\,(f(\bx^{(0)})-f^*),
\label{eq:conv-expcsrplstageIII}
\end{equation}
 and $0<\mu(t+\alpha_j)<1$, for $j=0,\dots, k-1$.
\end{proposition}
\begin{proof}
Because of \cref{eq:d_c3}, we have
\begin{small}
 \begin{align}\label{eq:f_c3}
\expt\,[\,f(\wt\bx&^{(k+1)})\,]\le \expt\,[\,f(\wt\bx^{(k)})\,]-\expt\,[\,\nabla f(\wt\bx^{(k)})^T\wt{\bd}^{(k)}\,]+\tfrac12 \,L\,\expt\,[\, \| \wt{\bd}^{(k)}\|^2 \,]\\
=\,&\expt\,[\,f(\wt\bx^{(k)})\,]-\!\!\sum_{i\in\calc_1\cup\,\calc_2} \expt\,[\,t\ \nabla f(\wt\bx^{(k)})_i^2+t\ \nabla f(\wt\bx^{(k)})_i\,\sigma^{(k)}_{1,i}+\nabla f(\wt\bx^{(k)})_i\,\sigma^{(k)}_{2,i}-\tfrac12 \,L\,(\wt{d}_i^{(k)})^2 \,].\nonumber
\end{align}
\end{small}\noindent
When $\csr$ is used to evaluate $\sigma^{(k)}_{1,i}$ and $\sigma^{(k)}_{2,i}$, on the basis of \cref{eq:expectedfxk1} and \cref{eq:c2csr_ith}, we have 
\begin{align}\label{eq:f_c3_csr}
 \expt\,[\,f(\wt\bx^{(k+1)})\,]\lesssim\,&\expt\,[\,f(\wt\bx^{(k)})\,]-\sum_{i\in\calc_1}\tfrac12 \,t\,\expt\,[\,\nabla f(\wt\bx^{(k)})_i^2 \,]-\sum_{i\in\calc_2}\tfrac12 \,\theta_k\,t\,\expt\,[\,\nabla f(\wt\bx^{(k)})_i^2 \,]\nonumber\\
 =\,&\expt\,[\,f(\wt\bx^{(k)})\,]-\sum_{i=1}^n\tfrac12 \,t\,\expt\,[\,\nabla f(\wt\bx^{(k)})_i^2 \,]-\sum_{i\in\calc_2}(\tfrac12 \,\theta_k-\tfrac{1}{2})\,t\,\expt\,[\,\nabla f(\wt\bx^{(k)})_i^2 \,]\nonumber\\
 =\,&\expt\,[\,f(\wt\bx^{(k)})\,]-\tfrac12 \,t\,\expt\,[\, \|\nabla f(\wt\bx^{(k)})\|^2 \,]-\sum_{i\in\calc_2}\tfrac{1}{2}t(\theta_k-1)\,\expt\,[\,\nabla f(\wt\bx^{(k)})_i^2 \,].
\end{align}
Substituting \cref{eq:alpha_k} into \cref{eq:f_c3_csr} and expanding the recursion $k$ times, we obtain \cref{eq:conv-expcsrplstageIII}. The properties $t\le \frac{1}{4\,L}\le \tfrac{1}{8 \,\mu}$ and $0<\theta_j<2$ imply that $1-2 \,\mu\,t>0$ and $-t<\alpha_j<t$, which indicates that $1-\mu\,(t+\alpha_j)>1-2 \,\mu\,t>0$.
\end{proof}
Equation \cref{eq:f_c3_csr} can be seen as the combination of \cref{eq:expectfxk_srh} with $\rho_k=0$ and \cref{eq:c2csr_ith}. From \cref{eq:f_c3_csr}, it can be seen that when $0<\theta_k<2$, we obtain monotonicity on the objective function. When $\theta_k=1$, we may achieve similar convergence to Case I, i.e., 
\[
 \expt\,[\,f(\wt\bx^{(k+1)})\,]\lesssim \expt\,[\,f(\wt\bx^{(k)})\,]-\tfrac12 \,t\,\expt\,[\, \|\nabla f(\wt\bx^{(k)})\|^2 \,],
\]
which leads to the same convergence bound as in \cref{coro:convergencerate_c1_csr}, i.e., \[
\expt\,[\,f(\wt\bx^{(k)})-f^*\,]\lesssim(1-t\,\mu)^k\,(f(\bx^{(0)})-f^*).
\]
However, when $0<\theta_k<1$, we may obtain a negative $\alpha_k$ in \cref{eq:conv-expcsrplstageIII}, which implies a larger bound on the convergence rate than Case I. In particular, the convergence rate of \cref{eq:conv-expcsrplstageIII} depends on the number of $\wt{d}_i^{(k)}\in\calc_2$. When $0<\theta_k<1$ a larger number of $\wt{d}_i^{(k)}\in\calc_2$ causes a smaller value of $\alpha_k$ (cf.~\cref{eq:alpha_k}). On the contrary, when $\theta_k\ge1$ a larger number of $\wt{d}_i^{(k)}\in\calc_2$ results in a larger value of $\alpha_k$. When many $\wt{d}_i^{(k)}$ belong to $\calc_2$ and the components of the gradient vectors are close to zero ($\alpha_k$ is negative), we may achieve a slower convergence.

Next, we show that with $\srh$ we may obtain a stricter bound on the convergence rate than that obtained by $\csr$ (cf.~\cref{eq:conv-expcsrplstageIII}). The proof is available in Appendix~\ref{appendixB}.
\begin{proposition}\label{prop:convergencerate_c3_srh}
 Under \cref{assum:universal assumption} and the condition of Case III \cref{eq:cond_case3}, suppose that $\bsigma_1$ and $\bsigma_2$ in \cref{eq:gd_fp} are obtained using $\csr$ and $\srh$, respectively, with fixed stepsize $t$ such that $t\le \frac{1}{4\,L}$. If $\theta_j>0$ in \cref{eq:case2_theta} for all $j$, then with $\tau_2$ as in \cref{eq:tau_2}, we have
\begin{equation}
\expt\,[\,f(\wt\bx^{(k)})-f^*\,]\lesssim\,\prod_{j=0}^{k-1}(1-\mu\,(t+\alpha_j+\theta_j\,\tau_2))\,(f(\bx^{(0)})-f^*),
\label{eq:conv-expsrhplstageIII}
\end{equation}
and $\tau_2 \in (0,2\,t\,\eps)$ with the values $\alpha_j$ defined in \cref{eq:alpha_k}.
\end{proposition}
Again, with $\srh$, we obtain a stricter bound on the convergence rate than that obtained by $\csr$ under Case III (comparing \cref{eq:conv-expcsrplstageIII} and \cref{eq:conv-expsrhplstageIII}). This comparison shows that a faster convergence may be obtained when using $\srh$ in place of $\csr$ for $\bsigma_2$. Equation \cref{eq:conv-expsrhplstageIII} shows that the convergence bound obtained by $\srh$ depends on the values of $\eps$ and $t$. Therefore, a larger $\eps$ in $\srh$ or a larger $t$ can result in faster convergence.

In the next section, we compare our convergence analysis of GD using fixed-point arithmetic to that using floating-point arithmetic. 

\section{Comparison with floating-point arithmetic}\label{sec:compar}
In this section, we demonstrate the different behaviors of our rounding method $\srh$ in fixed-point and floating-point arithmetic. When using $\srh$ to implement GD in fixed-point arithmetic, the rounding bias on each entry of iterate is in the same scale ($u$). However, in floating-point arithmetic, owing to the fact that numbers are not uniformly distributed, the rounding bias on each entry of the iterate also exhibits the same behavior. In particular, when implementing GD with limited precision in floating-point arithmetic, using $\srh$, the updating stepsize is adaptive to the entries of each iterate. 

Now let us dive into the difference between fixed-point and floating-point computation for different rounding methods. We start by comparing their expressions of the updating vector $\wt{\bd}^{(k)}$ of GD in fixed-point and floating-point number formats. In accordance with the argument in \cite[Sec.~4.2]{xia2022float} that the small numbers can be represented by the subnormal numbers in floating-point number formats, therefore stagnation of GD happens when evaluating the subtraction in \cref{eq:gd}. However, the stagnation of GD happens when evaluating the multiplication of $t$ and gradients in fixed-point number formats. To have a clear observation, we assume that stagnation starts from the $k$th iteration step and all the computations before stagnation are exact (errors are negligible) for both fixed-point and floating-point arithmetic, i.e., $\mathrm{fi}(\nabla f(\bx^{(k)})\,)=\nabla f(\bx^{(k)})$ and $\mathrm{fl}(t\,\mathrm{fl}(\nabla f(\bx^{(k)})\,)\,)=t\ \nabla f(\bx^{(k)})$ for fixed-point and floating-point arithmetic, where $\mathrm{fl}(\cdot)$ denotes a general rounding operator that converts a real number $x\in \R$ to the floating-point number representation $\wh{x}$. 

When GD stagnates with $\rn$ for the $i$th coordinate of $\bx^{(k)}$ and $\wt{d}_i^{(k)}\ne 0$, we have a fixed magnitude of $\wt{d}_i^{(k)}$ using stochastic rounding for fixed-point number representation, given by
\[
 \wt{d}_i^{(k)}=u\ \sign(\nabla f(\bx^{(k)})_i).
\]
 Under the same assumption, we have an adaptive magnitude of $\wt{d}_i^{(k)}$ using stochastic rounding methods for floating-point number representation. On the grounds of the model of floating-point operation \cite[(2.4)]{higham2002accuracy}, we have the following value for $\wt{d}_i^{(k)}$:
 \begin{small}
 \begin{align}\label{eq:dfl}
 \wt{d}_i^{(k)}=\begin{cases}\sign(\nabla f(\bx^{(k)})_i) \, | x_i^{(k)}|\,\delta_i^{(k)}+t\ \nabla f(\bx^{(k)})_i\,(1-\delta_i^{(k)}), &\text{if $\sign(x_i^{(k)}\,\nabla f(\bx^{(k)})_i)=\ph{-}1$},\\
\sign(\nabla f(\bx^{(k)})_i) \, | x_i^{(k)}|\,\delta_i^{(k)}+t\ \nabla f(\bx^{(k)})_i\,(1+\delta_i^{(k)}),&\text{if $\sign(x_i^{(k)}\,\nabla f(\bx^{(k)})_i)=-1$},
\end{cases}
\end{align}
 \end{small}\noindent
where $0<\delta_i^{(k)}<2 \,u$ denotes the relative error caused by evaluating $x_i^{(k)}-t\ \nabla f(\bx^{(k)})_i$ using stochastic rounding methods. For different stochastic rounding methods, $\wt{d}_i^{(k)}$ may be either zero or one of the expressions in \cref{eq:dfl} with different probability distributions. Therefore, GD behaves differently with different rounding methods and number representations. 

Now let us study the difference between different rounding methods in each number representation. For fixed-point arithmetic, we obtain the following updating vectors of GD when using $\csr$ and $\srh$.
\begin{proposition}\label{prop:dfixed} Assume the use of fixed-point arithmetic to implement GD, and that $\rn(\wt{d}_i^{(k)})=0$ for the $i$th component in $\wt{{\bf d}}^{(k)}$, i.e., $|\,t\ \nabla f(\bx^{(k)})_i\, |<\tfrac12 \,u$. If $\mathrm{fi}(\nabla f(\bx^{(k)})\,)=\nabla f(\bx^{(k)})$, then for $\csr$ we have 
 \begin{align}\label{eq:dficsr}
 \expt\,[\,\wt{d}_i^{(k)}\ \big|\ t\ \nabla f(\wt\bx^{(k)})_i]= t\ \nabla f(\wt\bx^{(k)})_i,
\end{align}
and for $\srh$ we have 
\begin{align*}
 \expt\,[\,\wt{d}_i^{(k)}\ \big| \ t\ \nabla f(\bx^{(k)})_i]=\begin{cases}
 u\,\sign(\nabla f(\bx^{(k)})_i), &\text{if $\eps\ge1-\frac{|\,t\ \nabla f(\bx^{(k)})_i\, |}{u}$,}
 \\t\ \nabla f(\bx^{(k)})_i+\eps\,u\,\sign(\nabla f(\bx^{(k)})_i), &\text{otherwise}
 .\end{cases}
\end{align*}
\end{proposition}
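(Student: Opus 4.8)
The plan is to exploit the exact-gradient hypothesis so that the only randomness left in the $i$th coordinate of $\bd^{(k)}$ is the rounding of a single product. Concretely, since $\fix(\nabla f(\bx^{(k)}))=\nabla f(\bx^{(k)})$ we have $\boldsymbol{\sigma}_1^{(k)}=\zero$, so recalling $\bd^{(k)}$ from \refeq{eq:dk} the component collapses to $d_i^{(k)}=\fix(t\,\nabla f(\bx^{(k)})_i)$ with $\fix\in\{\csr,\srh\}$ realized through the scheme \refeq{eq:strounding}. Writing $x:=t\,\nabla f(\bx^{(k)})_i$, the assumption $\rn(d_i^{(k)})=0$ reads $|x|<\tfrac12 u$, which localizes $x$ in $(-\tfrac12 u,\tfrac12 u)$ and pins down the floor: $\lfloor x\rfloor=0$ when $x\ge0$ and $\lfloor x\rfloor=-u$ when $x<0$. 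Hence $\fix(x)$ has two-point support $\{0,u\}$ for $x\ge0$ and $\{-u,0\}$ for $x<0$, and the conditional expectation is $u\,(1-p(x))$ and $-u\,p(x)$, respectively.

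First I would dispatch $\csr$. Substituting $p_0(x)=1-(x-\lfloor x\rfloor)/u$ from \refeq{eq:csr} gives mass $x/u$ on $u$ when $x\ge0$ and mass $-x/u$ on $-u$ when $x<0$, so in either sign case the expectation collapses to exactly $x=t\,\nabla f(\bx^{(k)})_i$. This is simply the unbiasedness of $\csr$ made explicit on the two-point support and yields \refeq{eq:dficsr}.

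Next I would treat $\srh$, which is the delicate part because of the clipping map $\varphi$ in \refeq{eq:epsilon}. With the same supports the conditional expectation is $u\,(1-p_\eps(x))$ for $x>0$ and $-u\,p_\eps(x)$ for $x<0$, where $p_\eps(x)=\varphi(\eta(x,\eps))$. Evaluating $\eta(x,\eps)=1-(x-\lfloor x\rfloor)/u-\sign(x)\,\eps$ with the two floors gives $\eta=1-x/u-\eps$ for $x>0$ and $\eta=|x|/u+\eps$ for $x<0$. The key step is to split according to whether $\varphi$ saturates: for $x>0$ the argument stays $\le1$ and reaches $0$ exactly when $\eps\ge1-x/u$, while for $x<0$ it stays $\ge0$ and reaches $1$ exactly when $\eps\ge1-|x|/u$. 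In the saturated regime one gets $u\,\sign(\nabla f(\bx^{(k)})_i)$, and in the non-saturated regime $\varphi$ acts as the identity, producing $x+\eps\,u\,\sign(\nabla f(\bx^{(k)})_i)$.

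Finally I would unify the two sign cases via $|x|=|t\,\nabla f(\bx^{(k)})_i|$ and $\sign(x)=\sign(\nabla f(\bx^{(k)})_i)$; the saturated and non-saturated expressions are then precisely the two branches of the claimed formula, governed by the single threshold $\eps\ge1-|t\,\nabla f(\bx^{(k)})_i|/u$. The only genuine subtlety is the bookkeeping with $\varphi$: one must check the opposite saturation never occurs, i.e. $\eta\ge1$ is impossible for $x>0$ (it would need $x/u+\eps\le0$) and $\eta\le0$ is impossible for $x<0$ (it would need $|x|/u+\eps\le0$), both of which fail since $\eps>0$. The degenerate case $x=0$ is consistent as well, since then $\sign(\nabla f(\bx^{(k)})_i)=0$ and both branches give $d_i^{(k)}=0$. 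This completes the argument.
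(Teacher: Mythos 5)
Your proof is correct and follows essentially the same route as the paper's: a direct computation of the conditional expectation of the two-point rounding distribution $\{0,u\}$ or $\{-u,0\}$ fixed by $\lfloor x\rfloor$ with $|x|<\tfrac12 u$, using the explicit probabilities \refeq{eq:csr} and \refeq{eq:srh}. You are in fact slightly more thorough than the paper, whose proof only works out the unsaturated regime $0<p_\eps<1$ (expressing $p_\eps$ as a shift of $p_0$ and invoking $\csr$'s unbiasedness) and leaves the saturated branch $\eps\ge 1-\vert\,t\,\nabla f(\bx^{(k)})_i\,\vert/u$ implicit; your explicit verification that $\varphi$ saturates exactly at that threshold, never on the opposite side, and that the $x=0$ case is consistent, supplies the first case of the claimed formula directly.
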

The proof is available in Appendix~\ref{appendixB}. From \cref{prop:dfixed}, we observe that $\csr$ and $\srh$ result in an average updating magnitude of GD that is similar to and slightly larger than that in exact computation, respectively. In fixed-point arithmetic, the updating length of GD is independent of its iterate $\bx^{(k)}$  for both $\csr$ and $\srh$. Additionally, the rounding bias introduced by $\srh$ only depends on $u$ and has the same sign vector as its gradient.

In floating-point arithmetic, signed-$\srh$ and $\srh$ generate the same magnitude of rounding bias but the rounding bias may have a different sign. When GD stagnates with $\rn$, we observe the following properties of the updating magnitudes of GD when using $\csr$ and signed-$\srh$. 
\begin{proposition}\label{prop:dfloat}
Assume the use of floating-point arithmetic to implement GD, and that $\rn(x_i^{(k)}-\wt{d}_i^{(k)})=x_i^{(k)}$, i.e., $\big|\frac{t\ \nabla f(\bx^{(k)})_i}{x_i^{(k)}}\big|<u$. If $\mathrm{fl}(t\,\mathrm{fl}(\nabla f(\bx^{(k)})\,)\,)=t\ \nabla f(\bx^{(k)})$, then for $\csr$ we have
 \begin{align}\label{eq:dflcsr}
 \expt\,[\,\wt{d}_i^{(k)}\ \big|\ x_i^{(k)}-t\ \nabla f(\bx^{(k)})_i\,]=t\ \nabla f(\wt\bx^{(k)})_i,
\end{align}
and for signed-$\srh$, when $0<p_{\eps s}<1$, we have
\begin{small}
 \begin{align}\label{eq:dflsrh}
 &\expt\,[\,\wt{d}_i^{(k)}\ \big|\ x_i^{(k)}-t\ \nabla f(\bx^{(k)})_i\,]\\
 &=\!\begin{cases}(1+\eps+\eps\,\delta_i^{(k)})\,t\ \nabla f(\bx^{(k)})_i+\sign(\nabla f(\bx^{(k)})_i)\, | x_i^{(k)}|\,\eps\,\delta_i^{(k)},&\!\!\!\text{if $\sign(x_i^{(k)}\,\nabla f(\bx^{(k)})_i)=-1,$}\\
 (1+\eps-\eps\,\delta_i^{(k)})\,t\ \nabla f(\bx^{(k)})_i+\sign(\nabla f(\bx^{(k)})_i)\, | x_i^{(k)}|\,\eps\,\delta_i^{(k)},&\!\!\!\text{if $\sign(x_i^{(k)}\,\nabla f(\bx^{(k)})_i)=\ph{-}1,$}\nonumber
 \end{cases}
\end{align}
\end{small}\noindent
 where $0<\delta_i^{(k)}<2 \,u$.
\end{proposition}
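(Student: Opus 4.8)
The plan is to treat the two rounding methods separately, exploiting the fact that, under the stagnation hypothesis $\rn(x_i^{(k)}-d_i^{(k)})=x_i^{(k)}$, the rounded value $\mathrm{fl}(x_i^{(k)}-t\,\nabla f(\bx^{(k)})_i)$ can take only two values: either $x_i^{(k)}$ itself (so that $d_i^{(k)}=0$) or the adjacent floating-point number (so that $d_i^{(k)}$ equals the nonzero quantity already recorded in \refeq{eq:dfl}). Writing $y:=x_i^{(k)}-t\,\nabla f(\bx^{(k)})_i$ and $d_i^{\star}$ for that nonzero outcome, every expectation conditioned on $y$ collapses to $\expt\,[\,d_i^{(k)}\mid y\,]=P(d_i^{(k)}\neq 0)\cdot d_i^{\star}$, and the whole computation is driven by how each scheme assigns $P(d_i^{(k)}\neq 0)$. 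This mirrors the structure of the proof of \refprop{prop:dfixed} in the fixed-point case.

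For $\csr$ I would simply invoke unbiasedness: since $d_i^{(k)}=x_i^{(k)}-\mathrm{fl}(y)$ and $\expt\,[\,\mathrm{fl}(y)\mid y\,]=y$ for $\csr$ (cf.~\refeq{eq:csr}), linearity yields $\expt\,[\,d_i^{(k)}\mid y\,]=x_i^{(k)}-y=t\,\nabla f(\bx^{(k)})_i$, which is exactly \refeq{eq:dflcsr}. This also pins down the baseline probability $P_{\csr}(d_i^{(k)}\neq 0)=t\,\nabla f(\bx^{(k)})_i/d_i^{\star}$, which lies in $(0,1)$ because $t\,\nabla f(\bx^{(k)})_i$ and $d_i^{\star}$ share their sign and $|t\,\nabla f(\bx^{(k)})_i|<|d_i^{\star}|$ in the stagnation regime; this is the quantity the biased scheme will perturb.

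For signed-$\srh$ the key step is to write its rounding probability as a shift of the $\csr$ one. In the linear regime $0<\wt{p}_\eps<1$ assumed in the statement, the floating-point analogue of \refeq{eq:ssrh} gives $\wt{p}_\eps=p_0-\sign(v)\,\eps$ for the probability of rounding toward $\lfloor y\rfloor$; choosing $\sign(v)$ so that the bias favors the nonzero update—which, depending on the four sign combinations of $x_i^{(k)}$ and $\nabla f(\bx^{(k)})_i$, is sometimes the round-down and sometimes the round-up outcome—yields in every case the clean identity $P_{\srh}(d_i^{(k)}\neq 0)=P_{\csr}(d_i^{(k)}\neq 0)+\eps$. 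Multiplying by $d_i^{\star}$ and using the $\csr$ value then gives $\expt\,[\,d_i^{(k)}\mid y\,]=t\,\nabla f(\bx^{(k)})_i+\eps\,d_i^{\star}$, and substituting the two expressions for $d_i^{\star}$ from \refeq{eq:dfl} (one for $\sign(x_i^{(k)}\,\nabla f(\bx^{(k)})_i)=+1$, one for $-1$) reproduces the two branches of \refeq{eq:dflsrh} after collecting the $t\,\nabla f(\bx^{(k)})_i$ and $\sign(\nabla f(\bx^{(k)})_i)\,|x_i^{(k)}|$ terms.

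The main obstacle I expect is sign bookkeeping rather than analysis: one must check, across all four combinations of the signs of $x_i^{(k)}$ and $\nabla f(\bx^{(k)})_i$, both that the nonzero rounded value is indeed the entry of \refeq{eq:dfl} with the stated $0<\delta_i^{(k)}<2\,u$, and that the customizable sign of $v$ can always be aligned with the descent direction so that the probability shift is $+\eps$ (never $-\eps$). This alignment—impossible for plain $\srh$, whose bias sign is tied only to $\sign(y)$—is exactly what the signed variant provides and is the reason the statement is phrased for signed-$\srh$; once it is established, the remaining simplifications are routine algebra.
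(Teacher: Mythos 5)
Your proposal is correct and follows essentially the same route as the paper's proof: derive the two possible outcomes $0$ and $d_i^{\star}$ from the floating-point model \refeq{eq:dfl}, use the unbiasedness of $\csr$ to get \refeq{eq:dflcsr} (equivalently, the baseline probability of the nonzero outcome), and observe that signed-$\srh$ with $v$ aligned to the gradient shifts that probability by $\eps$, yielding $\expt\,[\,d_i^{(k)}\mid y\,]=t\,\nabla f(\bx^{(k)})_i+\eps\,d_i^{\star}$ before substituting the two expressions for $d_i^{\star}$. Your abstraction via $P(d_i^{(k)}\neq 0)$ is merely a compact repackaging of the paper's explicit case-by-case computation, including the sign bookkeeping you correctly flag as the only delicate point.
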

The proof is available in Appendix~\ref{appendixB}. Since $t\ \nabla f(\bx^{(k)})_i$ is relatively small compared to $x_i^{(k)}$ in \cref{prop:dfloat},
 the magnitude of $\wt{d}_i^{(k)}$ is significantly affected by $|x_i^{(k)}|$. Consequently, $\expt\,[\,\wt{d}_i^{(k)}\ \big| \ x_i^{(k)}-t\ \nabla f(\bx^{(k)})_i\,]$ automatically adapts to the scale of $x^{(k)}_i$ for each entry of $\bx^{(k)}$ by using signed-$\srh$ in floating-point arithmetic, whereas this adaptivity does not hold for $\csr$. Comparing \cref{prop:dfixed,prop:dfloat}, the magnitude of the updating vector of GD is independent of $u$ or the current iterate when using $\csr$ in both fixed-point and floating-point arithmetic. In particular, when GD stagnates with $\rn$, $\csr$ may lead to a similar convergence behavior to exact computation on average. This means using a uniform stepsize $t$ for each entry of $\bx^{(k)}$. However, when using signed-$\srh$, the convergence speed mainly depends on the magnitude of the iterate of GD in floating-point arithmetic and is largely improved by the adaptive rounding bias along each coordinate of $\bx^{(k)}$, while the updating magnitude of GD is almost consistent along each coordinate of $\bx^{(k)}$ in fixed-point number representation.

In general, $\csr$ performs similarly in both floating-point and fixed-point arithmetic. $\srh$ and signed-$\srh$ have a better effect than $\csr$ on both floating-point and fixed-point arithmetic. Thanks to the non-uniform distributed number representation in floating-point arithmetic, the rounding bias caused by $\srh$ and signed-$\srh$ is also non-uniform, which results in a faster convergence than that of GD in exact arithmetic. For fixed-point arithmetic, the rounding bias caused by $\srh$ is also uniform as its number distribution, which may have a less obvious advantage in accelerating GD than in floating-point arithmetic. Despite the fact that floating-point number representation has a wider number representation owing to the non-uniform distributed number representation, many practical low-cost embedded microprocessors and microcontrollers (e.g., FPGA designs) are limited to finite-precision signal processing using fixed-point arithmetic due to the cost and complexity of floating-point hardware. In both number representations, the implementation of GD using $\srh$ shows better performance compared to $\csr$. 

\section{Simulation studies}\label{sec:simulation}
In this section, we validate our theoretical results by applying GD with limited-precision number formats on several case studies: a quadratic problem, the two-dimensional Rosenbrock's function, Himmelblau's function, a binary logistic regression model (BLR), and a four-layer fully connected NN. We numerically show that Rosenbrock's function satisfies the PL condition in a certain domain and achieves linear convergence of GD. Himmelblau's function is used to demonstrate that by using stochastic rounding, GD can converge exactly to the optimal point instead of a nearby neighborhood. BLR is proven to satisfy the PL condition by \cite{karimi2016linear}, and we validate our theoretical results by training a BLR using different rounding methods and learning stepsizes. Finally, we test the convergence of GD on training a four-layer NN and show that, despite this example not satisfying the PL condition, the numerical results are similar to those observed in the other experiments. 

As discussed in \cref{sec:magnit}, the stagnation of GD is normally caused by $\bsigma_2$ in \cref{eq:gd_fp}. Therefore, for each simulation study, we employ two different number formats: one for evaluating $\bsigma_2$ and the other for the working precision, where all the other operations are implemented using this working precision.
Note that in all case studies, apart from the four-layer NN, we use low-precision fixed-point number formats, with their implementation relying on the Matlab \textsf{fi} toolbox.
Due to the slow computation speed of \textsf{fi} toolbox, in the four-layer NN case, we only evaluate $\bsigma_2$ with limited-precision fixed-point numbers and use single-precision computations for the other operations.
Note that selecting the best number format for a procedure that relies on fixed-point arithmetic operations is problem-dependent and, to the authors' knowledge, there is no systemic way to select a priori the best number format for a given problem. In the prepossessing procedure, we test various values for QI and QF in different case studies. 
For convenience, the values of the QI are chosen to prevent overflow problems for the case studies like Quadratic, Rosenbrock, and Himmelblau functions. For more challenging examples like BLR and Four-layer NN, we choose the number of integers in such a way that it provides a similar result to that of single-precision computation.
Nevertheless, Overflow is generally less critical than underflow when considering the vanishing gradients problem. Here, we focus on showing that the vanishing gradient problems can be eliminated by the use of stochastic rounding methods. The selected values of QI are reported in \cref{tab:qformat}. 

The considered values for QF are taken as small as possible in such a way that it allows us to easily observe the influence of rounding errors and validate our theoretical results. We start by setting $u$ to $2^{-12}$, and then we increase or decrease $u$ until we achieve the largest value of $u$ that guarantees the convergence. The considered number formats are summarized in \cref{tab:qformat}. Additionally, the starting point $\bx^{(0)}$ for all the numerical studies in this section is chosen as either: (1) integers that can be exactly represented in the designed fixed-point arithmetic, e.g., Quadratic, Rosenbrock, and Himmelblau functions, or (2) a rounded version of the original starting point obtained using the rounding to the nearest method, e.g., BLR and the four-layer fully connected NN where the MNIST data are normalized. In the latter case, the rounded $\bx^{(0)}$ serves as the starting point for GD, and different rounding methods are then applied to study their impact on the convergence of the algorithm.
\begin{table}[h!]
\caption{{\footnotesize Number formats of fixed-point representation for different case studies. The third column indicates how the multiplication $t \, \nabla f(\bx_k)$ is carried out.}}\label{tab:qformat}
\begin{tabular}{lll}
\toprule 
 Case study & Working precision & Multiplication \\ \midrule \rule{0pt}{2.3ex}
Quadratic & Q$26.6 \ (u=2^{-6})$ & Q$26.6 \ (u=2^{-6})$ \\
Rosenbrock & Q$8.10\  (u=2^{-10})$ & Q$12.6 \ (u=2^{-6})$ \\
Himmelblau & Q$8.8\  (u=2^{-8})$& Q$8.8\  (u=2^{-8})$\\
BLR & Q$15.8 \ (u=2^{-8})$ & Q$15.8 \ (u=2^{-8})$ and Q$15.6\  (u=2^{-6})$ \\
Four-layer NN & Binary$32 \ (u=2^{-23})$& Q$8.8 \ (u=2^{-8})$ 
\\ \botrule
\end{tabular}
\end{table}

For the cases of Rosenbrock's function and NN, we also compare the performances of low-precision fixed-point computation with those of low-precision floating-point computation. For all the simulation studies, we compare the results obtained by $\csr$ and $\srh$ to those obtained by $\rn$. The baselines are obtained by single-precision (\textsf{Binary32}) floating-point computation with $\rn$. We remark that there is a huge difference between the magnitude of the machine precision of single precision ($2^{-23}$) and those of other fixed-point number formats, which stay above $2^{-10}$. Therefore, we look at the comparison with the baselines as a surrogate of a comparison with exact arithmetic. 

\subsection{Quadratic problems}
In \cref{sec:compar}, we have demonstrated that $\csr$ performs very similarly in fixed-point and floating-point arithmetic, yet signed-$\srh$ and $\srh$ have a positive impact on floating-point arithmetic and a modestly positive impact on fixed-point arithmetic. To better understand this phenomenon, let us compare the convergence behavior of GD using different rounding methods with fixed-point and floating-point arithmetic. For instance, when optimizing a least-squares problem of the form $F_1(\bx)=\tfrac{1}{2} \, (\bx-\bx^*)^TA\,(\bx-\bx^*)$, where $A=\mathrm{diag}(100, 10, 10^{-3}, 10^{-3}, 10^{-3})$ and $\bx^*=[10^{-1}, 1, 10, 100, 1000]^T$, so that the entries of $\bx^*$ are of different scales. When using fixed-point arithmetic, the rounding bias introduced by $\srh$ can not take care of the scale of each entry of $\bx^*$, but it can be done when using $\srh$ in floating-point arithmetic. To achieve a descent updating direction, we employ signed-$\srh$ \cite[Def.~2.3]{xia2022float} in floating-point arithmetic to customize the sign of the rounding bias. 

\begin{figure}[htp]
\centering
\subfloat[Fixed-point arithmetic]{\label{fig:differentscalea}\includegraphics[width=0.37\textwidth]{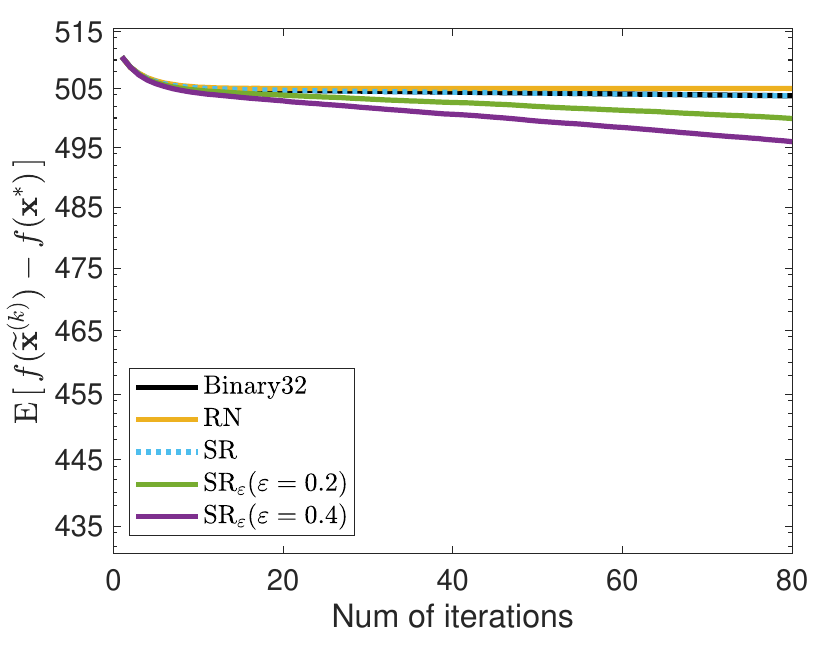}}\qquad
\subfloat[Floating-point arithmetic]{\label{fig:differentscaleb}\includegraphics[width=0.37\textwidth]{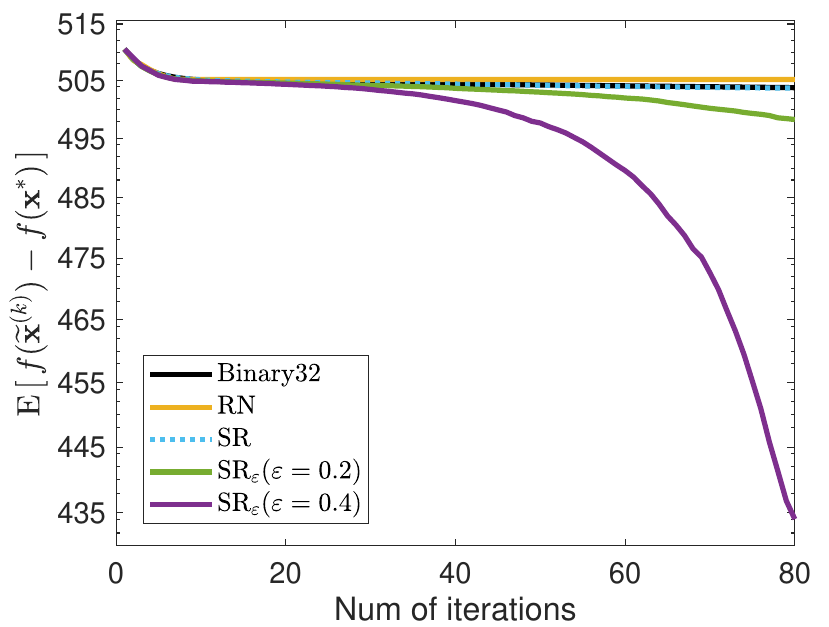}}
\caption{A quadratic problem: comparison of the objective values using different rounding scheme with fixed-point arithmetic (Q$26.6$) (a) and floating-point arithmetic (8 bits with 3 significant bits) (b); settings: stepsize $t=2^{-6}$ and $\bx^*=[10^{-1}, 1, 10, 100, 1000]^T \in \R^5$. }\label{fig:differentscale}
\end{figure}
\cref{fig:differentscale} shows the comparison of objective values when optimizing $F_1$ using GD with different rounding schemes using fixed-point arithmetic (\cref{fig:differentscalea}) and floating-point arithmetic (\cref{fig:differentscaleb}). Due to the overlapping between the results of $\csr$ and \textsf{Binary32}, we utilize a dashed line for $\csr$ (only) in \cref{fig:differentscale} for readability. For both figures, the baseline is obtained by optimizing $F_1$ using single-precision (32-bit) computation and $\rn$. In \cref{fig:differentscalea}, both $\bsigma_1$ and $\bsigma_2$ are obtained using the same rounding scheme. For instance, the yellow line in \cref{fig:differentscalea} shows that both $\bsigma_1$ and $\bsigma_2$ are obtained using $\rn$ with fixed-point number representation Q$26.6$. In \cref{fig:differentscaleb}, we apply 8-bit floating-point computation with 3 significant bits.
Comparing \cref{fig:differentscalea,fig:differentscaleb}, it can be observed that the convergence rates of GD using $\rn$ and $\csr$ are very similar in fixed-point and floating-point number formats. However, $\srh$ has extraordinarily different performance in these two number formats. \cref{fig:differentscalea} suggests that in fixed-point arithmetic, GD with the accumulated rounding bias in the descent direction maintains a linear convergence rate but with a smaller base compared to exact computation. On the other hand, using $\srh$ in floating-point arithmetic appears to yield an almost superlinear convergence rate. 

To better understand how $\theta_k$ \cref{eq:case2_theta} influences the convergence of GD with the employment of $\csr$ (\cref{prop:monoto_csr}) and $\srh$ (\cref{prop:monoto_srh}), let us compare the convergence behavior of GD using different rounding precisions. As an example, consider the optimization of a least-squares problem represented by  $F_4(\bx)=\tfrac{1}{2} \, (\bx-\bx^*)^TA\,(\bx-\bx^*)$, where $A=\mathrm{diag}(1, 1, 10^{-3}, 10^{-3}, 10^{-3})$ and $\bx^*=[1000, 100, 10, 100, 1000]^T$. This particular problem formulation is significant as it captures scenarios where the elements of $\bx^*$ exhibit different scales and the choice of $A$ prevents $L$ from being excessively large, which prevents $\theta_k$ from becoming consistently negative. 
To obtain an approximation of $\theta_k$, we replace the sets $\calw_i^{(k)}$ in \cref{eq:case2_theta} with the gradient entries at the $k$th step, encountered across the various simulations, and we set $L=1$. 

\cref{fig:u4} shows the approximation of $\theta_k$ (\cref{fig:Ou4}) and the value of $\expt\,[\,f(\mathbf{x}^{(k+1)})-f(\mathbf{x}^{(k)})\,]$ (\cref{fig:fu4}) over 40 simulations using fixed-point arithmetic with $u=2^{-4}$. 
Note that \cref{fig:fu4} is plotted on a logarithmic scale, with an additional offset of $10^{-16}$ added to the data. It can be seen that $\theta_k$ is always negative due to the large rounding precision $u$, which in turn causes $\expt\,[\,f(\mathbf{x}^{(k+1)})\,]=\expt\,[\,f(\mathbf{x}^{(k)})\,]$ for some iteration with the utilization of $\csr$, i.e., stagnation of GD. This observation is consistent with the discussion after \cref{eq:case2_theta} and \cref{prop:monoto_csr}, i.e., a negative $\theta_k$ may lead to stagnation or oscillation of GD. Furthermore, the employment of $\srh$ appears to guarantee the strict monotonicity of $\expt\,[\,f(\mathbf{x}^{(k)})\,]$ even if $\theta_k$ is negative. This phenomenon occurs because the definition of $\theta_k$ considers the smallest possible value of the gradient entries. As discussed after \cref{eq:case2_theta}, these small gradients can be compensated by the expected rounding errors intentionally designed in the descent direction when applying $\srh$. By increasing the rounding precision to $u=2^{-6}$, the results presented in \cref{fig:u6} show that $\theta_k$ remains consistently positive throughout the optimization process, with only a few exceptions in the last 10 iteration steps. Additionally, it shows that $\expt\,[\,f(\mathbf{x}^{(k+1)})-f(\mathbf{x}^{(k)})\,]$ is always positive for both $\csr$ and $\srh$, indicating a strict monotonicity of the objective function. Overall, these observations validate Propositions \ref{prop:monoto_csr} and \ref{prop:monoto_srh}, which highlight the strict monotonicity properties associated with the sign of $\theta_k$.

\begin{figure}[th!]
\centering
\subfloat[]{\label{fig:Ou4}\includegraphics[width=0.38\textwidth]{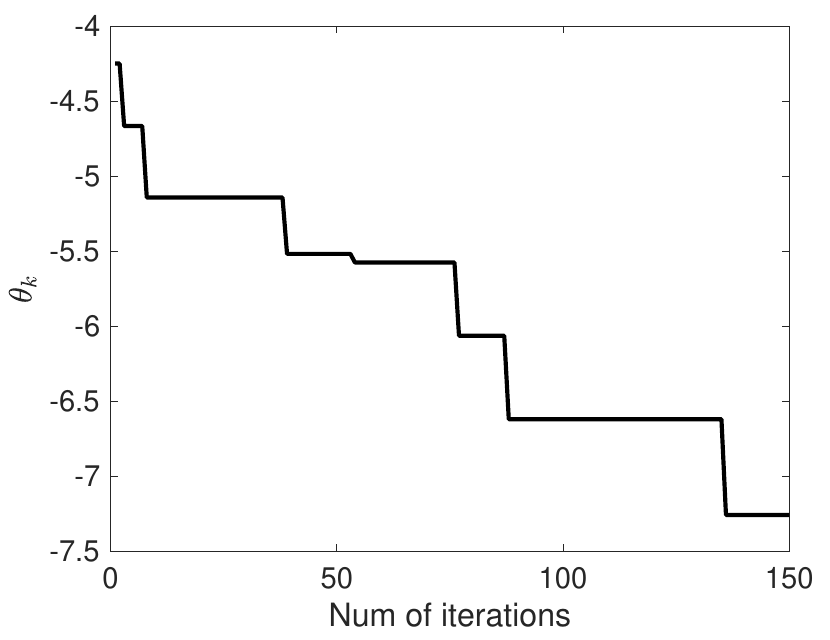}}\quad
\subfloat[ ]{\label{fig:fu4}\includegraphics[width=0.38\textwidth]{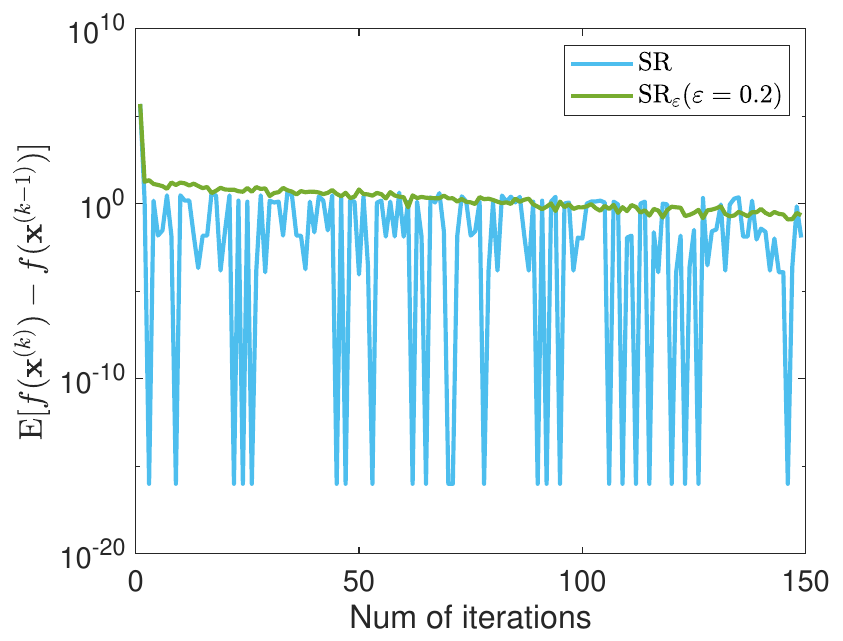}}
\caption{Approximation of $\theta_k$ (a) and $\expt\,[\,f(\mathbf{x}^{(k+1)})-f(\mathbf{x}^{(k)})\,]$ (b) over 40 simulations using different rounding schemes with fixed-point arithmetic (Q$28.4$); settings: stepsize $t=1$ and $\bx^{(0)}=\mathbf{0} \in \R^5$. }
\label{fig:u4}
\end{figure}
 \begin{figure}[th!]
\centering
\subfloat[]{\label{fig:Ou6}\includegraphics[width=0.38\textwidth]{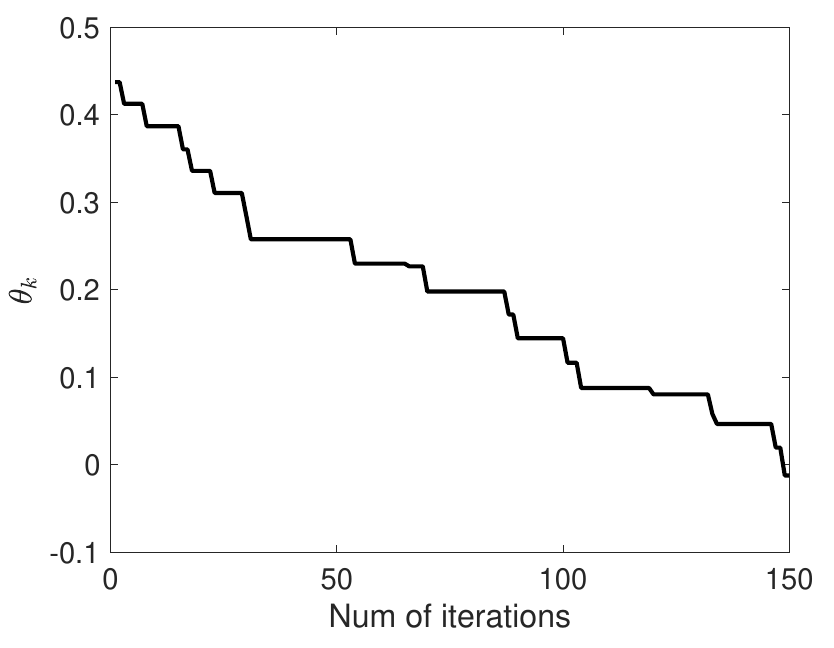}}\quad
\subfloat[ ]{\label{fig:fu6}\includegraphics[width=0.38\textwidth]{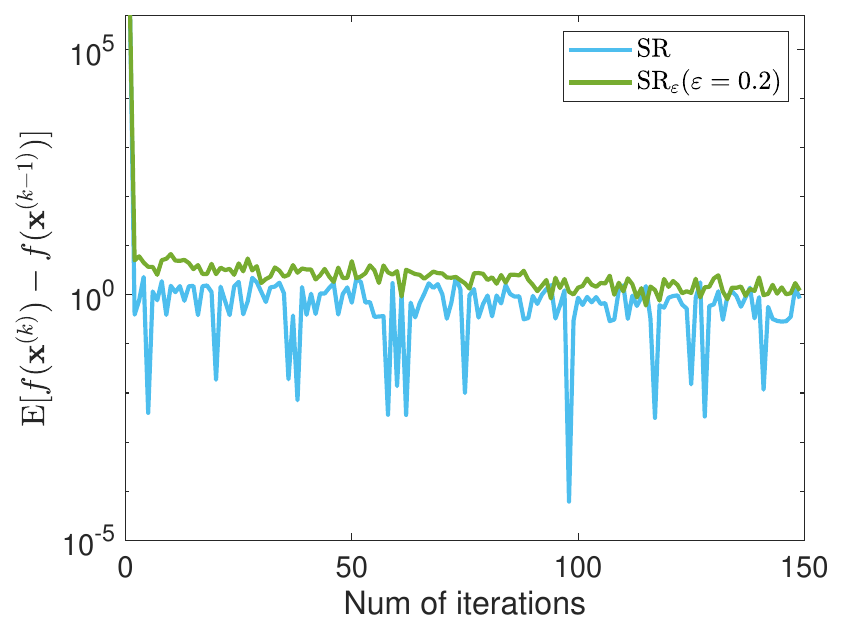}}
\caption{Approximation of $\theta_k$ (a) and $\expt\,[\,f(\mathbf{x}^{(k+1)})-f(\mathbf{x}^{(k)})\,]$ (b) over 40 simulations using different rounding schemes with fixed-point arithmetic (Q$26.6$); settings: stepsize $t=1$ and $\bx^{(0)}=\mathbf{0}\in \R^5$. }
\label{fig:u6}
\end{figure}

\subsection{Rosenbrock's function}\label{sec:rosenb}
Rosenbrock's function is a non-convex function, characterized by a wide almost flat valley around its optimum; it is defined as $F_2(\bx)=(1-x_1)^2+100\,(x_2-x_1^2)^2$. Although it does not satisfy \cref{eq:PLineq} for all $\bx$, it can be numerically checked that Rosenbrock's function satisfies \cref{eq:lpineq} and \cref{eq:PLineq} with $L=2610$ and $\mu=0.2$ for $\bx\in [0,2]^2$.
\begin{figure}[th!]
\centering
\subfloat[Gradient descent trajectories]{\label{fig:f2fix_a}\includegraphics[width=0.35\textwidth, trim=14mm 75mm 15mm 60mm, clip]{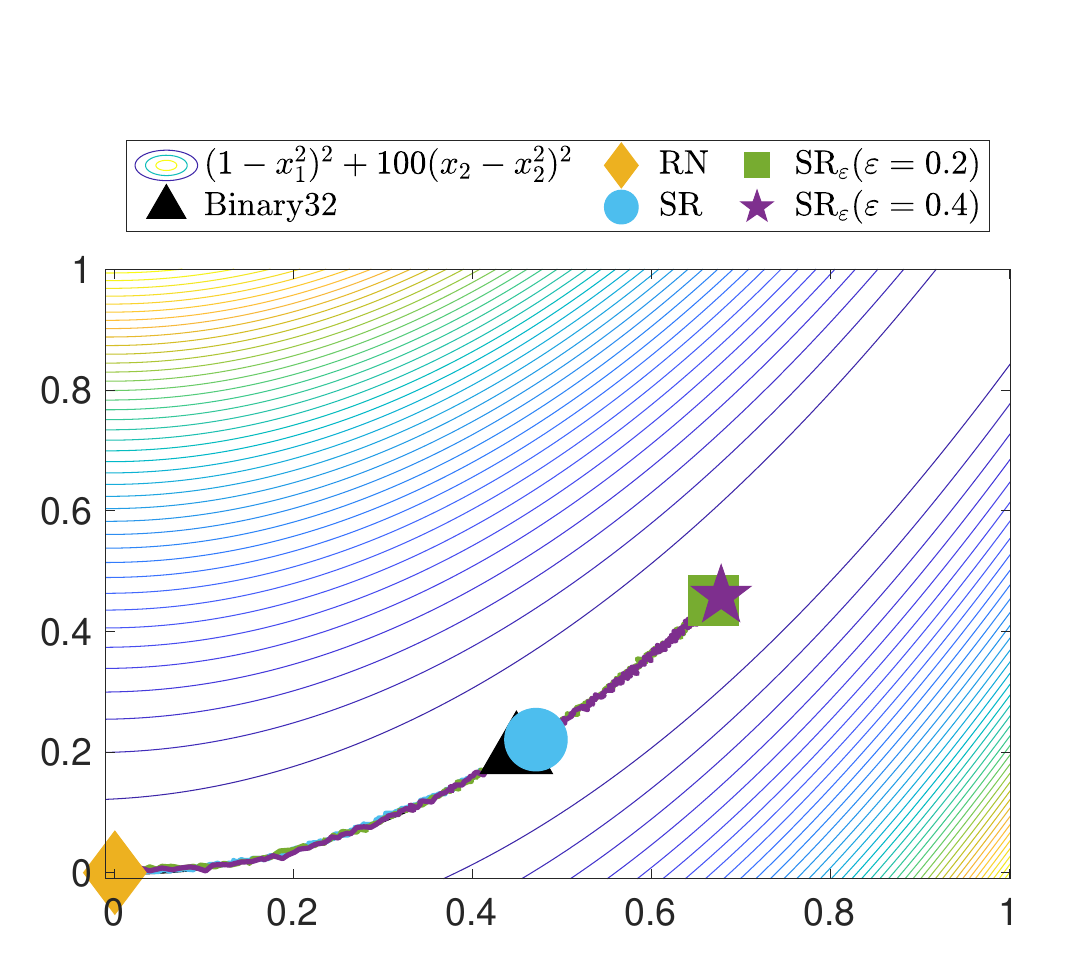}}\quad
\subfloat[Values of objective function]{\label{fig:f2fix_b}\includegraphics[width=0.35\textwidth]{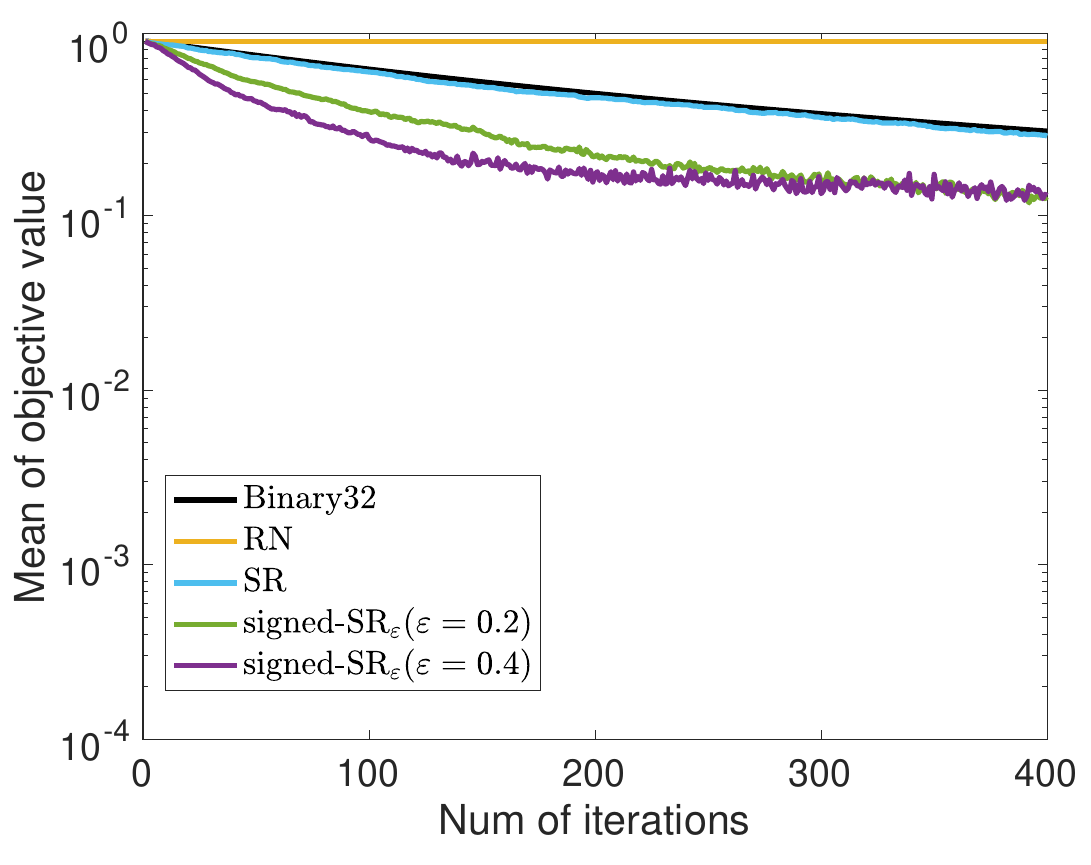}}
\caption{Rosenbrock's function in two dimensions: comparison of the gradient descent trajectories (a) and of the correspondent objective values (b); settings: stepsize $t=2^{-10}$ and Q$12.6$ for evaluating the multiplication of $t$ and gradient and Q$8.10$ for the remaining operations. }\label{fig:rosenbrockfix}
\end{figure}

\cref{fig:rosenbrockfix} shows the trajectories (\cref{fig:f2fix_a}) and the corresponding means of objective function evaluations over 30 simulations (\cref{fig:f2fix_b}) when implementing GD with fixed-point numbers formats and different rounding schemes. The endpoints of GD obtained by different rounding methods are represented by distinct shapes, e.g., a triangle for single-precision computation, a diamond for RN, a circle for SR, a square for $\srh(\varepsilon=0.2)$, and a pentagram for $\srh(\varepsilon=0.4)$. To better observe the influence of rounding errors on the convergence of GD, we apply low precision to compute the multiplication of $t$ and the gradients (Q$12.6$) and higher precision for the remaining operations to maintain the accuracy (Q$8.10$). The starting point is $\bx^{(0)}=[0,0]^T$. It can be seen from \cref{fig:f2fix_a} that GD relying on $\rn$ stagnates quite early because of the loss of gradient information. $\csr$ follows a very similar convergence to the baseline that validates \cref{coro:convergencerate_c1_csr}. When using $\srh$, increasing the value of $\eps$ leads to a faster convergence to the optimum, which matches the conclusion in \cref{them:convergencerate_c1_srh,prop:convergencerate_c2_srh,prop:convergencerate_c3_srh}. Furthermore, from \cref{fig:f2fix_b}, with $\eps=0.4$, the mean of the objective value at the $64$th iteration is similar to that obtained by single-precision at the $400$th iteration, i.e., 0.31. Due to the wide valley in Rosenbrock's function, a small deviation can cause a large oscillation in the objective function. Therefore, the small rounding bias introduced by $\srh$ will result in oscillations in the objective function's values. 

\begin{figure}[thp]
\centering
\subfloat[Gradient descent trajectories]{\label{fig:f2a}\includegraphics[width=0.35\textwidth, trim=14mm 75mm 15mm 60mm, clip]{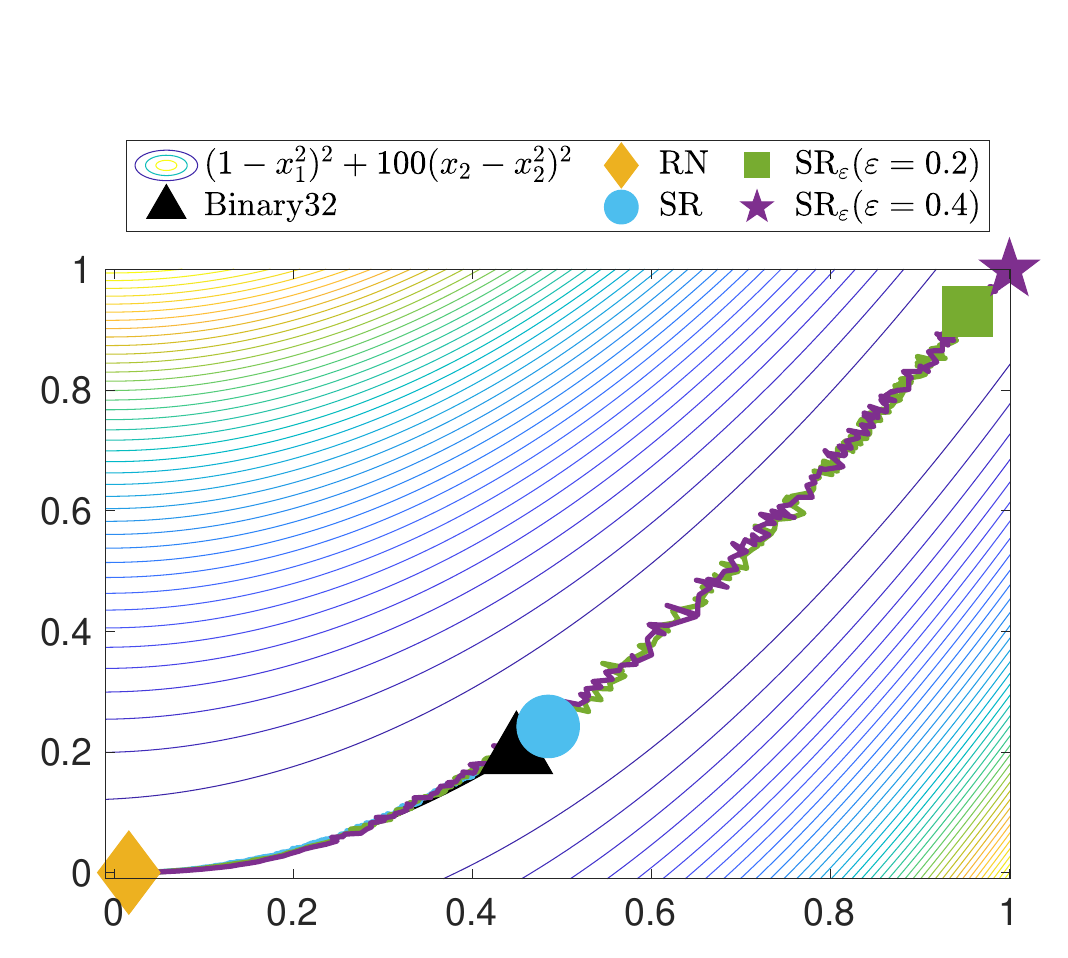}}\quad
\subfloat[Values of objective function]{\label{fig:f2b}\includegraphics[width=0.35\textwidth]{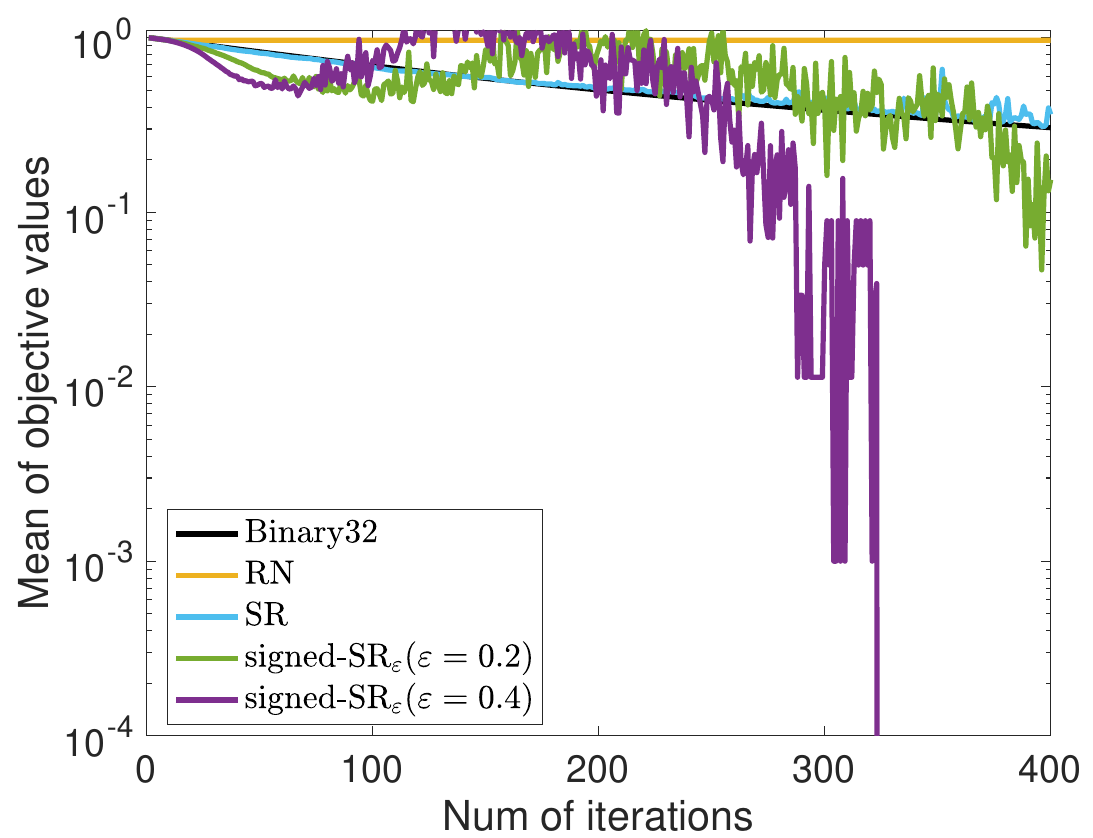}}
\caption{Rosenbrock's function in two dimensions: comparison of the gradient descent trajectories (a) and of the correspondent objective values (b); settings: stepsize $t=2^{-10}$ and \textsf{Binary8} \cite[Sec.~2.1]{xia2022float} with 3 significant bits.}\label{fig:rosenbrockfloat}
\end{figure}
Further, we repeat the same simulation study with an 8-bit floating-point number format with 3 significant digits. \cref{fig:rosenbrockfloat} shows the trajectories and the corresponding mean of objective function evaluations when different rounding methods are employed in floating-point arithmetic. Note that signed-$\srh$ introduces rounding biases with the same magnitudes as those obtained by $\srh$. From \cref{fig:rosenbrockfloat}, it can be observed that $\rn$ causes stagnation of GD, and $\csr$ follows a similar trajectory to that obtained by single-precision computation. Signed-$\srh$ significantly accelerates the convergence of GD due to the expected rounding errors in the descent direction. When $\eps=0.4$, despite the presence of oscillations in the mean of objective function values, GD converges to the optimum within a maximum of 324 iterations for all 30 simulations. 

\subsection{Himmelblau's function}\label{sec:Himmelblau's function}
Now, in line with the discussion in \cref{sec:signd}, we show with an illustrative example that when the optimal point $\bx^{*}$ can be represented exactly in the available number format, we have that $\|\wt\bx^{(k)} -\bx^{*}\|\to 0$. On the other hand, when $\bx^{*}$ cannot be represented exactly, GD converges to a neighborhood around $\bx^{*}$. The size of this neighborhood is determined by the value of $u$. Let us consider the minimization of Himmelblau's function $F_3(\bx)=(x_1^2+x_2-11)^2+(x_1+x_2^2-7)^2$, using single-precision computation and fixed-point number representation with Q$8.8$. We remark that Himmelblau's function does not satisfy the PL condition for all $x$, but this is inconsequential for this test. The function $f$ has four global minimizers $\bx^*_i$, for $i=1,\dots,4$, known in closed form. In particular, we focus on scenarios where GD converges towards two distinct points: $\bx^*_1=[3,\,2]^T$, which is exactly representable, and $\bx^*_4=[3.584428,\,-1.848126]^T$, which is not exactly representable. In \cref{fig:himmelblau_func} we report the averaged objective function values (over 40 runs) for different rounding schemes, i.e., $\rn$, $\csr$, and $\srh$. As anticipated, when the starting point is close to $\bx^*_4=[3.584428,\,-1.848126]^T$, all the rounding methods can only guide GD to a neighborhood around $\bx^*_4$. When GD converges to $\bx^*_1=[3,\,2]^T$, all the stochastic rounding methods retrieve the exact value $\bx^*_1$. However, $\rn$ causes GD to stagnate due to the problem of vanishing gradient. Additionally, in the context of utilizing $\csr$ to achieve convergence to $\bx^*_1$, \cref{fig:h4} shows the number of gradient components satisfying \cref{eq:condu_nostagnation}. It can be seen that with the progression of iteration steps, the number of gradient components satisfying \cref{eq:condu_nostagnation} gradually decreases.
This trend suggests a transition in the updating process, first from Case I to Case III, and finally to Case II, aligning with the discussion after \cref{eq:gd_fp}.
\begin{figure}[htp]
\centering
\subfloat[Contour plot]{\label{fig:h1}\includegraphics[width=0.4\textwidth]{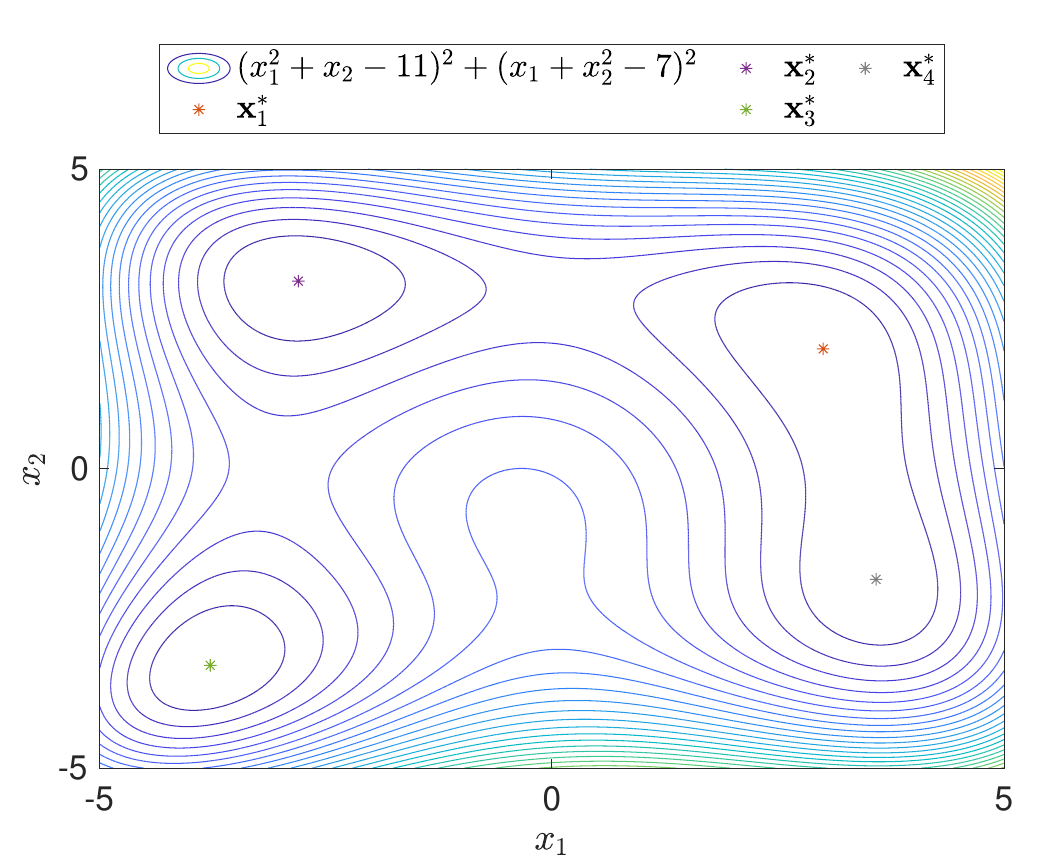}}\
\subfloat[GD converges to $\bx^*_4$]{\label{fig:h3}\includegraphics[width=0.42\textwidth]{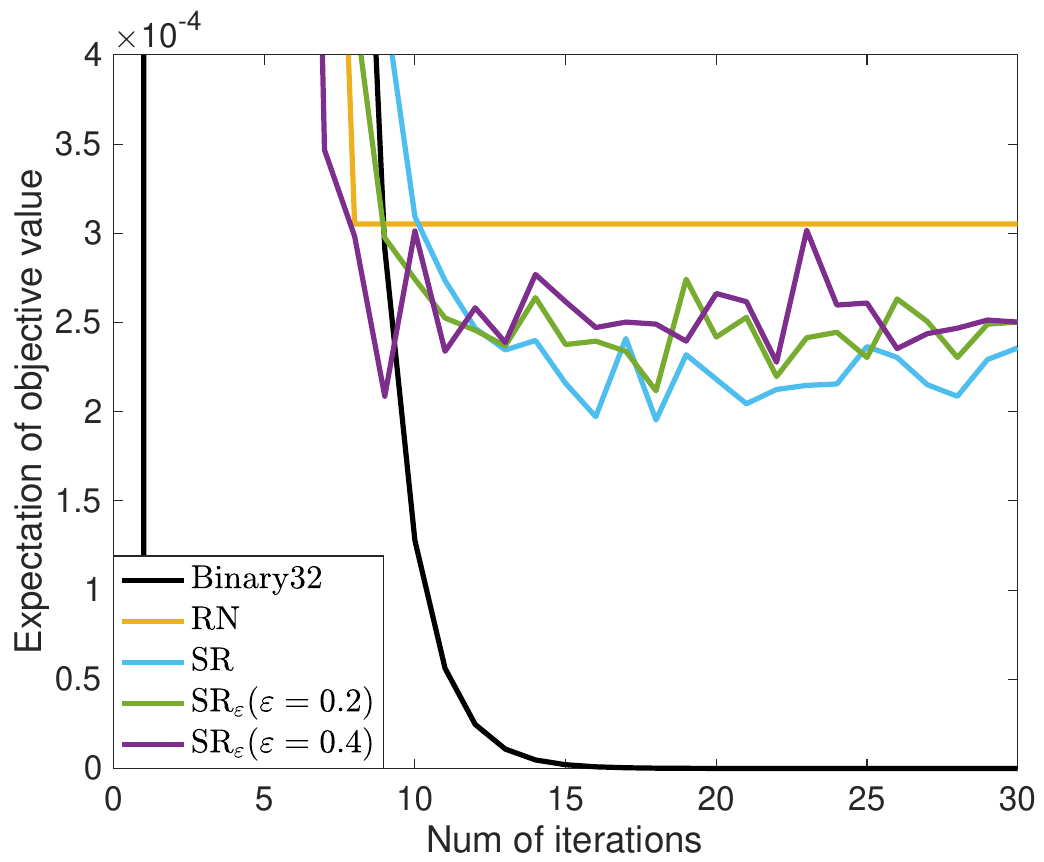}}\\
\subfloat[GD converges to $\bx^*_1$]{\label{fig:h2}\includegraphics[width=0.42\textwidth]{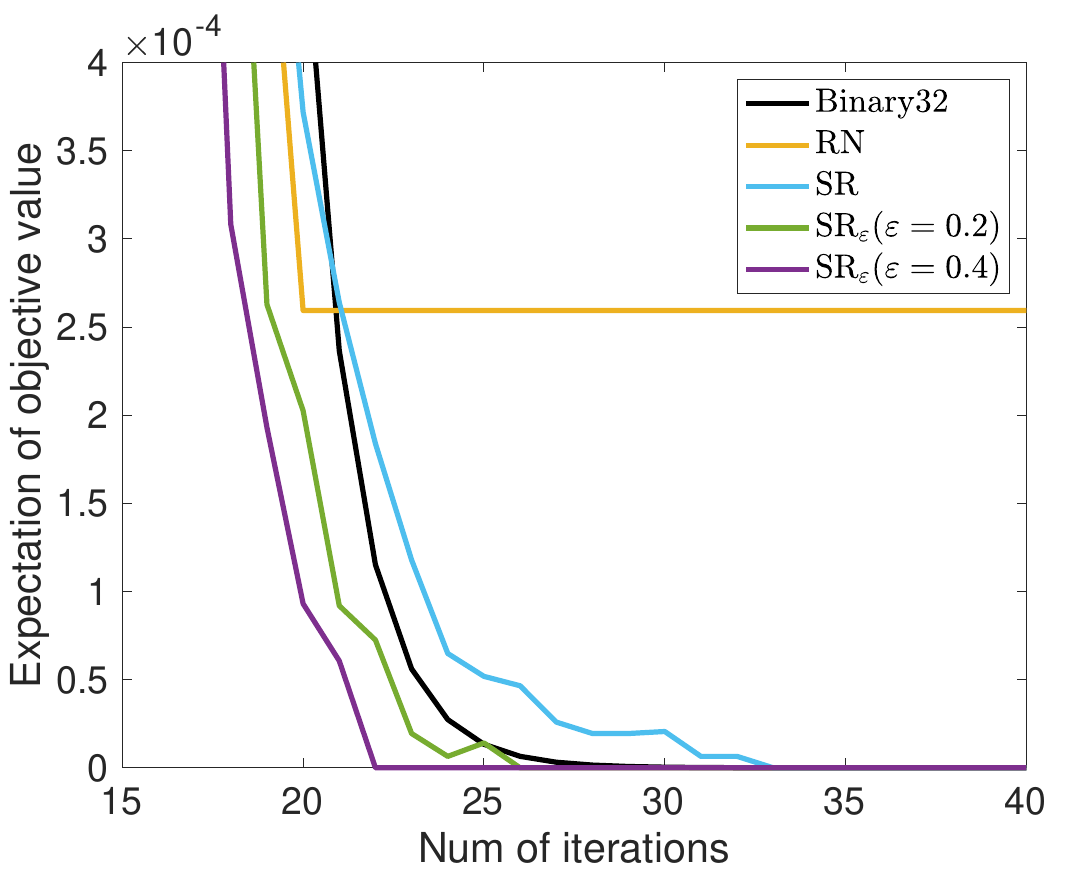}}\
\subfloat[GD converges to $\bx^*_1$]{\label{fig:h4}\includegraphics[width=0.42\textwidth]{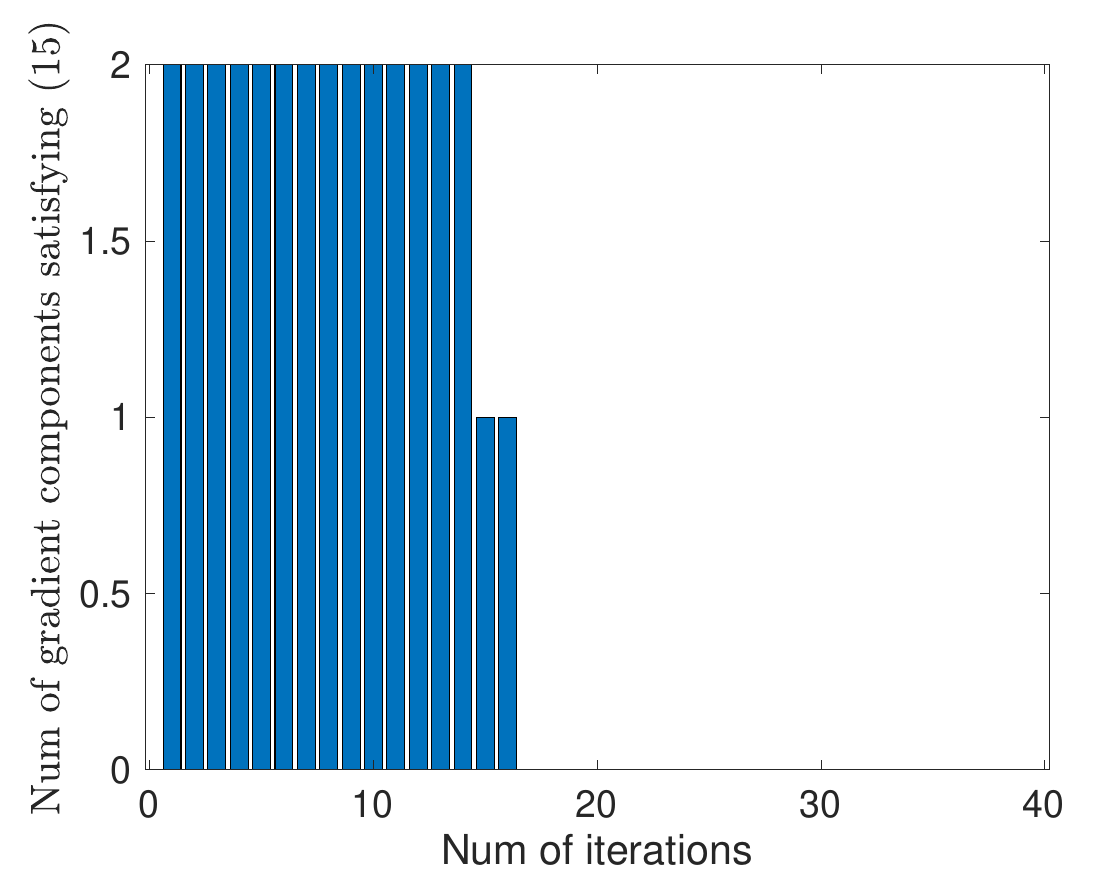}}
\caption{Himmelblau's Function implemented using with stepsize $t=0.012$ and fixed-point numbers Q$8.8$: contour plot with the 4 global minimizers $\bx^*_i$, with $i=1,\dots,4$ (a), comparison of the objective values when GD converges to $\bx^*_4$ (b) and comparison of the objective values when GD converges to $\bx^*_1$ (c); Number of gradient components satisfying \cref{eq:condu_nostagnation} when GD converges to $\bx^*_1$ using $\csr$.}\label{fig:himmelblau_func}
\end{figure}

\subsection{Binary logistic regression}
Let us study the impact of rounding errors on solving logistic regression problems. Logistic regression is commonly used to solve binary classification problems and is proven to satisfy the PL condition by \cite{karimi2016linear} over any compact set. We use BLR to classify the handwritten digits 3 and 8 in the MNIST database. As in \citep{gupta2015deep}, the pixel values are normalized to $[0,1]$. The default decision threshold is set at $0.5$ for interpreting probabilities to class labels since the sample class sizes are almost equal \citep{chen2006decision}. Specifically, class 1 is defined for those predicted scores larger than or equal to $0.5$. 

\begin{figure}[t!]
\centering
\subfloat[Q15.8]{\label{fig:regf2}\includegraphics[width=0.35\textwidth]{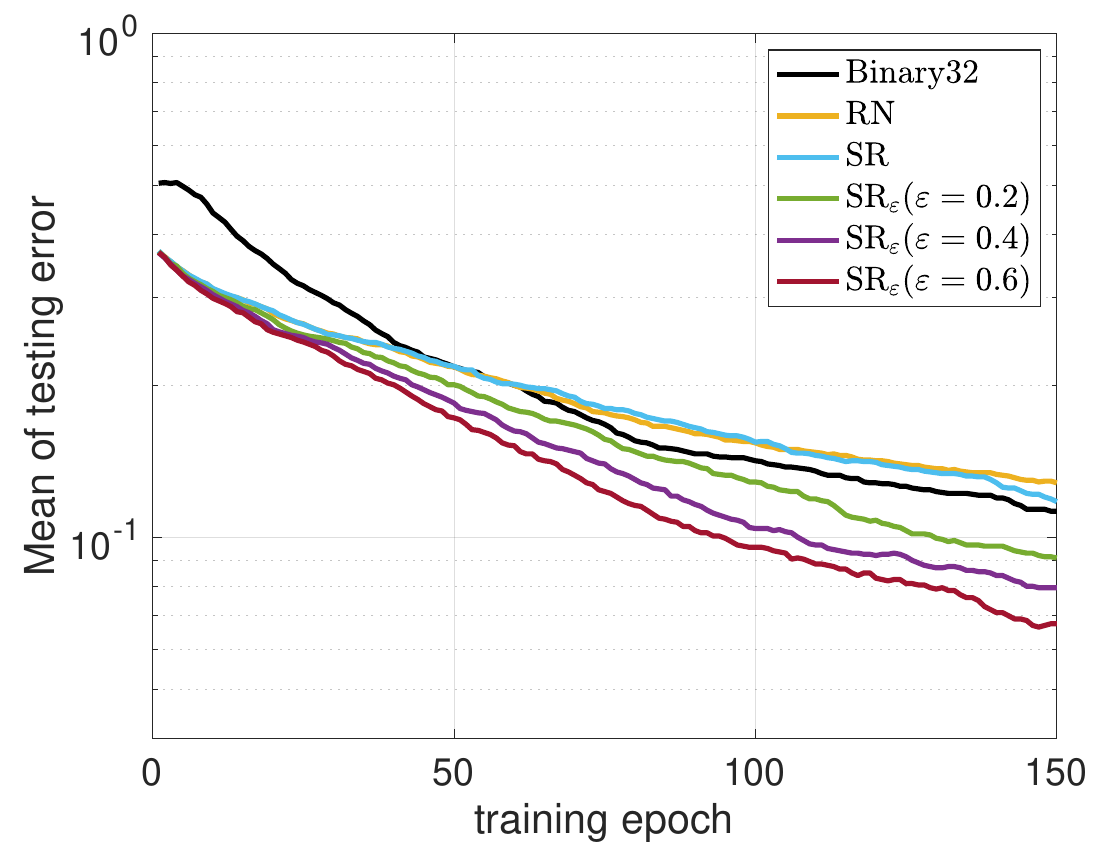}}\quad
\subfloat[Q15.6]{\label{fig:regf1}\includegraphics[width=0.35\textwidth]{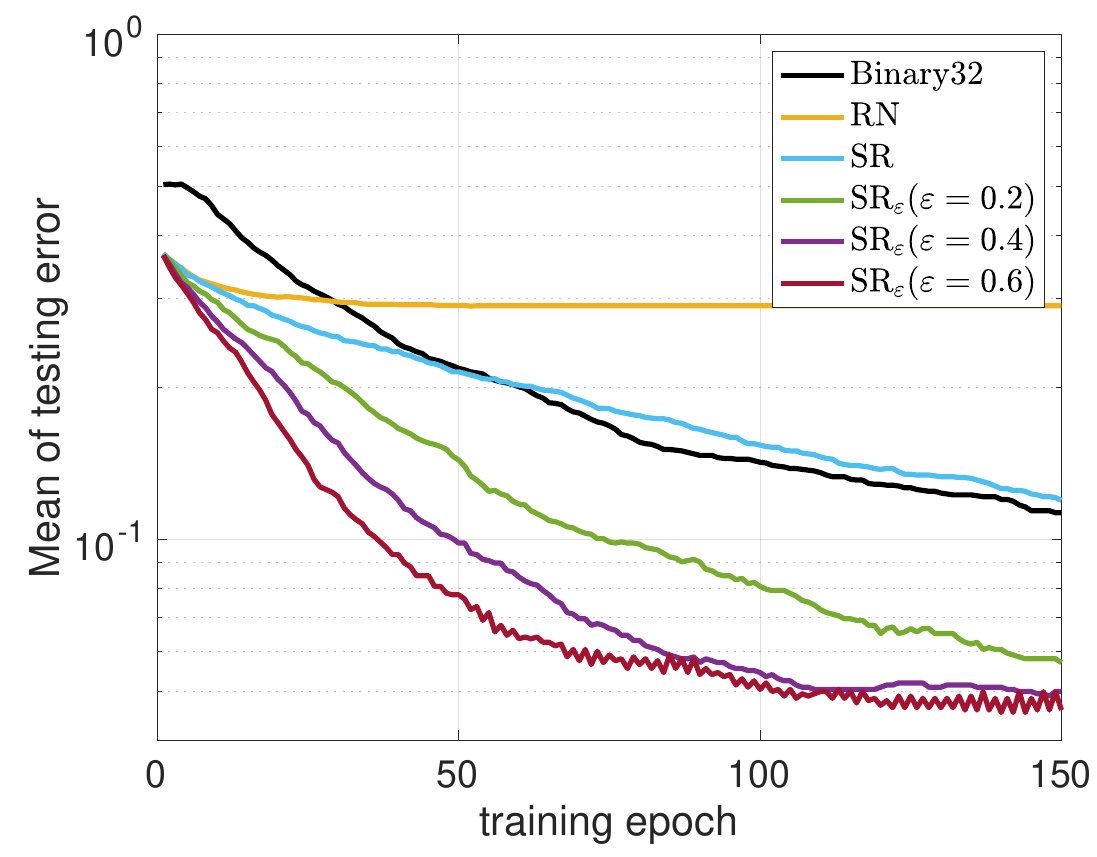}}
\caption{Mean of objective values over 10 simulations of BLR with stepsize $t=0.1$ ($0.1015625$ in Q$15.8$) between $\rn$, $\csr$ and $\srh$ with $\eps=0.2$, $\eps=0.4$ and $\eps=0.6$ with working precision Q$15.8$ and the precision for evaluating $\bsigma_2$, i.e., Q$15.8$ (a) and Q$15.6$ (b).}\label{fig:logisre}
\end{figure}

We study the convergence of GD with different rounding precisions. For each simulation study, we demonstrate two rounding precisions, i.e., the working precision and the precision for evaluating $\bsigma_2$. \cref{fig:logisre} shows the comparison of the BLR's testing errors, obtained by the various rounding methods. In \cref{fig:regf2}, we employ enough digits (Q$15.8$) so that GD has no stagnation utilizing $\rn$. It can be observed that $\csr$ and $\rn$ yield very similar results, while $\srh$ leads to faster convergence of GD compared to $\rn$ and $\csr$ with the same rounding precision. Additionally, increasing the value of $\eps$ results in a faster convergence of GD. Keeping the same working precision and lowering the precision in evaluating $\bsigma_2$, we observe the stagnation of GD with $\rn$ and that $\csr$ provides a very similar result to the one shown in \cref{fig:regf2}; for more details see \cref{fig:regf1}. In the same figure, $\srh$ utilizes the large rounding errors and leads to a significantly faster convergence rate of GD than the one shown in \cref{fig:regf2}. Again, a faster convergence of GD can be obtained by increasing the value of $\eps$. However, comparing \cref{fig:regf2,fig:regf1}, it can be seen that when the rounding precision and $\eps$ are both large, oscillations may occur as GD approaches the global optimum. For an example, see the result of $\srh$ with $\eps=0.6$ in \cref{fig:regf1}.

\begin{figure}[htp!]
\centering
\subfloat[$\rn$]{\label{fig:t1}\includegraphics[width=0.33\textwidth]{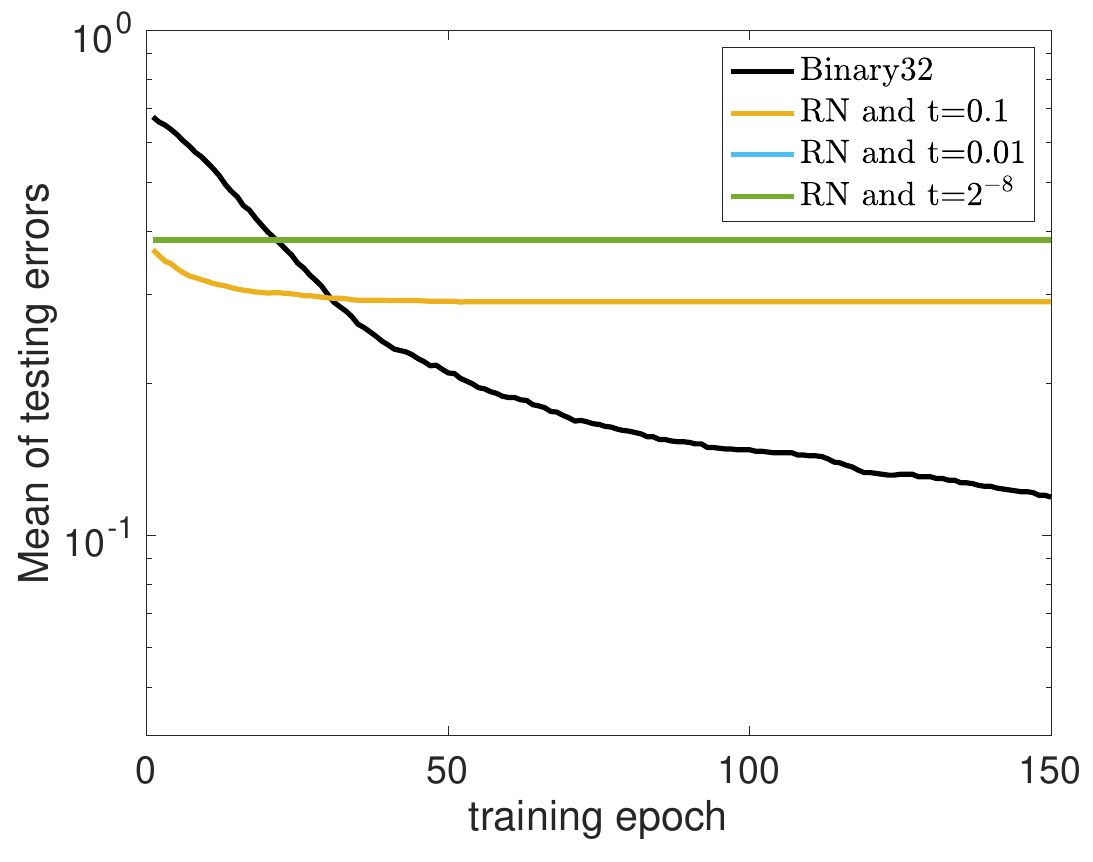}}
\subfloat[$\csr$]{\label{fig:t2}\includegraphics[width=0.33\textwidth]{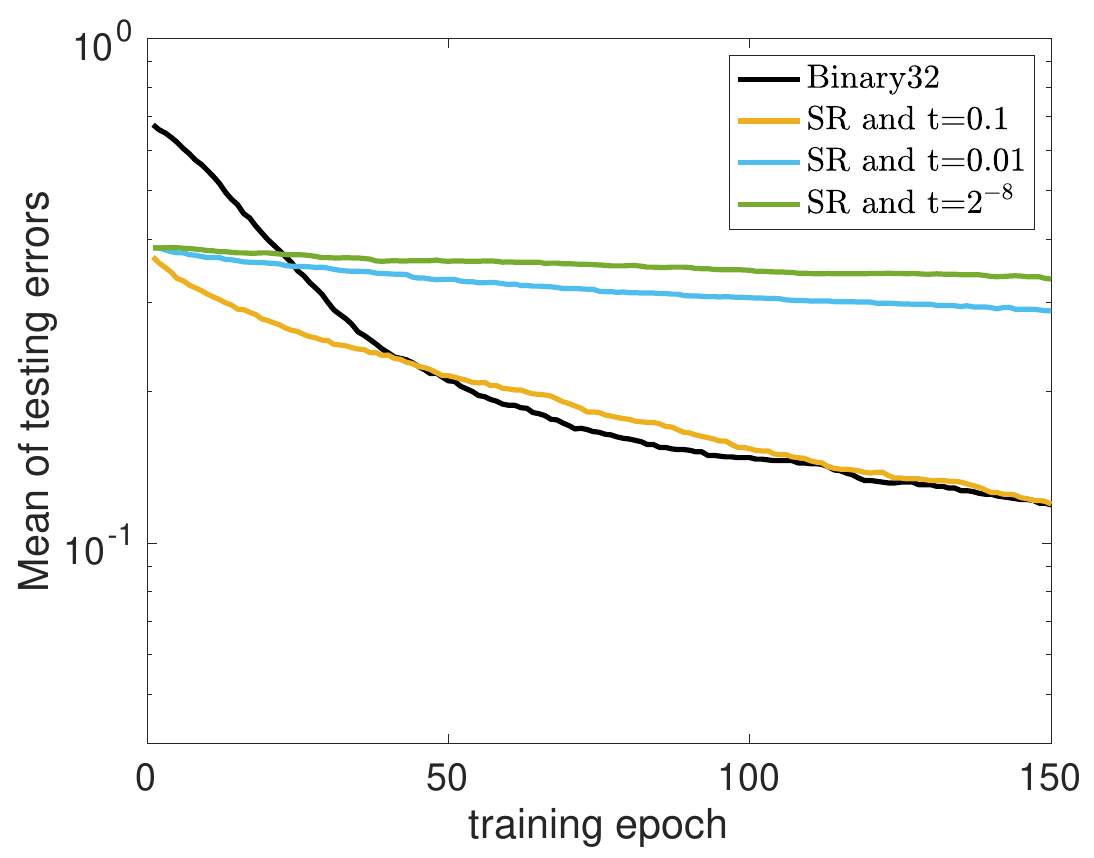}}
\subfloat[$\srh$]{\label{fig:t3}\includegraphics[width=0.33\textwidth]{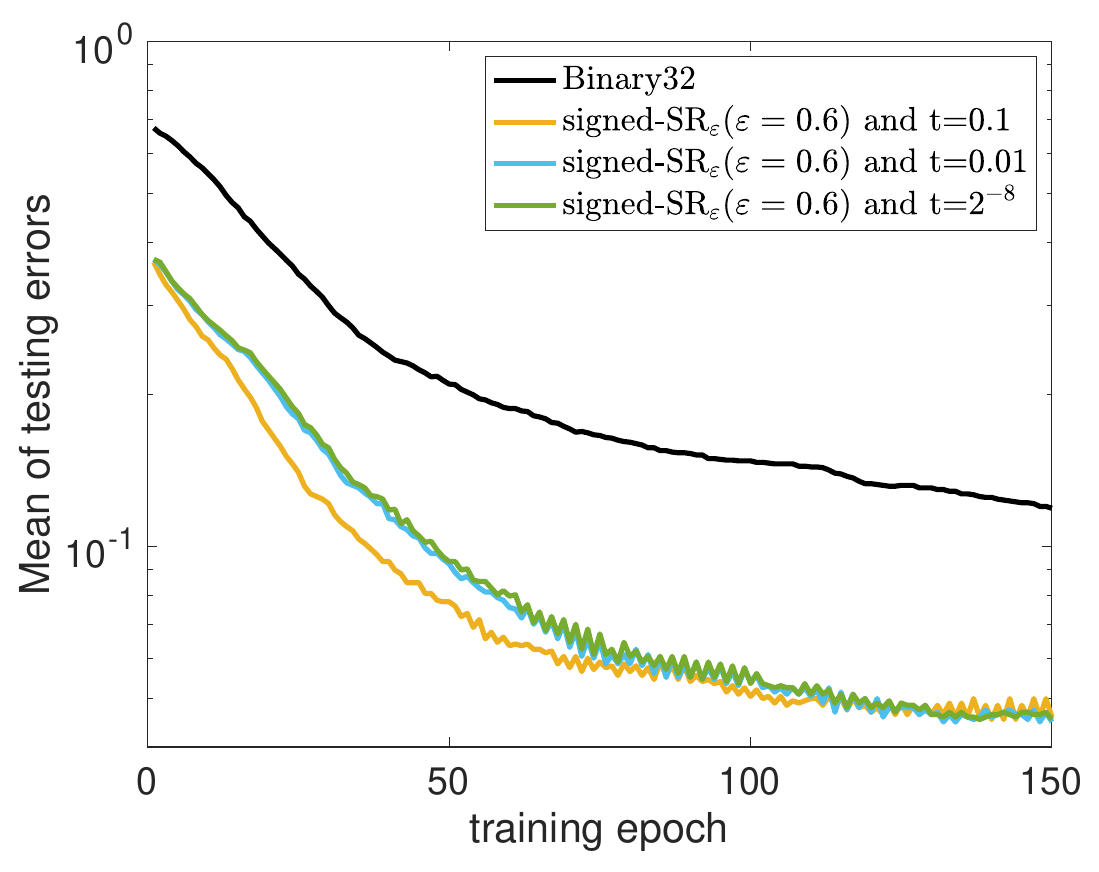}}
\caption{Mean of objective values over 10 simulations with working precision Q$15.8$, precision for evaluating $\bsigma_2$ Q$15.6$, and different stepsize $0.1$ ($0.1015625$ in Q$15.8$), $0.01$ ($0.01171875$ in Q$15.8$) and the smallest number that can be represented in Q$15.8$, i.e., $2^{-8}\approx0.004$, for $\rn$ (a), $\csr$ (b), and $\srh$ (c).}
\label{fig:sensitivity}
\end{figure}

Next, we study the influence of the stepsize $t$ in \cref{eq:gd_fp} on the convergence of GD with low-precision computation for different rounding methods. We employ the same number format as in \cref{fig:regf1} and we vary the stepsize of GD. In particular, we consider $t=0.1$, $0.01$ and the smallest number that can be represented in Q$15.8$, i.e., $2^{-8}\approx0.004$ for each rounding method such as $\rn$, $\csr$, and $\srh$ with $\eps=0.6$. \cref{fig:sensitivity} shows the results obtained using the different values of $t$ and rounding schemes. From \cref{fig:t1}, it can be seen that decreasing the value of $t$ makes GD stagnate earlier with $\rn$. When $\csr$ is employed (cf.~\cref{fig:t2}), the convergence rate is significantly more sensitive to the value of $t$. When employing $\csr$, a small value of $t$ leads to a slower convergence rate. When $t$ is relatively small, e.g., $t=2^{-8}$, GD almost stagnates with $\csr$; see \cref{fig:t2}. However, it can be observed that the convergence rate obtained using $\srh$ is hardly affected by the stepsize when $t\le 0.1$. In general, both $\rn$ and $\csr$ are sensitive to $t$, while $\srh$ is less sensitive to $t$ due to the parameter $\eps$ in \cref{eq:srh_p}. When $t$ is small, the probability of rounding numbers towards 0 in $\csr$ (cf.~\cref{eq:csr}) is close to 1, while it is dominated by $1-\eps$ in $\srh$ (cf.~\cref{eq:srh_p}). Therefore, an almost constant rounding probability is achieved in $\srh$, which makes $\srh$ less sensitive to the value of $t$ when implementing GD in a low-precision number format.

\subsection{Four-layer fully connected NN}
Now we show that our theoretical conclusion for problems satisfying the PL condition is also applicable for the training of a four-layer fully connected NN. In particular, the convergence rate of GD with $\srh$ is faster than that with $\csr$, although the PL condition does not hold here. The latter property is showcased for both fixed-point and floating-point representation systems. The NN is trained to classify the 10 handwritten digits (from 0 to 9) in the MNIST database. Again, the pixel values are normalized to $[0,1]$. The NN is built with the ReLu activation function in the hidden layer and the softmax activation function in the output layer. The hidden layers contain 512, 256, and 128 units. In the backward propagation, the cross-entropy loss function is optimized using GD. The weights matrix is initialized using Xavier initialization \cite[]{glorot2010understanding} and the bias is initialized as a zero vector. 

\begin{figure}[th!]
\centering
\subfloat[Fixed-point arithmetic]{\label{fig:nna}\includegraphics[width=0.35\textwidth]{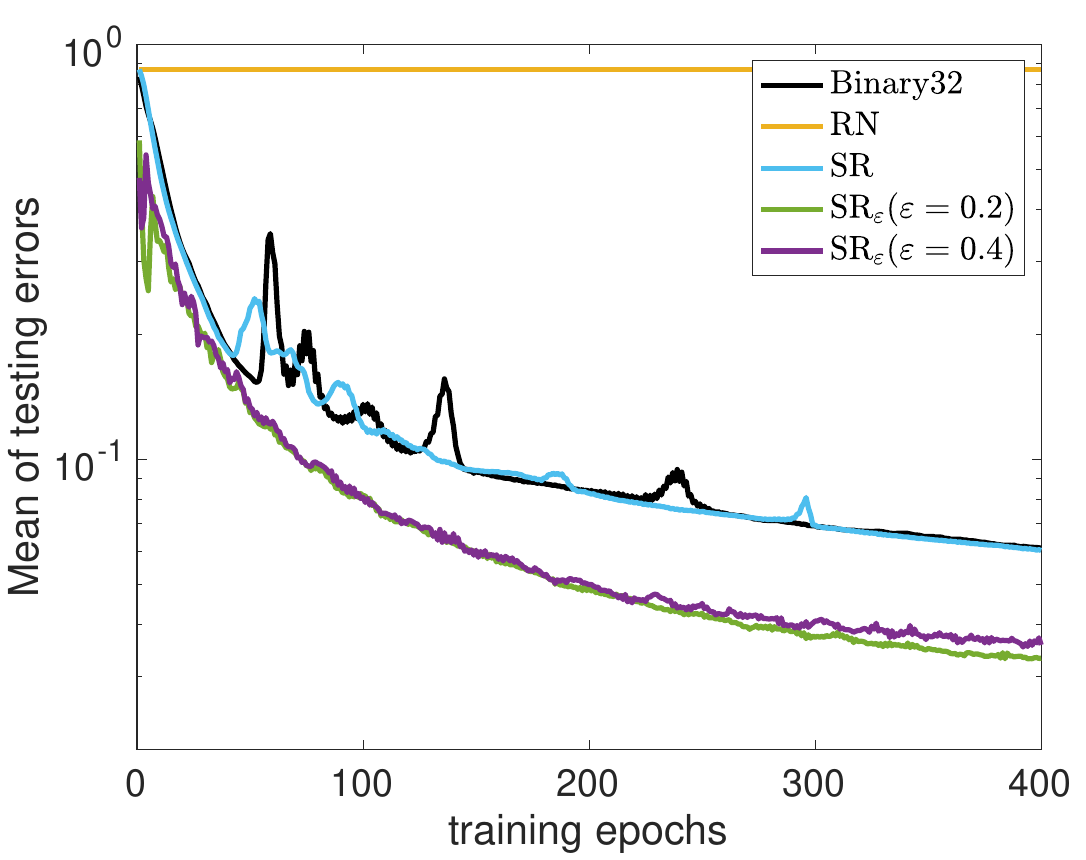}}\quad
\subfloat[Floating-point arithmetic]{\label{fig:nnb}\includegraphics[width=0.35\textwidth]{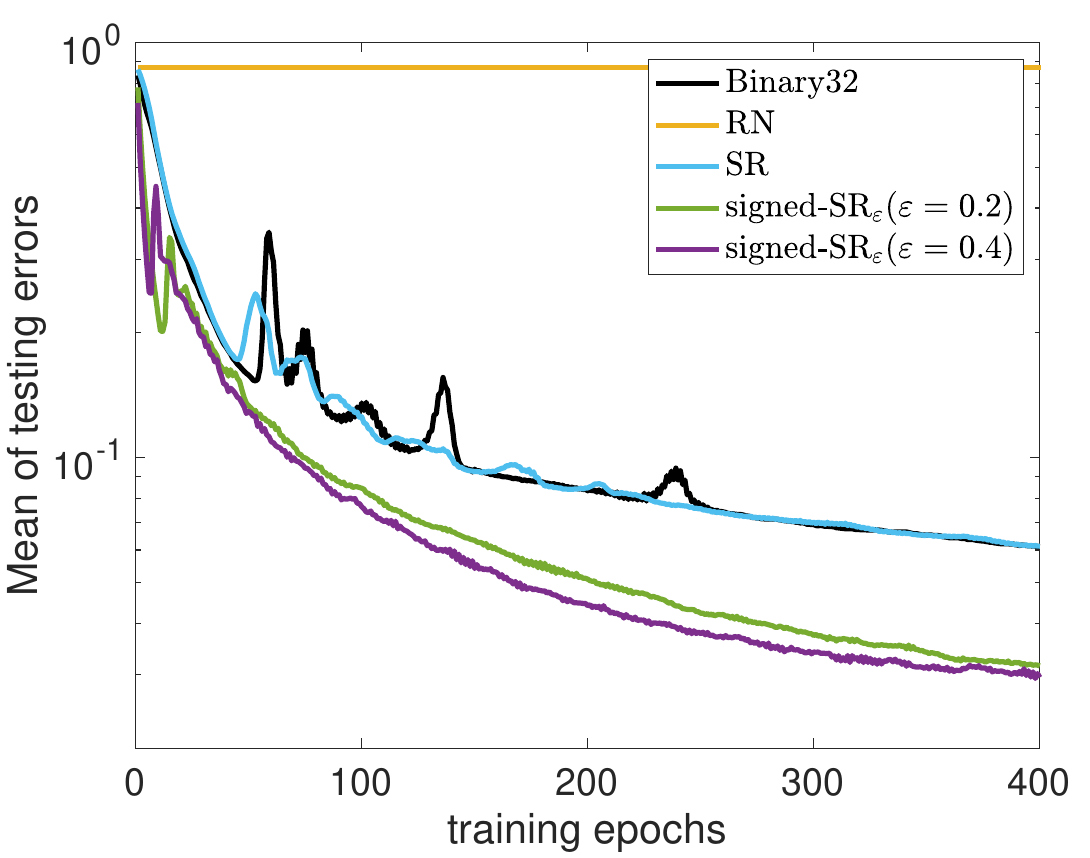}}
\caption{Mean of testing errors of the NN trained over 10 simulations using fixed-point number format Q$8.8$ (a) and using \textsf{Binary16} with 5 significant digits (b).}
\label{fig:nns}
\end{figure}
\cref{fig:nna} shows the means of testing errors of the NNs trained using \textsf{Binary32} for working precision and Q$8.8$ for evaluating $\bsigma_2$ with different rounding methods, such as $\rn$, $\csr$, and $\srh$. Again, $\rn$ causes the stagnation of GD due to the problem of vanishing gradient.
$\csr$ results in a very similar testing error to single-precision computation. The testing error is around $0.06$ with $400$ iterations of GD using $\csr$ and single-precision computation, while a similar accuracy can be achieved by $\srh$ with around 150 iterations. Comparing the results of $\srh$ with $\eps=0.2$ and $\eps=0.4$, slightly larger oscillations are gained with a larger value of $\eps$.
Overall, $\srh$ with $\eps=0.2$ leads to a slightly lower testing error than $\srh$ with $\eps=0.4$. Next, we repeat the simulations with a 16-bit floating-point number format with 5 significant digits using different rounding methods. From \cref{fig:nnb}, it can be seen that the results are very similar to that of fixed-point number formats.
Comparing \cref{fig:nna} and \cref{fig:nnb}, the outcomes obtained using $\srh$ with a floating-point number format demonstrate a slight improvement over those achieved with a fixed-point number format, although the significance of this improvement is negligible. Additionally, for a large number of layers, it may be beneficial to use $\csr$ for evaluating computations such as inner products as it typically reduces the magnitude of rounding errors, and it is unbiased.  The use of biased stochastic rounding such as $\srh$ is recommended  for evaluating the multiplication of the learning rate with the rounded gradient as it prevents the vanishing gradient problem. 
In the situation where the rounding errors are significant, for instance because of a large dimension $n$, one may need to tune $u$ or $t$ to restore the monotonicity of GD. 
In such a case, if the sign of the rounded gradient corresponds to that of the true gradient, then a smaller stepsize $t$ should be sufficient.
Otherwise, we recommend to use higher precision (i.e., smaller $u$). 

In general, when the convergence of GD is guaranteed, low-precision computation can result in a faster convergence rate compared to higher precision when using $\srh$; see \cref{fig:rosenbrockfix,fig:logisre,fig:nns}. With low-precision computation, the utilization of $\csr$ yields results of GD that are similar to those obtained by the exact computation, and the use of $\rn$ may result in stagnation of GD. Additionally, the performance of $\csr$ is sensitive to the stepsize of GD. When the gradients and the stepsize are too small or when the rounding precision is too large, this sensitivity may cause issues such as loss of gradients similar to $\rn$ or very slow convergence; see, e.g., \cref{fig:t2}. On the other hand, $\srh$ is less sensitive to the stepsize of GD such that changing $t$ can hardly affect the convergence rate of GD; see \cref{fig:t3}. However, when computing GD using $\srh$ with a large value of $\eps$, oscillation may happen. When comparing the effects of rounding bias in fixed-point and floating-point arithmetic, the implementation of GD using $\srh$ with low-precision floating-point computation behaves similar to a gradient descent method with adaptive stepsizes in each coordinate of the current iterate, whereas the implementation using $\srh$ with low-precision fixed-point computation performs similarly to a strategy that combines vanilla gradient descent and stochastic sign gradient descent methods. For most of the numerical studies, $\srh$ results in faster convergence of GD in floating-point arithmetic than in fixed-point arithmetic;  nevertheless, when training a fully connected NN the convergence of GD with $\srh$ in fixed-point arithmetic is fairly similar to that in floating-point arithmetic. 

\section{Conclusion}\label{sec:conclu}
We have studied the convergence of the gradient descent method (GD) in limited-precision fixed-point arithmetic using unbiased stochastic rounding ($\csr$) and $\eps$-biased stochastic rounding ($\srh$), for problems satisfying the Polyak--{\L}ojasiewicz condition. In the analysis, we have proven that a linear convergence rate of GD can be obtained when utilizing either $\csr$ or $\srh$, and the convergence bound for $\srh$ is shown to be stricter than the one corresponding to $\csr$. By delving into the factors that impact the convergence of GD in low-precision computation, we have provided valuable insights and knowledge to guide informed decision-making when selecting an appropriate rounding method for training neural networks (NNs).

In our numerical studies, we have demonstrated that $\srh$ provides faster convergence than $\csr$ and $\rn$, on average, when the same number format is used. In particular, $\srh$ may considerably accelerate the convergence of GD with low-precision computation for both the training of logistic regression models and NNs, and this is potentially valuable for machine learning. 
The comparison between floating-point and fixed-point computations illustrates that the implementation of GD using $\srh$ with floating-point arithmetic performs similarly to a gradient descent method with adaptive stepsizes in each coordinate of the current iterate, while the implementation using $\srh$ with fixed-point arithmetic behaves like a combination of vanilla gradient descent and stochastic sign gradient descent methods.
For the training of a fully connected NN, the convergence of GD with $\srh$ in fixed-point arithmetic is very similar to that in floating-point arithmetic, and for both number formats, the performance of GD using $\srh$ is superior to that of $\csr$. 

\section*{Acknowledgement}
We thank the reviewers for their constructive comments and the editors for the handling of this paper. We thank Mark Peletier for discussions on the upper bound on the PL constant.

\section*{Declarations}
\textbf{Funding} This research was funded by the EU ECSEL Joint Undertaking under grant agreement no.~826452 (project Arrowhead Tools).

\vspace{2mm}
\noindent\textbf{Data availability} The MNIST dataset used in this study is available in the open repository \url{https://git-disl.github.io/GTDLBench/datasets/mnist_datasets/}. 

\vspace{2mm}
\noindent\textbf{Conflict of interest}
The authors declare that they have no conflict of interest.

\noindent

\begin{appendices}

\section{Proofs of lemmas}\label{ap:appendixA}
In this appendix, we prove some helpful lemmas used in our convergence analysis.

\noindent
\begin{proof}[\bf Proof of \cref{lem:bound_h}] 
Under the condition of Case I \cref{eq:condu_nostagnation}, we consider the upper bound and lower bound of $r_i^{(k)}$ in two scenarios, i.e., $t\, |\nabla f(\wt\bx^{(k)})_i+\sigma_{1,i}^{(k)}|= u$ (S1) and $t\, |\nabla f(\wt\bx^{(k)})_i+\sigma_{1,i}^{(k)}|> u$ (S2).

S1: The condition $t\, |\nabla f(\wt\bx^{(k)})_i+\sigma_{1,i}^{(k)}|= u$ implies that $\sigma_{2,i}^{(k)}=0$, which results in $r_i^{(k)}=\frac{\sigma^{(k)}_{1,i}}{\nabla f(\wt\bx^{(k)})_i}$. Condition \cref{eq:condu_nostagnation} implies that $\nabla f(\wt\bx^{(k)})_i\neq-\sigma_{1,i}^{(k)}$, i.e., $r_i^{(k)}\neq-1$, and \cref{eq:bound_gradient} gives $-1\le r_i^{(k)}\le1$. Therefore,  we have that $-1< r_i^{(k)}\le1$ under conditions \cref{eq:bound_gradient} and \cref{eq:condu_nostagnation}.

S2: When $t\, |\nabla f(\wt\bx^{(k)})_i+\sigma_{1,i}^{(k)}|>u$, observing that $\sigma_{2,i}^{(k)}< u$, we have $r_i^{(k)}=\frac{t\,\sigma^{(k)}_{1,i}+\sigma^{(k)}_{2,i}}{t\ \nabla f(\wt\bx^{(k)})_i}<\frac{t\,\sigma^{(k)}_{1,i}+t\, |\,\nabla f(\wt\bx^{(k)})_i+\,\sigma^{(k)}_{1,i}|}{t\ \nabla f(\wt\bx^{(k)})_i}\le3$. Concerning the lower bound we consider separately the cases where $\sigma_{1,i}^{(k)}$ and $ \,\nabla f(\wt\bx^{(k)})_i$ have the same or opposite signs. When $\sign(\sigma_{1, i})=\sign(\nabla f(\wt\bx^{(k)})_i)$ we have that 
\[
 r_i^{(k)}\ge \frac{| t\,\sigma_{1, i}|-|\sigma_{2,i}|}{| t\ \nabla f(\wt\bx^{(k)})_i|}>\frac{| t\,\sigma_{1, i}|-| t\ \nabla f(\wt\bx^{(k)})_i+t\,\sigma_{1, i}|}{| t\ \nabla f(\wt {\bx}^{(k)})_i|}= -1.
\]
When $\sign(\sigma_{1, i})=-\sign(\nabla f(\wt\bx^{(k)})_i)$, \cref{eq:bound_gradient} and the property $t\, |\nabla f(\wt\bx^{(k)})_i+\sigma_{1,i}^{(k)}|>u$ yield $|t\ \nabla f(\wt\bx^{(k)})_i|-| t\,\sigma^{(k)}_{1,i} |> u$, which implies $\frac{-| t\,\sigma^{(k)}_{1,i}|}{| t\ \nabla f(\wt\bx^{(k)})_i|}>-1+\frac{u}{| t\ \nabla f(\wt\bx^{(k)})_i|}$. Therefore, we have \[
r_i^{(k)}\ge\frac{-| t\,\sigma^{(k)}_{1,i}|-| \sigma^{(k)}_{2,i}|}{| t\ \nabla f(\wt\bx^{(k)})_i|}>-1+\frac{u-|\sigma^{(k)}_{2,i} |}{| t\ \nabla f(\wt\bx^{(k)})_i|}\ge-1.
\]
Combining these two scenarios, we have $-1<r_i^{(k)}<3$ concluding the proof.
\end{proof}

\begin{proof}[\bf Proof of \cref{lem:csr_sigma1}]
 Clearly, we have 
 \begin{align*}
 \expt\,[\,\nabla f(\wt\bx^{(k)})^T( \nabla f(\wt\bx^{(k)})+\bsigma_{1}^{(k)})\,]
 &=\expt\,[\, \|\nabla f(\wt\bx^{(k)})\|^2\,]+\sum_{i=1}^{n}\expt\,[\,\sigma_{1,i}^{(k)}\,\nabla f(\wt\bx^{(k)})_i \,].
 \end{align*}
 The second condition in \cref{assum:universal assumption} implies  $\expt\,[\,\sigma_{1,i}^{(k)}\,\big| \ \nabla f(\wt\bx^{(k)})_i]\lesssim c\,u^2$ and based on the law of total expectation, we obtain
 \begin{align}
 \big| \expt\,[\,\sigma_{1,i}^{(k)}\,\nabla f(\wt\bx^{(k)})_i \,]\big|&=\Bigg|\!\!\sum_{\nabla f(\wt\bx^{(k)})_i=q} \!\!\!\!\expt\,[\,\sigma_{1,i}^{(k)}\,\nabla f(\wt\bx^{(k)})_i \ \big| \ \nabla f(\wt\bx^{(k)})_i=q \,]\,P(\nabla f(\wt\bx^{(k)})_i=q)\Bigg|\nonumber\\
  &\lesssim \sum_{\nabla f(\wt\bx^{(k)})_i=q} c\,u^2\, | q| \,P(\nabla f(\wt\bx^{(k)})_i=q)\nonumber\\ &=c\,u^2\, \expt\,[\, |\nabla f(\wt\bx^{(k)})_i |\,],\nonumber
 \end{align}
 which in turn yields that $\sum_{i=1}^{n}\expt\,[\,\sigma_{1,i}^{(k)}\,\nabla f(\wt\bx^{(k)})_i \,] $ is bounded from above by 
\[c\,u^2\, \expt\,[\, \|\nabla f(\wt\bx^{(k)}) \|_1\,]\le \sqrt{n}\,c\,u^2\, \expt\,[\, \|\nabla f(\wt\bx^{(k)}) \|\,] \le L\chi \sqrt{n} \,c\,u^2.\]
\end{proof}

\begin{proof}[\bf Proof of \cref{lem:srh_dk}]
When $\srh$ is applied, on the basis of \cref{eq:strounding} and \cref{eq:srh}, we have 
\begin{align}\label{eq:inner_sigma2grad}
 \expt\,[\,\sigma_{2,i}^{(k)}\ \big|& \ t\,(\nabla f(\wt\bx^{(k)})_i+\sigma_{1,i}^{(k)}) \,] \nonumber\\&=\,\begin{cases}\eps\,u\,\sign(\nabla f(\wt\bx^{(k)})_i+\sigma_{1,i}^{(k)}),\quad &0<p_{\eps}<1,\\[1mm]
 \lfloor t\,(\nabla f(\wt\bx^{(k)})_i+\sigma_{1,i}^{(k)})\rfloor-t\,(\nabla f(\wt\bx^{(k)})_i+\sigma_{1,i}^{(k)})+u,& p_{\eps}=0,\\[1mm]
 \lfloor t\,(\nabla f(\wt\bx^{(k)})_i+\sigma_{1,i}^{(k)})\rfloor-t\,(\nabla f(\wt\bx^{(k)})_i+\sigma_{1,i}^{(k)}),& p_{\eps}=1.  
 \end{cases}
\end{align}
Note that we omit the dependency on $t\,(\nabla f(\wt\bx^{(k)})_i+\sigma_{1,i}^{(k)})$ in $p_\eps$ for brevity.
Furthermore, $p_\eps(t\,(\nabla f(\wt\bx^{(k)})_i+\sigma_{1,i}^{(k)}))=0$ implies $\nabla f(\wt\bx^{(k)})_i+\sigma_{1,i}^{(k)}>0$ and $p_\eps(t\,(\nabla f(\wt\bx^{(k)})_i+\sigma_{1,i}^{(k)}))=1$ implies $\nabla f(\wt\bx^{(k)})_i+\sigma_{1,i}^{(k)}<0$; on the basis of \cref{eq:bound_gradient}, we have $\sign(\nabla f(\wt\bx^{(k)})_i+\sigma_{1,i}^{(k)})=\sign(\nabla f(\wt\bx^{(k)})_i)$. Therefore, we have
\begin{align}\label{eq:sigma2nablaf_i}
  \expt\,[&\,\sigma_{2,i}^{(k)}\nabla f(\wt\bx^{(k)})_i\ \big|\ t\,(\nabla f(\wt\bx^{(k)})_i+\sigma_{1,i}^{(k)}) \,] \\&=\,\begin{cases}\eps\,u\, |\nabla f(\wt\bx^{(k)})_i|,\quad &0<p_{\eps}<1,\\[1mm]
 |\lfloor t\,(\nabla f(\wt\bx^{(k)})_i+\sigma_{1,i}^{(k)})\rfloor-t\,(\nabla f(\wt\bx^{(k)})_i+\sigma_{1,i}^{(k)})+u|\,\, |\nabla f(\wt\bx^{(k)})_i|,& p_{\eps}=0,\\[1mm]
 |\lfloor t\,(\nabla f(\wt\bx^{(k)})_i+\sigma_{1,i}^{(k)})\rfloor-t\,(\nabla f(\wt\bx^{(k)})_i+\sigma_{1,i}^{(k)})|\,\, |\nabla f(\wt\bx^{(k)})_i|,& p_{\eps}=1
 ,\nonumber
 \end{cases}
\end{align}
which are all positive random variables for all the three cases, concluding the claim.
\end{proof}

\begin{proof}[\bf Proof of \cref{lem:lambda_bound}.]
We denote by $\cals$ the finite set of values that the $i$th component of $\nabla f(\wt\bx^{(k)})+\bsigma_1^{(k)}$ can assume. The set $\cals_1$ is the subset of $\cals$ such that for all $\nabla f(\wt\bx^{(k)})_i+\sigma_{1,i}^{(k)}\in \cals_1$ satisfying $0<p_\eps(t\,(\nabla f(\wt\bx^{(k)})_i+\sigma_i^{(k)}))<1$. Analogously we define $\cals_2$ and $\cals_3$ associated with the conditions $p_\eps=0$ and $p_\eps=1$, respectively. According to the law of total expectation, we have
\begin{small}
\begin{align*}
\expt\,[\,&\sigma_{2,i}^{(k)}\,\nabla f(\wt\bx^{(k)})_i\,]=\!\!\sum_{j=1,2,3}\!\!\expt\,[\,\sigma_{2,i}^{(k)}\,\nabla f(\wt\bx^{(k)})_i\ \big|\ f(\wt\bx^{(k)})_i+\sigma_i^{(k)}\!\in \mathcal{S}_j\,]\,P(f(\wt\bx^{(k)})_i+\sigma_i^{(k)} \in \mathcal{S}_j).
\end{align*}
\end{small}\noindent
Note that in view of \cref{eq:epsilon}, we have
\[
p_\varepsilon =1\quad \Rightarrow\quad|\lfloor t\,(\nabla f(\wt\bx^{(k)})_i+\sigma_{1,i}^{(k)})\rfloor-t\,(\nabla f(\wt\bx^{(k)})_i+\sigma_{1,i}^{(k)})|<\eps\,u
\]
and
\[
p_\varepsilon=0\quad \Rightarrow\quad
|\lfloor t\,(\nabla f(\wt\bx^{(k)})_i+\sigma_{1,i}^{(k)})\rfloor-t\,(\nabla f(\wt\bx^{(k)})_i+\sigma_{1,i}^{(k)})+u|<\eps\,u.
\] 
Together with \cref{eq:inner_sigma2grad}, we obtain $ \min_{i=1,\dots,n}\expt\,[\,\sigma_{2,i}^{(k)} \nabla f(\wt\bx^{(k)})_i\,]\!\le \eps\,u\min_{i=1,\dots,n}\expt\,[\, |\,\nabla f(\wt\bx^{(k)})_i\, |\,].$ Jensen's inequality gives that $\sqrt{n}\,\min_{i=1,\dots,n}\expt\,[\, |\,\nabla f(\wt\bx^{(k)})_i\, |\,]\!\le \expt\,[\, \|\nabla f(\wt\bx^{(k)})\|\,]$; together with condition~\cref{eq:uc1}, we have 
\begin{align*}
 \rho_k=\,&\min_{i=1,\dots,n}\frac{n\,\expt\,[\,\sigma_{2,i}^{(k)} \nabla f(\wt\bx^{(k)})_i\,]}{\expt\,[\, \|\nabla f(\wt\bx^{(k)})\|^2 \,]}\le \frac{n\,\eps\,u\,\min_{i=1,\dots,n}\expt\,[\, |\,\nabla f(\wt\bx^{(k)})_i\, |\,]}{\expt\,[\, \|\nabla f(\wt\bx^{(k)})\|^2 \,]}\nonumber\\
 \le \,&\frac{\sqrt{n}\,u\,\eps\, \expt\,[\, \|\nabla f(\wt\bx^{(k)})\|\,]}{\expt\,[\, \|\nabla f(\wt\bx^{(k)})\|^2 \,]}\underset{\cref{eq:uc1}}{\le}\frac{2\,t\,\eps\,\expt\,[\, \|\nabla f(\wt\bx^{(k)})\|\,]^2}{\expt\,[\, \|\nabla f(\wt\bx^{(k)})\|^2 \,]},
\end{align*} 
which leads to $\rho_k\le 2\,t\,\eps$, according to Jensen's inequality. Conditions \cref{eq:bound_gradient} and \cref{eq:condu_nostagnation} imply that $|\nabla f(\wt\bx^{(k)})_i|>0$. Together with \cref{eq:sigma2nablaf_i}, we have $0<\rho_k\le 2\,t\,\eps$.
\end{proof}

\section{Proofs of propositions}\label{appendixB}

\begin{proof}[\bf Proof of \cref{prop:convergencerate_c3_srh}]
When $\sigma^{(k)}_{1,i}$ and $\sigma^{(k)}_{2,i}$ are obtained by $\csr$ and $\srh$, respectively, substituting \cref{eq:expectedfxk1} and \cref{eq:c2srh_ith} into \cref{eq:f_c3} yields 
\begin{align}
 \expt\,[\,f(\wt\bx^{(k+1)})\,]&\lesssim\expt\,[\,f(\wt\bx^{(k)})\,]-\tfrac12 \sum_{i\in\calc_1}(\,t\,\expt\,[\,\nabla f(\wt\bx^{(k)})_i^2 \,] +\expt\,[\,\sigma_{2,i}^{k}\nabla f(\wt\bx^{(k)})_i\,]\,)\nonumber\\&\qquad-\tfrac12 \,\theta_k\sum_{i\in\calc_2}(\,t\,\expt\,[\,\nabla f(\wt\bx^{(k)})_i^2 \,]+\beta_k\,u\ \expt\,[\, |\nabla f(\wt\bx^{(k)})_i|\,]\,)\nonumber\\
 &=\expt\,[\,f(\wt\bx^{(k)})\,]-\tfrac12 \,t\,\expt\,[\, \|\,\nabla f(\wt\bx^{(k)})\, \|^2 \,] \nonumber\\
 &\qquad -\sum_{i\in\calc_1}(\expt\,[\,\sigma_{2,i}^{k}\nabla f(\wt\bx^{(k)})_i\,]-\tfrac12 \,\theta_k\,\beta_k\,u\ \expt\,[\, |\nabla f(\wt\bx^{(k)})_i|\,])\nonumber\\
 &\qquad-\sum_{i\in\calc_2}\,\tfrac12 \,(\theta_k-1)\,t\,\expt\,[\,\nabla f(\wt\bx^{(k)})_i^2 \,]-\tfrac12 \,\theta_k\,\beta_k\,u\ \expt\,[\, \|\nabla f(\wt\bx^{(k)})\|\,]\,.\nonumber
\end{align}
Furthermore, the properties $\beta_k\,u\le | \expt\,[\,\sigma_{2,i}^{(k)}\,]|$ and $\theta_k\le 2$ imply that $\tfrac12 \,\beta_k\,u\ \expt\,[\, |\nabla f(\wt\bx^{(k)})_i|\,]\le \expt\,[\,\sigma_{2,i}^{k}\nabla f(\wt\bx^{(k)})_i\,]$. On the basis of \cref{eq:tau_2} and \cref{eq:alpha_k}, we have
\begin{align}
 \expt\,[\,f(\wt\bx^{(k+1)})\,]\lesssim\,
 \,&\expt\,[\,f(\wt\bx^{(k)})\,]-\tfrac12 \,(t+\theta_k\,\tau_2+\alpha_k)\,\expt\,[\, \|\,\nabla f(\wt\bx^{(k)})\, \|^2 \,].\nonumber
\end{align}
Expanding the recursion $k$ times, we obtain \cref{eq:conv-expsrhplstageIII}, with $\tau_2< 2\,t\,\eps$ and $|\alpha_j|<t\, |\theta_j-1|$ for all $j$. Since $\mu\,(t+\alpha_j+\theta_j\,\tau_2)\le \mu\, t\,(1+| \theta_j-1|+2\,\theta_j\,\varepsilon)\le \mu\, t\,(1+1+4\varepsilon)\le 2\,\mu \,t\,(1+2\varepsilon)\le L\,t\,(1+2\varepsilon)\le \frac{1+2\varepsilon}{4}<1.$ Further the property $1-2 \,\mu\,t>0$ and $-t<\alpha_j<t$ (cf.~\cref{eq:alpha_k}) indicate that $1-\mu\,(t+\alpha_j)>1-2 \,\mu\,t>0$, combining with the property $\theta_j\,\tau_2>0$ we have $0\le \mu\,(t+\alpha_j+\theta_j\,\tau_2)<1$.
\end{proof}

\begin{proof}[\bf Proof of \cref{prop:dfixed}]
The unbiased property of $\csr$ yields \cref{eq:dficsr}. According to \cref{eq:srh_p} and \cref{eq:dficsr}, for $\nabla f(\bx^{(k)})_i<0$, we have $-u\,p_0(t\ \nabla f(\bx^{(k)})_i)=t\ \nabla f(\bx^{(k)})_i$, which indicates 
\begin{align}
 -u\,p_{\eps}(t\ \nabla f(\bx^{(k)})_i)&=-u\,(\,p_0(t\ \nabla f(\bx^{(k)})_i)+\sign(\nabla f(\bx^{(k)})_i)\,\eps\,)\nonumber\\
 &=t\ \nabla f(\bx^{(k)})_i+\eps\,u\,\sign(\nabla f(\bx^{(k)})_i).\nonumber
\end{align}
The same result can be obtained for $\nabla f(\bx^{(k)})_i>0$ by using the property $t\ \nabla f(\bx^{(k)})_i= u\,(1-p_0(t\ \nabla f(\wt\bx^{(k)})_i)\,)$. 
\end{proof}

\begin{proof}[\bf Proof of \cref{prop:dfloat}]
Derived from the model of floating-point number representation \cite[Sec.~2.3]{xia2022float}, we have \begin{align}
 \mathrm{fl}(x_i^{(k)}-t\ \nabla f(\bx^{(k)})_i)=\begin{cases}
 (x_i^{(k)}-t\ \nabla f(\bx^{(k)})_i)\,(1+\delta_i^{(k)}),&\text{if $\sign(x_i^{(k)}\,\nabla f(\bx^{(k)})_i)=-1,$}\\
 (x_i^{(k)}-t\ \nabla f(\bx^{(k)})_i)\,(1-\delta_i^{(k)}),&\text{if $\sign(x_i^{(k)}\,\nabla f(\bx^{(k)})_i)=\ph{-}1,$}\nonumber
 \end{cases}
\end{align}
where $0<\delta_i^{(k)}<2 \,u$. In light of \cref{eq:dk}, we obtain
\begin{align}
\wt{d}_i^{(k)}=\,\,&x_i^{(k)}-\mathrm{fl}(x_i^{(k)}-t\ \nabla f(\bx^{(k)})_i)\nonumber\\=\,&
 \begin{cases}
 -\delta_i^{(k)}\,x_i^{(k)}+t\ \nabla f(\bx^{(k)})_i\,(1+\delta_i^{(k)}),&\text{if $\sign(x_i^{(k)}\,\nabla f(\bx^{(k)})_i)=-1,$}\\
 \delta_i^{(k)}\,x_i^{(k)}+t\ \nabla f(\bx^{(k)})_i\,(1-\delta_i^{(k)}),&\text{if $\sign(x_i^{(k)}\,\nabla f(\bx^{(k)})_i)=\ph{-}1.$}\nonumber
 \end{cases}
\end{align}
On the basis of the unbiased property of $\csr$ and taking the expectation of $\wt{d}_i^{(k)}$, we obtain $\expt\,[\,\wt{d}_i^{(k)}\ \big |\ x_i^{(k)}-t\ \nabla f(\bx^{(k)})_i\,]=\,t\ \nabla f(\bx^{(k)})_i$,
which gives \cref{eq:dflcsr}. On the basis of \cref{eq:dflcsr} and \cref{eq:ssrh}, by making $\sign(v)=\sign(\nabla f(\bx^{(k)})_i)$ in signed-$\srh$ (cf.~\cref{eq:ssrh}), when $x_i^{(k)}>0$ and $\nabla f(\wt\bx^{(k)})_i<0$, we have 
\begin{align}
 \expt&\,[\,\wt{d}_i^{(k)}\ \big|\ x_i^{(k)}-t\ \nabla f(\bx^{(k)})_i\,]\nonumber\\
 &=(-\delta_i^{(k)}\,x_i^{(k)}+t\ \nabla f(\bx^{(k)})_i\,(1+\delta_i^{(k)}))\,(1\!-p_0(x_i^{(k)}-t\ \nabla f(\bx^{(k)})_i)\!-\sign(\nabla f(\bx^{(k)})_i)\,\eps)\nonumber\\
 &=t\ \nabla f(\bx^{(k)})_i+\eps\,(-\delta_i^{(k)}\,x_i^{(k)}+t\ \nabla f(\bx^{(k)})_i\,(1+\delta_i^{(k)}))\nonumber\\
 &=(1+\eps+\eps\,\delta_i^{(k)})\,t\ \nabla f(\bx^{(k)})_i+\sign(\nabla f(\bx^{(k)})_i)\, | x_i^{(k)}|\,\eps\,\delta_i^{(k)}.\nonumber
\end{align}
Analogously, when $x_i^{(k)}<0$ and $\nabla f(\bx^{(k)})_i>0$, we have \begin{align}
 \expt&\,[\,\wt{d}_i^{(k)}\ \big|\ x_i^{(k)}-t\ \nabla f(\bx^{(k)})_i\,]\nonumber\\
 &=(-\delta_i^{(k)}\,x_i^{(k)}+t\ \nabla f(\bx^{(k)})_i\,(1+\delta_i^{(k)}))\,(\,p_0(x_i^{(k)}-t\ \nabla f(\bx^{(k)})_i)+\sign(\nabla f(\bx^{(k)})_i)\,\eps\,)\nonumber\\
 &=(1+\eps+\eps\,\delta_i^{(k)})\,t\ \nabla f(\bx^{(k)})_i+\sign(\nabla f(\bx^{(k)})_i)\, | x_i^{(k)}|\,\eps\,\delta_i^{(k)}.\nonumber
\end{align} 
Therefore, \cref{eq:dflsrh} holds for the case when $\sign(x_i^{(k)}\,\nabla f(\bx^{(k)})_i)=-1$. The proof of \cref{eq:dflsrh} for the case when $\sign(x_i^{(k)}\,\nabla f(\bx^{(k)})_i)=1$ can be obtained by an analogous argument as the one for $\sign(x_i^{(k)}\,\nabla f(\bx^{(k)})_i)=-1$. 
\end{proof}

\end{appendices}

\footnotesize\bibliography{references}

\end{document}